\definecolor{myblue}{rgb}{0.0,0.18,0.65}
\theoremstyle{plain}
\newtheorem{theorem}{Theorem}[section]
\newtheorem{proposition}[theorem]{Proposition}
\newtheorem{lemma}[theorem]{Lemma}
\theoremstyle{definition}
\theoremstyle{remark}
\def\eqref#1{equation~\ref{#1}}
\def\1{\bm{1}}
\def\rvf{{\mathbf{f}}}
\def\rvg{{\mathbf{g}}}
\def\rvx{{\mathbf{x}}}
\def\rvy{{\mathbf{y}}}
\def\rmK{{\mathbf{K}}}
\def\rmU{{\mathbf{U}}}
\def\rmV{{\mathbf{V}}}
\def\rmX{{\mathbf{X}}}
\def\vx{{\bm{x}}}
\DeclareMathAlphabet{\mathsfit}{\encodingdefault}{\sfdefault}{m}{sl}
\SetMathAlphabet{\mathsfit}{bold}{\encodingdefault}{\sfdefault}{bx}{n}
\def\gA{{\mathcal{A}}}
\def\gX{{\mathcal{X}}}
\def\sN{{\mathbb{N}}}
\def\sP{{\mathbb{P}}}
\def\sR{{\mathbb{R}}}
\newcommand{\E}{\mathbb{E}}
\renewcommand{\mathbf}{\boldsymbol}
\newcommand{\msf}{\mathsf}
\def\Ddots{\mathinner{\mkern1mu\raise\p@
\vbox{\kern7\p@\hbox{.}}\mkern2mu
\raise4\p@\hbox{.}\mkern2mu\raise7\p@\hbox{.}\mkern1mu}}
\newcommand{\yj}[1]{#1}
\title{On the Joint Interaction of Models, Data, and Features}
\author{Yiding Jiang \\
Carnegie Mellon University\\
\texttt{yidingji@cs.cmu.edu} \\
\And
Christina Baek \\
Carnegie Mellon University\\
\texttt{kbaek@cs.cmu.edu} \\
\And
J. Zico Kolter \\
Carnegie Mellon University\\
\texttt{zkolter@cs.cmu.edu} \\
}
\begin{document}

\maketitle

\begin{abstract}
Learning features from data is one of the defining characteristics of deep learning, but our theoretical understanding of the role features play in deep learning is still rudimentary. \yj{To address this gap,} we introduce a new tool, the \emph{interaction tensor}, for empirically \yj{analyzing} the interaction between data and model through features. With the interaction tensor, we make \yj{several key} observations about how features are distributed in data and how models with different random seeds learn different features. \yj{Based on these observations, we propose a conceptual framework for feature learning. Under this framework, the expected accuracy for a single hypothesis and agreement for a pair of hypotheses can both be derived in closed-form.} We demonstrate that the proposed framework can explain empirically observed phenomena, including the recently discovered Generalization Disagreement Equality (GDE) that allows for estimating the generalization error with only unlabeled data. \yj{Further, our theory also provides explicit construction of natural data distributions that break the GDE.} Thus, we believe this work provides valuable new insight into our understanding of feature learning.
\end{abstract}

\section{Introduction}
It is commonly said that deep learning performs feature learning, whereby the models extract useful patterns from the data and use the patterns to make predictions. Most successful applications of deep learning today involve first training the models on a large amount of data and then fine-tuning the pre-trained model on downstream tasks~\citep{chen2020simple, brown2020language, radford2021learning}. Their success suggests the models are learning useful and transferable knowledge from the data that allows them to solve similar tasks more efficiently. Experimentally, many different works~\citep{nguyen2016synthesizing, zeiler2014visualizing, bau2017network, olah2017feature, olah2018the,li2015convergent} have studied various aspects of the features learned by deep neural networks. These works help the community gain a better intuitive understanding of the mechanisms underpinning deep learning as well as improve the interpretability of deep models. However, to the best of our knowledge, the theoretical understanding of the role features play in deep learning is still under-explored. For example, one prevailing framework for understanding deep learning is the neural tangent kernel (NTK)~\citep{jacot2018neural}. It is well-known that NTK is insufficient for understanding feature learning since the framework effectively analyzes deep learning as performing kernel regression with features defined by the gradient of the network \emph{at initialization}, which is independent of any observed data.

While it may be intuitive to think of defining features as quantifying the information in data that models use to make predictions, the community has yet to reach a consensus on the exact definition of features in deep learning beyond toy models. Nonetheless, it is undeniable that the models have learned \emph{something} from the data. In fact, the same models trained with different random seeds would learn different information that leads to different predictions~\citep{lakshmi17ensembles}. This phenomenon has important downstream consequences for ensembling randomly initialized networks including better generalization~\citep{allen2020towards}, calibration~\citep{lakshmi17ensembles}, and the Generalization Disagreement Equality (GDE)~\citep{jiang2022assessing} where the expected test accuracy is equal to the expected agreement in deep ensembles.
We postulate that a good definition of features should be fine-grained enough to discern the difference in the knowledge of different models. To this end, we attempt to define such a construct that allows us to analyze the similarity and differences of information learned by different models, while also remaining amenable to quantitative and theoretical analysis. Through this definition, we can gain deeper insights into the behavior of models and the underlying mechanisms of feature learning. Notably, we show that GDE arises immediately as a consequence of how neural networks learn appropriately defined features. This phenomenon was previously explained by \emph{assuming} calibration of the underlying ensemble, which is often a strong assumption to make~\citep{kirsch2022note}.

We first begin with an empirical investigation of feature learning, using \yj{a natural definition of features on real data (Figure~\ref{fig:feature_vis}) that allow us to easily compare information learned by different models and} a construction we propose called the \emph{interaction tensor}. \yj{We define features of an image to be the projection onto the principal components of different models' last-layer activations. The interaction} tensor then jointly models the features learned by multiple models and across multiple data points.  Looking at this tensor constructed on collections of models, we find that the model of data presented in \citet{allen2020towards} is not \yj{conceptually} reflective of the actual observed phenomena.  Specifically, we find that the distribution of features in commonly used image datasets is heavy-tailed, and most importantly that data points with fewer features present are classified correctly \emph{more often} than data points with many features present.  This is in direct contrast to the multi-view model of data~\citep{allen2020towards} \yj{where the opposite is true}, suggesting that an alternative model is needed.

Based on these observations, we propose an alternative (still simplified) model of feature learning, which better captures the above phenomenon.  Specifically, we posit a framework where features come in two types: dominant (more frequent) and rare (less frequent), which captures the observed heavy-tailed nature of features.  We also assume that \emph{data points} either contain a small number of dominant features or a large number of rare features and that models learn features according to their frequency in the data set; this captures the observed phenomenon where data points with fewer features receive higher-confidence predictions.  Under this model, we can analytically derive expressions for the accuracy and agreement of resulting classifiers.  
Despite the simplification, we show that our framework naturally captures the phenomenon of GDE without assuming calibration. Instead, we show that GDE arises immediately as a consequence of the distributional properties of features in natural data. \yj{Finally, we demonstrate that the framework can make accurate predictions about the effects of merging classes and changing data distribution on GDE and calibration, leading to the construction of natural data distributions that break GDE.}

Thus, we believe overall that our framework of feature learning shows promise as an additional useful and valuable conceptual tool in understanding how deep learning works. Note that we do \emph{not} attempt to derive how our model can arise mechanistically via optimization, but we believe that this can be considered a strength of our approach. Similar to natural sciences and econometrics, the empirical phenomena of deep learning can be understood ``on their own terms'' at many different layers of abstraction, and our model provides one such formalism that comports with observed behavior at the level of \emph{observed} feature learning phenomena.

\begin{figure}[t]
     \centering
    \begin{subfigure}[b]{0.495\textwidth}
         \centering
        \includegraphics[width=\textwidth]{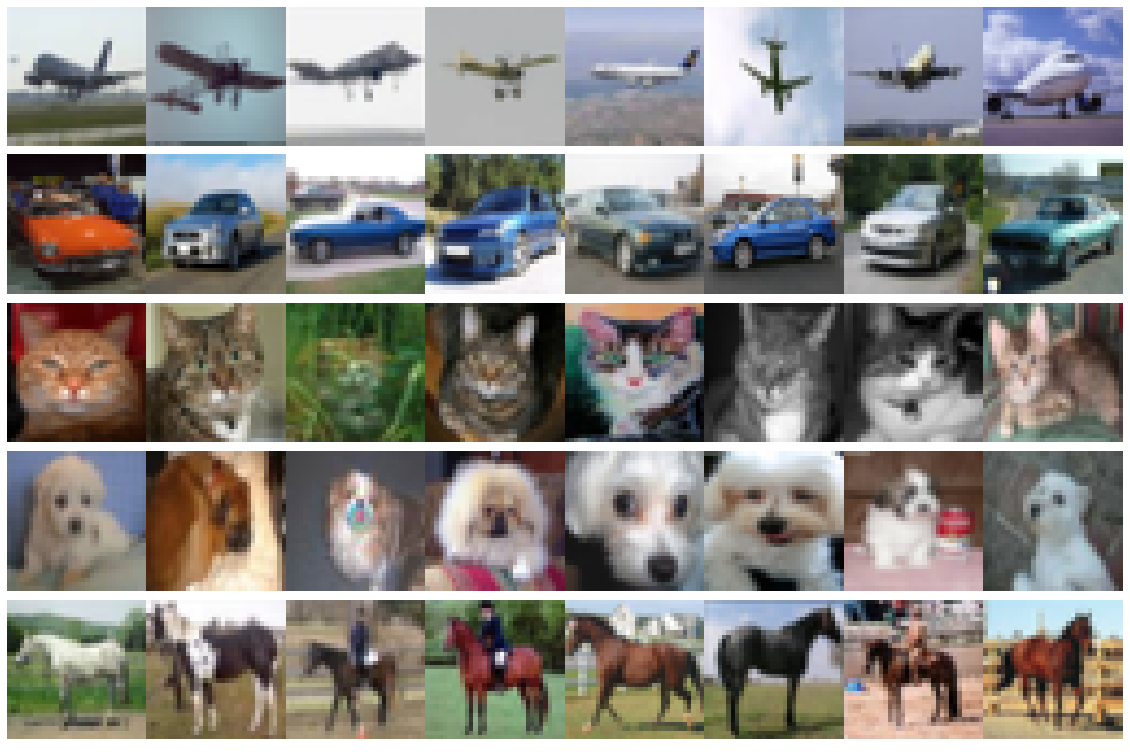}
    \end{subfigure}
    \begin{subfigure}[b]{0.495\textwidth}
         \centering
        \includegraphics[width=\textwidth]{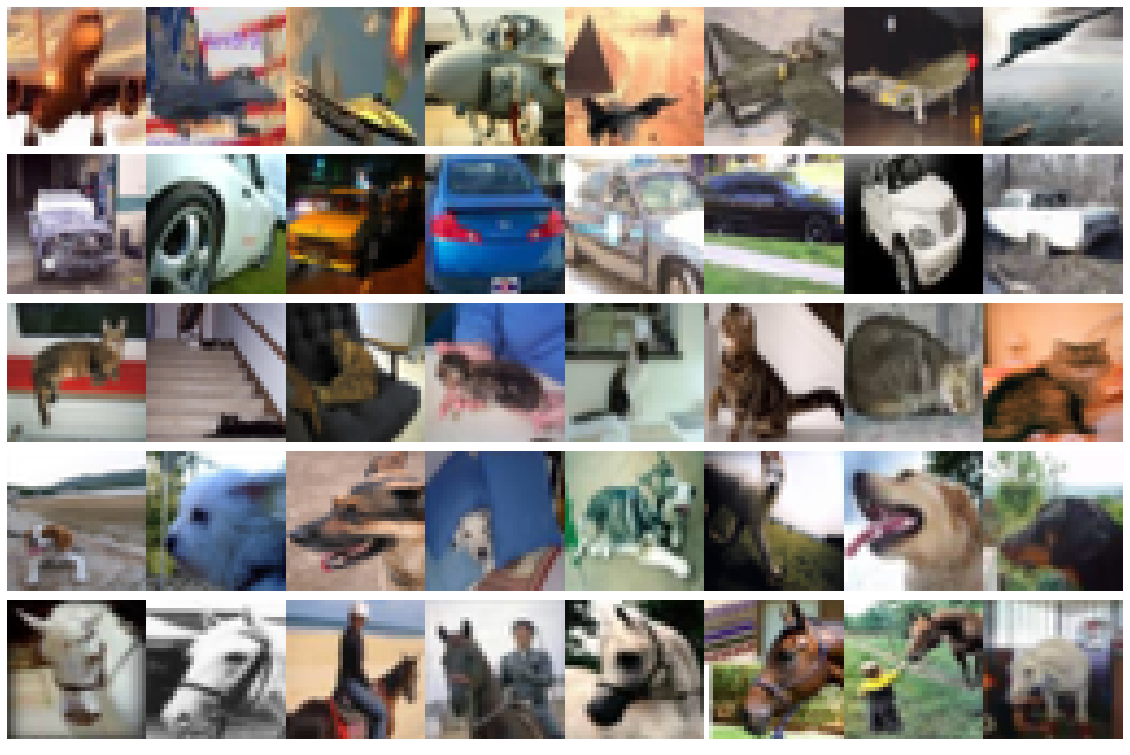}
    \end{subfigure}
    \caption{\yj{Visualization of images with \textit{the least features} (left) and \textit{the most features} (right) for classes of CIFAR 10 under our feature definition (defined in Section~\ref{sec:pca}). Each row corresponds to one class of CIFAR 10 (zoom in for better viewing quality). We can see that the images with the least features are semantically similar to each other whereas the images with the most features are much more diverse and contain unusual instances of the class or objects in rare viewpoints. More examples for all classes can be found in Figure~\ref{fig:prototype} of the appendix.}
    }
    \label{fig:feature_vis}
    \vspace{-5mm}
\end{figure}

\section{Related Works}

\paragraph{Feature learning.} 
Representation learning~\citep{bengio2013representation} is the practice of discovering useful features from raw data directly instead of using hand-crafted features. 
Deep learning is the de facto approach for learning features from a large amount of data~\citep{chen2020simple, brown2020language, radford2021learning}. Yet, one of the most popular frameworks for understanding deep learning, neural tangent kernel (NTK)~\citep{jacot2018neural} cannot account for feature learning from data because it models deep learning as learning a linear classifier on top of random features defined by the gradient of a random neural network.

Recent works have started to incorporate feature learning into theoretical analysis~\citep{li2019towards, allen2020towards, yang2021tensor, karp2021local, wen2021toward, allen2022feature, ba2022high}. \yj{This paper is most immediately related to} \citet{allen2020towards} who propose the \emph{multi-view} data structure where there exist two types of data: \emph{multi-view} data which contain all the features of a class and \emph{single-view} data which contain only one feature. 
They showed that a single two-layer CNN will only learn one feature for each class. 
In this work, we investigate whether this structure \yj{of features} holds in practice by treating features as first-class citizens in both empirical investigation and theoretical analysis.
Our experimental results reveal a more nuanced perspective on the structure of data and features. 
Based on these observations, we propose an abstract theoretical model that better reflects how features, data, and models behave in reality.
We analyze the generalization property of the model and also its \textit{agreement property}~\citep{nakkiran2020distributional}. We show that this feature learning model provides an alternative condition under which the curious GDE phenomena observed in~\citet{jiang2022assessing} can arise. 

\paragraph{Ensemble and Generalization Disagreement Equality.} 
Deep ensembles~\citep{lakshmi17ensembles}, obtained by training the models on the same dataset with different random seeds, have been shown to outperform more classical approaches such as bagging~\citep{breiman1996bagging, bryll2003attribute, munson2009feature} or Bayesian approaches~\citep{welling2011bayesian, neal2012bayesian, blundell2015weight, gal2016dropout}.
\citet{fort2019deep} showed that deep ensemble explores diverse parts of the function space.
This cannot be emulated by other methods~\citep{ovadia2019can}. \citet{allen2020towards} argue that the success of deep ensemble can be attributed to \textit{feature learning}, rather than feature selection~\citep{oliveira2003feature, tsymbal2005diversity, cai2018feature}. 
Another important property of deep ensemble is that it is well-calibrated~\citep{murphy1967verification}
--- it is neither over-confident nor under-confident about its prediction. 
Calibration~\citep{murphy1967verification} is a desirable property in many high-stake decision-making scenarios. 
\citet{jiang2022assessing} showed that if the deep ensemble is well-calibrated, one can estimate the accuracy using only unlabeled data by computing how often two independent models agree with each other (GDE).
However, calibration is a strong assumption. 
It is not clear why deep ensembles should be calibrated in the first place. 
\yj{To address this gap,} we show that GDE can arise in our feature learning framework \textit{without} making any assumptions about calibration.

\paragraph{Understanding representations.} One line of work tries to understand deep learning with a more empirical approach. Many try to understand the features by visualizing what aspects of the input data they correspond to~\citep{nguyen2016synthesizing, zeiler2014visualizing, bau2017network, olah2017feature, olah2018the}. 
Another line of work attempts to compare representations of different models~\citep{li2015convergent,raghu2017svcca, morcos2018insights, kornblith2019similarity}. 
We demonstrate that simple PCA can reduce the redundancies in high-dimensional representation and find a parsimonious set of features that the model relies on to make predictions.
Our feature clustering algorithm may be seen as the generalization of pair-wise matching proposed by \citet{li2015convergent}. 
This is also related to the approach of \citet{harkonen2020ganspace} which applied a similar approach to the latent space of a generative adversarial network~\citep{goodfellow2014generative}. 

\section{Connecting Models and Data with Features}
In this section, we describe the procedure for constructing the interaction tensor $\mathbf{\Omega} \in \{0, 1\}^{M \times N \times T}$. The first axis corresponds to $M$ models, the second axis corresponds to $N$ data, and the last axis corresponds to $T$ features. If the $n^\text{th}$ data point contains the $t^\text{th}$ feature and the $m^\text{th}$ model has learned the $T^\text{th}$ feature, then $\mathbf{\Omega}_{mnt}$ would be $1$. This tensor describes how features are distributed with respect to models and data. First, we identify features within each model of ensembles. Then, we cluster the features and construct the interaction tensor with the identified feature clusters.

\paragraph{Notations.} Let $x$ denote a point in $\gX$, the input space, and $y \in [K]$ denote the label, where $[K]$ is the set of labels, $[1, 2, \dots, K]$. Let $\mathscr{D}$ be the data distribution over $\gX \times [K]$. 
We use $(x,y)$ to denote samples from the random variable following $\mathscr{D}$.
Let $f: \gX \rightarrow [K]$ be a parameterized function,
$f(x) = (\psi \circ \varphi)(x)$. $\varphi: \gX \rightarrow \sR^d$ is a \textit{representation operator} that maps the input $x$ to a $d$-dimensional representation, and $\psi: \sR^d \rightarrow [K]$ is a \textit{classification operator} that maps the representation to a class. 

\subsection{Principal components of activation as features}
\label{sec:pca}

Many works~\citep{zeiler2014visualizing, li2015convergent, bau2017network} study the representations of deep neural networks by treating individual neurons as features or the most elementary unit of the representation, but these representations are often high-dimensional vectors, which means neurons contain redundant information. Furthermore, a single feature may be distributed across multiple neurons. Instead, ideal features should be parsimonious and can capture the dependencies between different coordinates of the representations. These criteria can be fulfilled by dimensionality reduction methods. We choose \textit{principal component analysis} (PCA) which captures the linear dependencies between different coordinates of the representations and can be efficiently computed with stochastic algorithms (similar to \citet{harkonen2020ganspace}). \yj{While this procedure can be applied to any layer, we use the last layer representation to avoid non-linear interaction through superposition~\citep{elhage2022superposition}.}
Concretely, given a neural network and a set of data points $\rmX = \left[x^{(1)}, x^{(2)}, \dots, x^{(N)}\right]^\top$, we use $\Phi \in \sR^{N\times d}$ be the matrix that contains all of $\varphi\left(x^{(i)}\right)$ as its rows. The singular value decomposition (SVD) yields $\Phi = \rmU \mathbf{\Sigma} \rmV^\top$ where the columns of $\rmV \in \sR^{d\times d}$ contain the principal components of $\Phi$. We use the top $k$ principal component $\rmV_{:k}$ and project the representations to $\sR^k$, $\Phi^{\text{proj}} \triangleq \Phi\rmV_{:k} = \left[\rmV_{:k}^\top\varphi\left(x^{(1)}\right), \rmV_{:k}^\top\varphi\left(x^{(2)}\right), \dots, \rmV_{:k}^\top\varphi\left(x^{(N)}\right)\right]^\top$. For notation simplicity, we will use $\upsilon(x)$ to denote $\rmV_{:k}^\top\varphi\left(x\right)$ and $\upsilon_{i,a}(x)$ to denote the $a^\text{th}$ entry of of the $i^\text{th}$ model's $\upsilon(x)$. Intuitively, we can interpret the principal components as a feature, or more concretely, orthogonal subspaces that the model uses to classify any given data points in $\rmX$.
\subsection{Constructing the Interaction Tensor}

\paragraph{Clustering features of different models.} Given $M$ models, 
$\{f_1, f_2, \dots, f_M\}$, 
we can compute the projected representation for each network of the $M$ models, $\left\{\Phi^{\text{proj}}_1, \Phi^{\text{proj}}_2, \dots, \Phi^{\text{proj}}_M\right\}$. For a single model $f_i$ and its $a^\text{th}$ feature, we can compute its mean and variance:
\begin{align}
    \mu_{i, a} \triangleq \E_{(x, y) \sim \mathscr{D}}\left[ \upsilon_{i,a}(x) \right], \,\,
    \sigma_{i,a}^2 \triangleq \E_{(x, y) \sim \mathscr{D}}\left[ \left(\upsilon_{i,a}(x) -  \mu_{i, a}\right)^2\right].
\end{align}
For models $(f_i, f_j)$ and their respective $a^\text{th}$ and $b^\text{th}$ features, we can define their correlation to be:
\begin{align}
    \rho_{(i, j), (a,b)} \triangleq \frac{\E_{(x, y) \sim \mathscr{D}}\left[ \left(\upsilon_{i,a}(x) -  \mu_{i, a}\right)\left(\upsilon_{j,b}(x) -  \mu_{j, b}\right)\right]}{\sigma_{i,a} \,\, \sigma_{j,b}}.
\end{align}
This can be seen as performing the procedure of~\citet{li2015convergent} with PCA projected representations. We use $\rmK_{i, j} \in [-1, 1]^{k \times k}$ to denote the collection of all pair-wise correlation values between the features of $f_i$ and $f_j$, and $\mathbf{\Lambda} \in [-1, 1]^{M\times M \times k \times k}$ to denote the collection of all the correlation matrices between every pair of models.

\yj{
With $\mathbf{\Lambda}$, we can identify unique \textit{feature clusters} in the $M \cdot k$ features learned by all models. To account for the arbitrary direction of correlation in $\mathbf{\Lambda}$, we take the absolute value of the matrix and use a threshold, $\gamma_{\text{corr}} \in (0,1)$, to determine whether two features should be considered as the same feature. We use a greedy clustering algorithm (Algorithm ~\ref{alg:feature_clustering}) to match the features with one another, as the problem of $k$-partite matching\footnote{Different $k$ from the number of features.} is known to be NP-complete for $k > 2$~\citep{garey1979computers}. After running the clustering algorithm, each feature is assigned to one of $C$ clusters (where $C \leq M\cdot k$), and we treat every feature in a single cluster as the same feature. The greedy algorithm is effective and does not generate a fixed number of clusters, which is desirable in cases where some features have low correlations with other features and should be isolated as a unique cluster. More sophisticated algorithms, such as graph cut, could be used, but we find the greedy algorithm sufficient for our purposes. See Algorithm ~\ref{alg:feature_clustering} and  Appendix~\ref{app:algo} for details on the algorithm and hyperparameters.}

\paragraph{Matching features to data points} Once the features of all the models are clustered, we can identify which features are present in each data point of $\rmX$. First, we normalize each individual $\upsilon_{i,a}$ by its $\ell_\infty$-norm, which ensures that all of the features are between $0$ and $1$. We will denote the row-normalized $\Phi_i^{\text{proj}}$ as $\widehat{\Phi}_i^{\text{proj}}$. We then pick another threshold, $\gamma_{\text{data}} \in (0, 1)$, that decides whether a feature is present in a given data point. Concretely, if the $k^\text{th}$ entry of the $j^\text{th}$ row in $\widehat{\Phi}_i^{\text{proj}}$ is larger than $\gamma_{\text{data}}$, we assign to the $j^\text{th}$ data point in $\rmX$ the feature cluster containing the $i^\text{th}$ model's $k^\text{th}$ feature\footnote{$j$ and $k$ here are for illustrative purpose and do not relate to other occurrences of $j$ and $k$ in the paper.}.
In Figure~\ref{fig:feature_vis}, we \yj{visualize} the data points with the most and least number of features.

\paragraph{Aggregating Information.} After thresholding, we have enough information to construct the interaction tensor. Each entry indicates whether the $t^\text{th}$ feature is present in \textit{both} the $m^\text{th}$ model and $n^\text{th}$ data point.
In the next section, we will inspect various aspects of the interaction tensor $\mathbf{\Omega}$ and other experimental artifacts to understand how the models learn features from the data.

\section{Experiments and Observations}
\label{sec:exp}

\paragraph{Experimental setup.} In our experiments, we use two collections of $k=20$ ResNet18~\citep{he2016deep} trained on the CIFAR-10 dataset~\citep{krizhevsky2009learning} following the experimental set up of \citet{jiang2022assessing}. The first collection of models is trained on random 10000 subsets of the whole training set (\texttt{10k}), and the second collection of models is trained on random 45000 subsets of the whole training set (\texttt{45k}). On average, the \texttt{10k} models achieve $67.9\%$ test accuracy and the \texttt{45k} models achieve $84.5\%$ test accuracy. \yj{In addition, we repeat the same process for SVHN dataset~\cite{netzer2011reading} using random 45000 subsets of the whole training set (\texttt{SVHN}).} The training details are outlined in Appendix~\ref{app:experiments} and \ref{app:algo}. We compute $\mathbf{\Lambda}$ and $\mathbf{\Omega}$ on the test set using the output of the penultimate layer (i.e., $\psi$ is the final linear layer). For clustering features, we choose $k=50$ for the number of principal components to use, $\gamma_{\text{corr}}$ to be the $90^\text{th}$ percentile of $\mathbf{\Lambda}$, and $\gamma_{\text{data}}$ to be the $90^\text{th}$ percentile of all entries in $\widehat{\Phi}_i^{\text{proj}}, \; i=1,\ldots,M$. After clustering, we obtain $T=680$ feature clusters for \texttt{45k}. We primarily show \texttt{45k} here and leave the \texttt{10k} and \texttt{SVHN} results to Appendix~\ref{app:fig}, which are similar to the \texttt{45k} results qualitatively.

\begin{figure*}[t]
     \centering
     \begin{subfigure}[t]{0.32\textwidth}
         \centering
         \includegraphics[width=\textwidth]{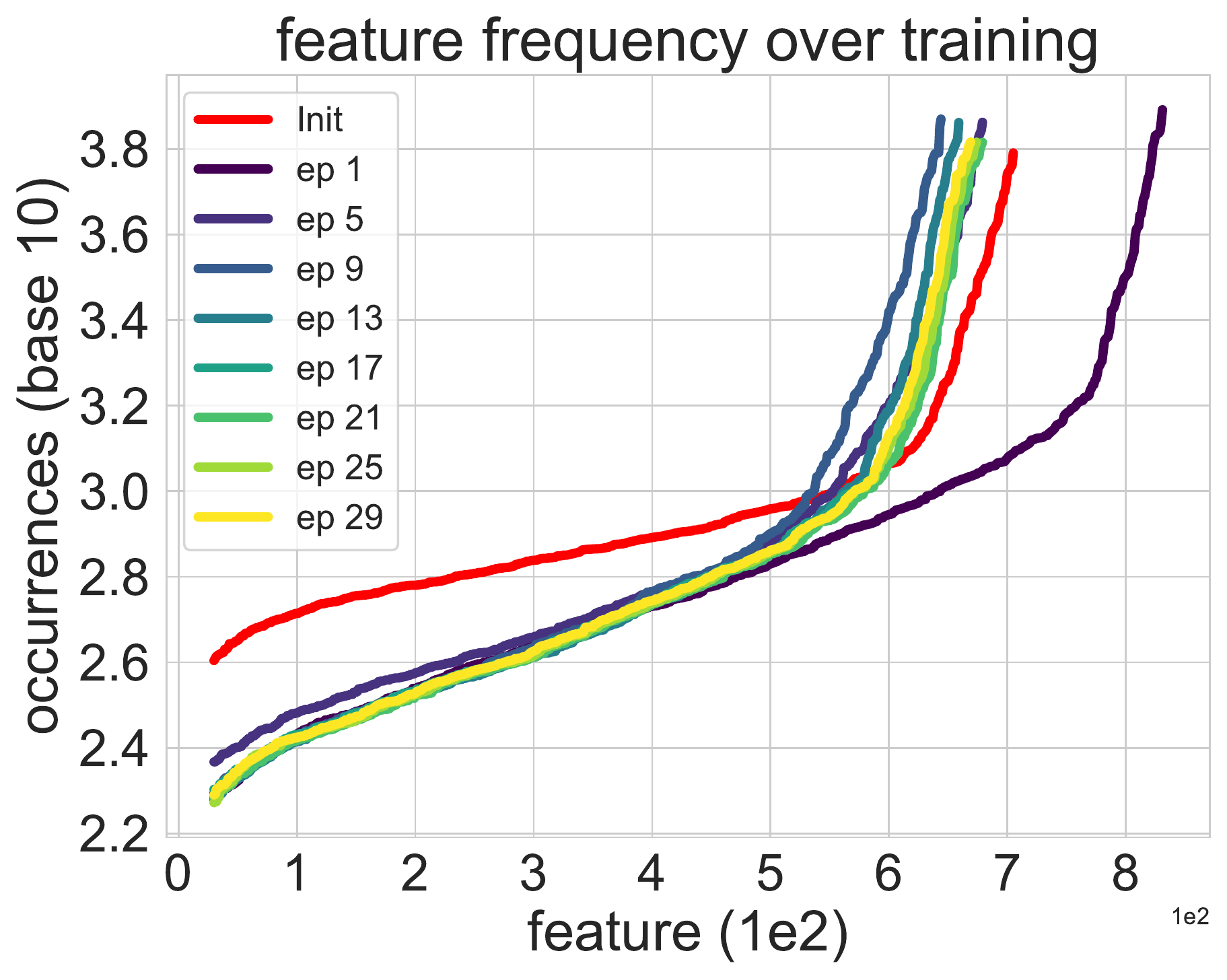}
         \caption{}
         \label{fig:feat_freq_over_training}
     \end{subfigure}
     \hfill
     \begin{subfigure}[t]{0.315\textwidth}
         \centering
         \includegraphics[width=\textwidth]{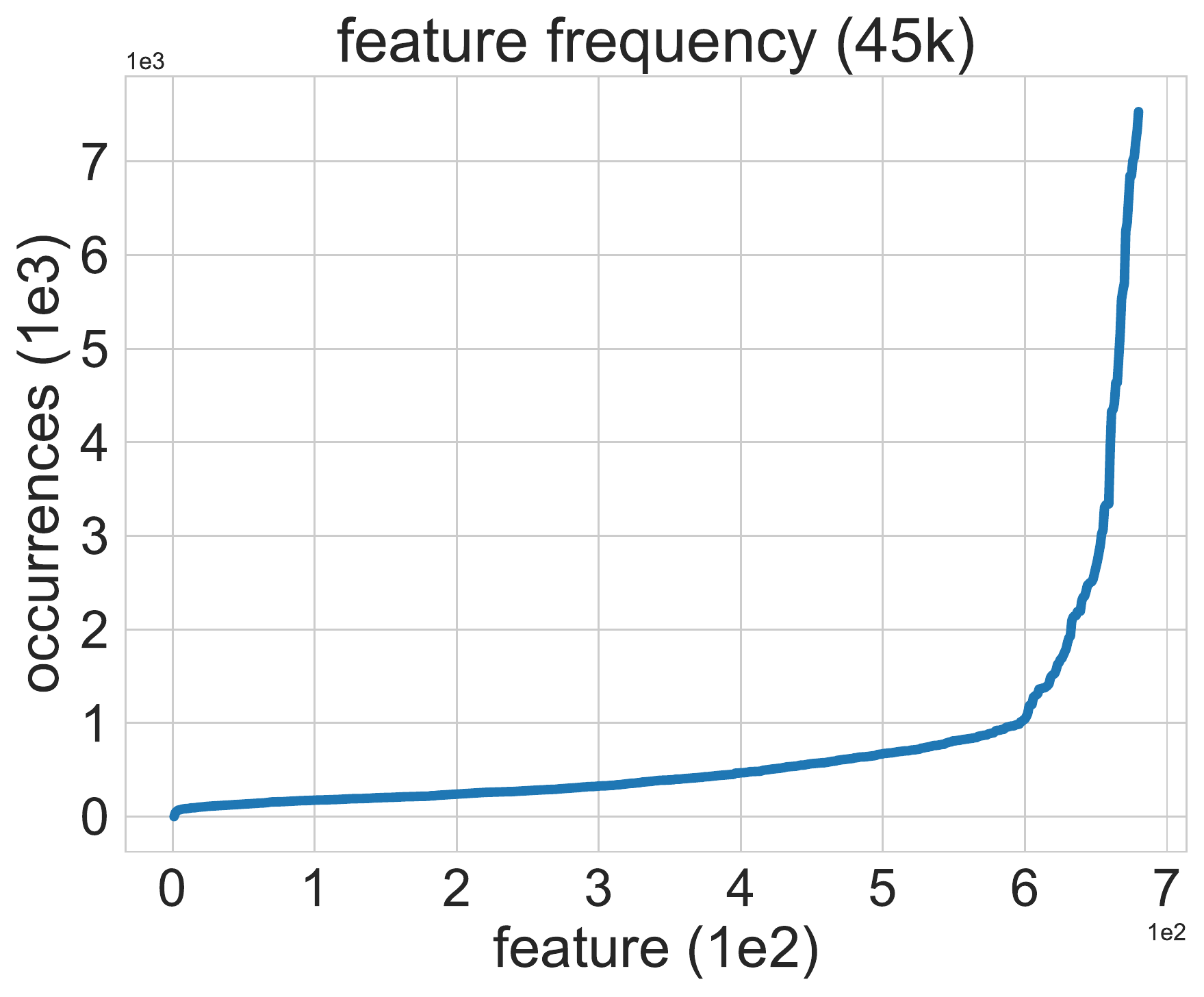}
         \caption{}
         \label{fig:feat_freq}
     \end{subfigure}
          \hfill
          \begin{subfigure}[t]{0.32\textwidth}
         \centering
         \includegraphics[width=\textwidth]{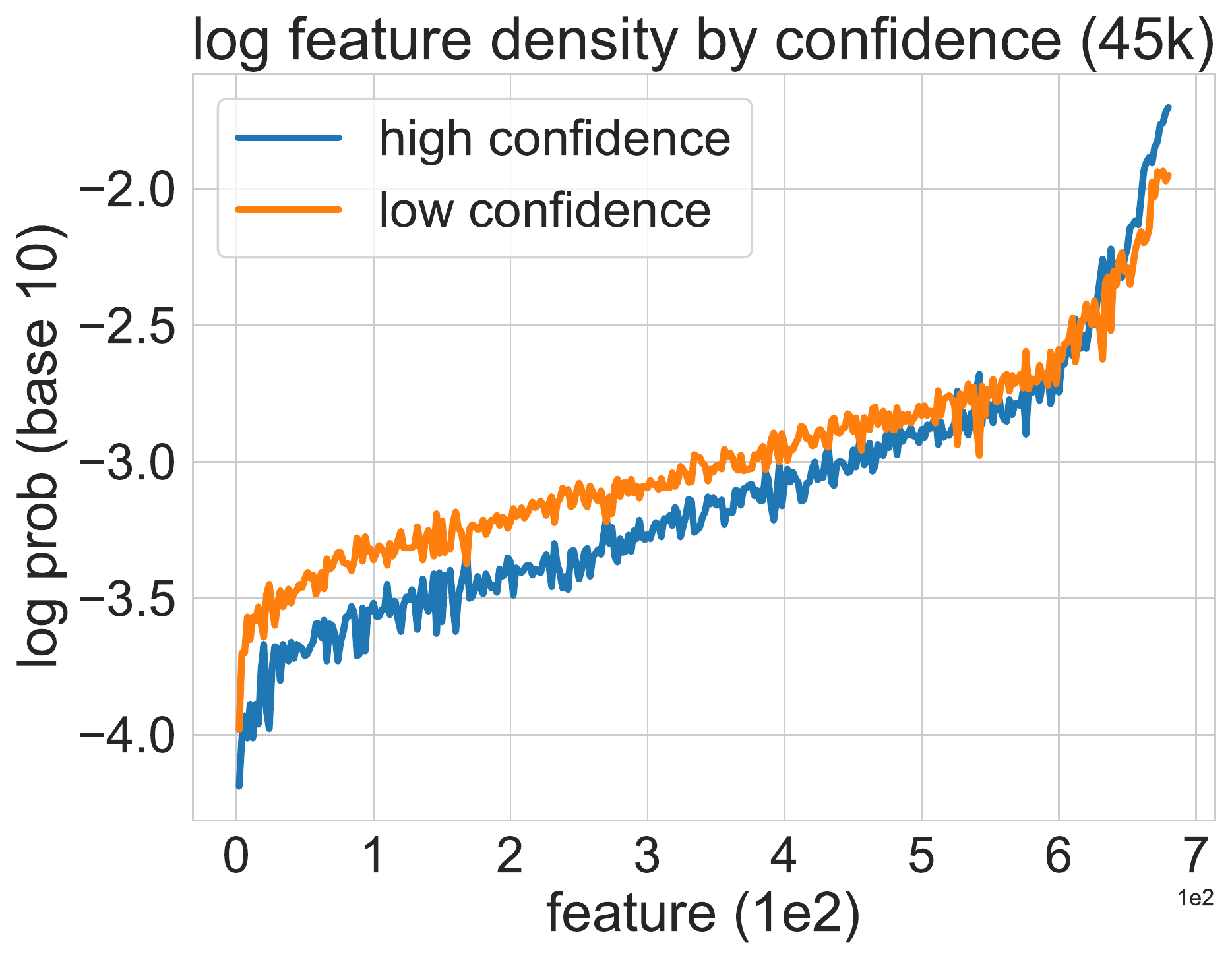}
         \caption{}
         \label{fig:conf_feat}
     \end{subfigure}
        \caption{\textbf{(a)} \yj{Feature frequency over the course of training. The red curve represents the feature frequency at the initialization.}. \textbf{(b)} Features v.s. how often they appear in the dataset. The features are sorted by frequency and the distribution appears to be long-tailed. \textbf{(c)} Feature frequency by different confidence levels. Low-confidence data tend to have more low-frequency features.}
        \label{fig:all-freq}
\end{figure*}

\begin{figure*}[t]
     \centering
     \begin{subfigure}[t]{0.32\textwidth}
         \centering
         \includegraphics[width=\textwidth]{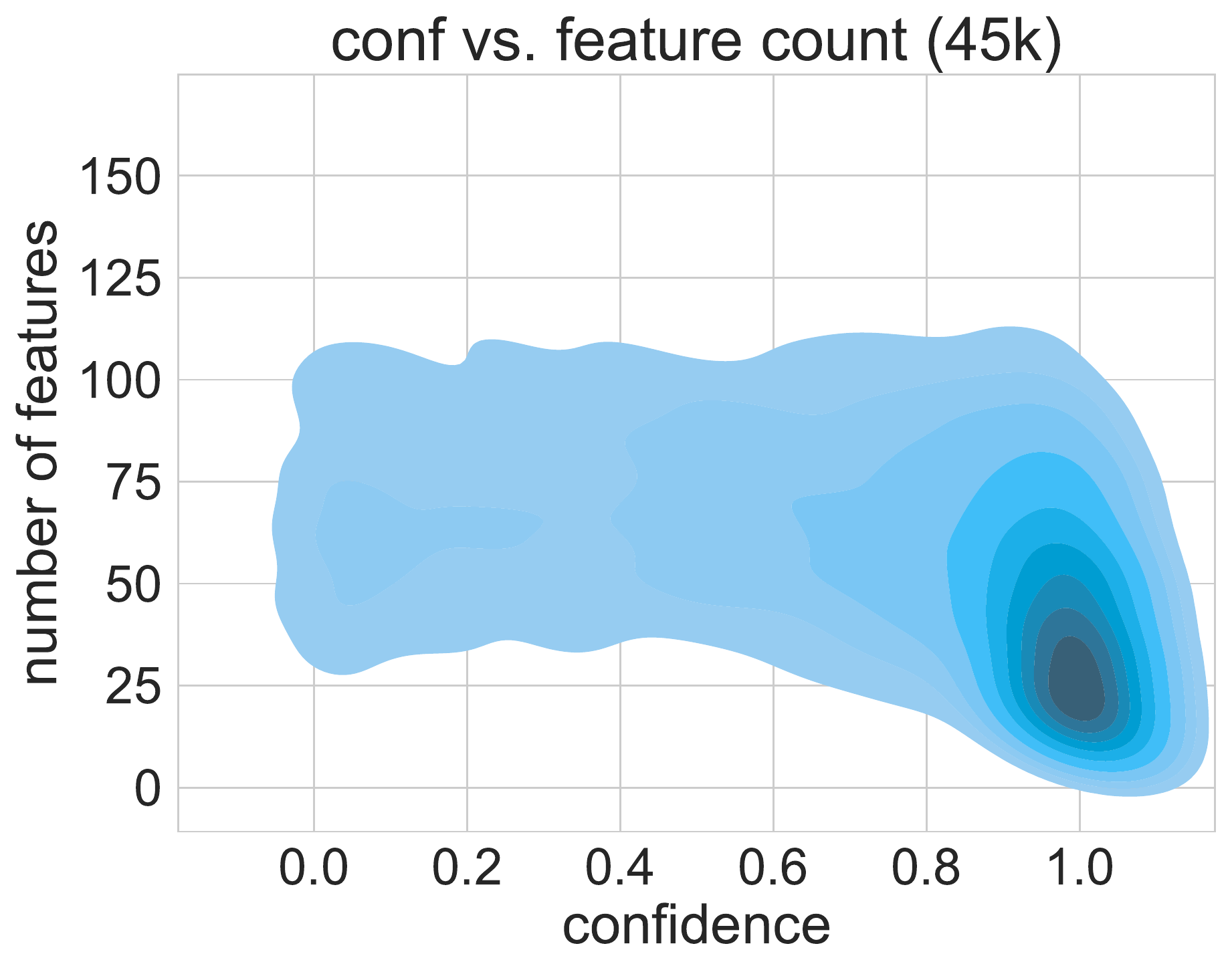}
         \caption{}
         \label{fig:confidence_feature_n}
     \end{subfigure}
     \hfill
     \begin{subfigure}[t]{0.32\textwidth}
         \centering
         \includegraphics[width=\textwidth]{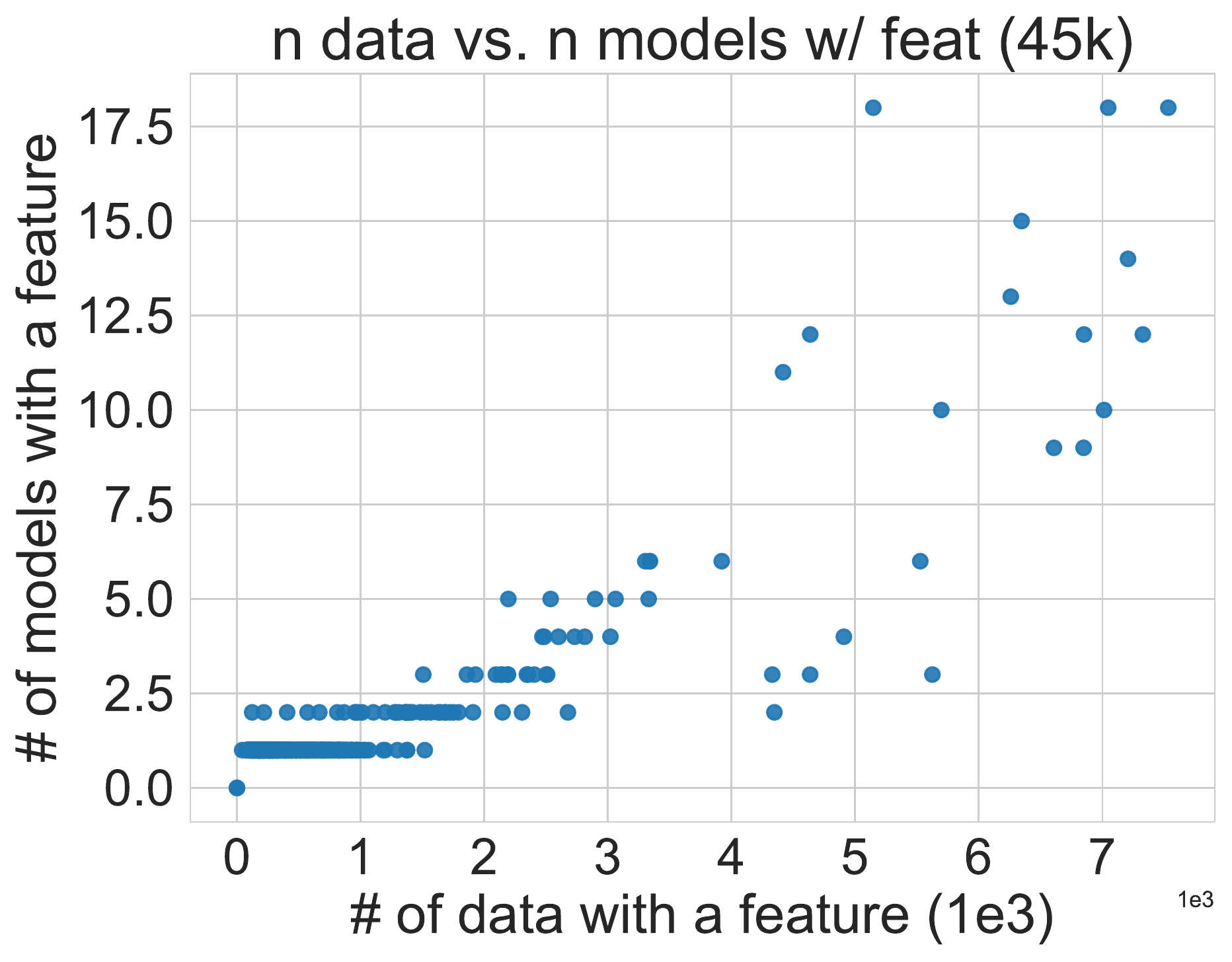}
         \caption{}
         \label{fig:n_data_vs_n_model}
     \end{subfigure}
          \hfill
          \begin{subfigure}[t]{0.32\textwidth}
         \centering
         \includegraphics[width=\textwidth]{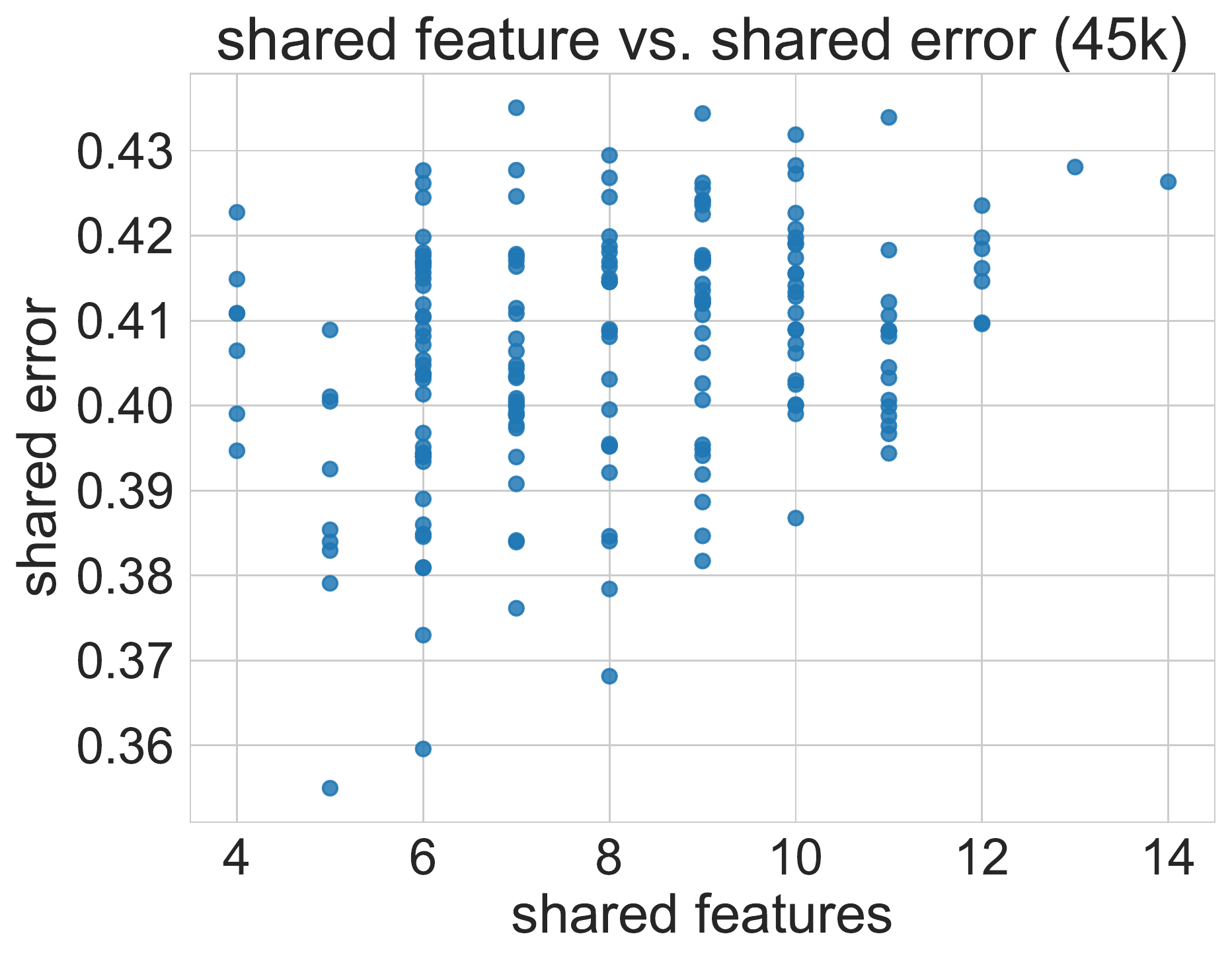}
         \caption{}
         \label{fig:similarity_vs_error}
     \end{subfigure}
        \caption{\textbf{(a)} Density estimation of confidence vs. number of features over data. High confidence data tend to have fewer features. \textbf{(b)} Scatter plot of the number of data with a feature vs. the number of models with that feature. 
        The strong positive correlation suggests that the more data has a certain feature, the more likely the model will learn that feature.
        \textbf{(c)} Number of shared features vs. shared error. The lower bound of shared error monotonically increases with the number of shared features.}
        \label{fig:obs}
\end{figure*}

\paragraph{Observation 1 : Feature frequency is long-tailed.}\textbf{(\hypertarget{obs1}{\hyperlink{obs1}{O.1}})}
\yj{We use the interaction tensor $\mathbf{\Omega}$ to compute the frequency of each of the $T$ features in the dataset. First, we sum $\mathbf{\Omega}$ over the model axis and then clip the values to $1$ (since we are only interested in the relation between data and features here). Then, we sum over the data axis to obtain the number of data points that have each of the features. We sort the features by their frequency and show the resulting density in Figure~\ref{fig:feat_freq}, which reveals a long-tailed distribution, where a small number of features account for a large portion of all features in the data distribution, but the remaining features occur with non-vanishing frequency.
Furthermore, the distribution of features is a consequence of learning. In Figure~\ref{fig:feat_freq_over_training}, we compare the feature frequency computed from 20 models over the first 30 epochs of training with the feature distribution for untrained models. We observe that untrained models have much higher frequencies for tail features (low frequency) compared to trained models. After even a single epoch, the frequency of tail features decreases significantly and then stays relatively stable over training. At the head of the distribution, the models first learn a large number of features and then prune out features as training continues, eventually converging to a fixed distribution with a smaller number of effective features compared to the random initialization.}

\paragraph{Observation 2: The ensemble tends to be more confident on data points with fewer features, and  data points with lower confidence tend to have more features with low density.}\textbf{(\hypertarget{obs2}{\hyperlink{obs2}{O.2}})} Another question that we would like to understand is how do features interact with the confidence of the ensemble. In Figure~\ref{fig:confidence_feature_n}, we show the joint density plot of the ensemble's confidence for a data point and the number of features the data point has for \texttt{45k}. We can see that data for which the ensembles have low confidence generally have more features, whereas the high-confidence data points tend to have fewer features. This finding contradicts the model of \citet{allen2020towards} in which if a data point is multi-view (i.e., contains all the features), all members of the ensembles will classify it correctly. A plausible explanation for this observation is that there is a small sub-population of features that are learned by a large number of models. Based on \hyperlink{obs1}{O.1}, we postulate that these features are learned by more models because they appear with higher probability in the data. Furthermore, we plot the log density of features\footnote{The features are plotted in the same order as Figure~\ref{fig:conf_feat} (hence the jaggedness), and the density is obtained by normalizing with the total number of features in both high confidence and low confidence groups of data.} in data that all members of ensemble predict correctly (high confidence) and all other data points (low confidence) in Figure~\ref{fig:conf_feat}. We see that the low confidence data tend to have more features with low density in Figure~\ref{fig:feat_freq}. One explanation is that the features in the tail are responsible for different models making different predictions.

\paragraph{Observation 3: Number of models with a certain feature is positively correlated with the feature's frequency.}  \textbf{(\hypertarget{obs3}{\hyperlink{obs3}{O.3}})} The interaction tensor also reveals how the number of data points containing a certain feature relates to the number of models that have learned that feature. This relationship can shed light on how models learn features of different frequencies in practice. In Figure~\ref{fig:n_data_vs_n_model}, we observe that the number of models with any given feature has a strong positive correlation with the frequency of that feature appearing in the data (linear / super-linear). This implies that the more a feature appears in the data, the more likely a model will pick it up. We hypothesize that the feature learning procedure can be phenomenologically approximated by a sampling process where the probability of learning a feature is related to how often that feature appears in the data.

\paragraph{Observation 4: Models with similar features make similar mistakes.}\textbf{(\hypertarget{obs4}{\hyperlink{obs4}{O.4}})} Another natural hypothesis is that if models share many features, they should make similar mistakes. For every pair of models in \texttt{45k}, we compute how many features they share and how often they make the same mistake relative to the average number of mistakes both models make. In Figure~\ref{fig:similarity_vs_error}, we plot the two quantities against each other. We can see that for each value of shared features, the lower bound of shared error is almost \textit{monotonically} increasing. On the other hand, when models share lower numbers of features, the shared errors have a much larger variance, which indicates their predictions are less dependent on each other and therefore more random. Moreover, note that for a 10-class classification problem, shared errors of more than 35\% is far above chance.  This effect is more amplified for different architectures. We show this result on more than 20 diverse models in Appendix~\ref{app:diff_arch}.

\yj{Finally, we provide more analysis on the effect of using PCA for clustering in Appendix~\ref{app:pca} and explore the properties of features found under our definition in Figure~\ref{fig:feature_vis} and Appendix~\ref{app:feature_explore}. 
}

\section{A Combinatorial Framework of Feature Learning}
\label{sec:toy_model}

In this section, we present a new framework of feature learning for a binary classification
based on the insights from the experiments. We saw in \hyperlink{obs1}{O.1} that the distribution of features is long-tailed. This means a relatively small number of unique features constitute a large proportion of all the features in the data. To facilitate analysis, we will assume there are two types of features, \textit{dominant features} and \textit{rare features}, where the dominant features appear with much higher probability than rare features. Further, we observed that data with high confidence tend to have much fewer features than the ones with high confidence (\hyperlink{obs2}{O.2}). To model this behavior, we will assume that there are two types of data points: \textit{dominant data} and \textit{rare data}. The former contains a small number of dominant features and the latter contains a larger number of rare features. This is another simplification based on \hyperlink{obs2}{O.2}, which shows that high-confidence data tend to have fewer high-frequency features.

\paragraph{Definitions and additional notations.} Before describing the full model, we first define the parameters of the model as well as some additional notations:
\begin{itemize}
    \item $p_d$: the proportion of all data that are dominant.
    \item $p_r$: the proportion of all data that are rare. This parameter is equal to $1-p_d$.
    \item $c$: the total model capacity. It represents how many features a single model can learn.
     \item $t_d$: the total number of dominant features available in the data for one class.
    \item $t_r$: the total number of rare features available in the data for one class.
    \item $n_d$: the total number of dominant features a single dominant data point has. $n_d \leq t_d$.
    \item $n_r$: the total number of rare features a single rare data point has. $n_r \leq t_r$.
\end{itemize}
We will use $\Psi(\cdot)$ to denote the set of all features a model or a feature have.

\paragraph{Data generating process.} We can see the data generating process as the following sampling procedure. First, we decide which class the data point belongs to. We are considering a class balanced binary classification problem so each class occurs with equal probability of $\frac{1}{2}$. Then, we decide whether the data point is dominant or rare. This is the equivalent to sampling from a Bernoulli distribution, $\text{Ber}(p_d)$. If the data point is dominant, we sample $n_d$ dominant features uniformly \textit{without} replacement. Vice versa, if the data point is rare, we sample $n_r$ dominant features uniformly \textit{without} replacement. It is easy to verify that the proportion of dominant data points and features is $p_d$ and the proportion of rare data points and features is $p_r$.

\paragraph{How the models learn.} We saw in \hyperlink{obs3}{O.3} that the frequency of features occurring in different models is positively correlated with the frequency at which the features occur in the data. We can model the learning process as another \textit{sampling-without-replacement} process where the probability that a model learns a feature is \textit{proportional} to the frequency at which the feature occurs in the data. Under this assumption, in expectation, $c_d = \frac{1}{2} p_d  c$ of the features in a single model would be dominant features for a single class, and $c_r = \frac{1}{2} p_r  c$ of the features for a single class would be rare. We can further simplify this process by assuming that the model will always sample $c_d$ dominant features for each class, and $c_r$ rare features for each class\footnote{Both $c_r$ and $c_d$ are rounded to the nearest integer such that the total number of features in a model is still $c$.}.

\paragraph{How the models make predictions.} For a data point $x$ and a model $f$, we assume that the model will correctly classify $x$ if the overlap between the features of $x$ and the features of $f$ is not empty (similar assumptions are made in \citet{allen2020towards}). Otherwise, the model will perform a random guess. The expected error that a single model $f$ makes on a single datum pair $(x,y)$ is thus $\msf{err}(f, x, y) = \frac{1}{2}\mathbbm{1}\left\{\Psi(f) \cap \Psi(x) = \varnothing \right\}$.
Further, given a pair of models $(f,g)$ and a single datum pair $(x,y)$, there are three distinct behaviors for how they will make predictions. \textbf{(1)} The two models will always agree with each other if both of them share feature with $x$, since both will classify $x$ correctly (i.e., if $|\Psi(f) \cap \Psi(x)| > 0$ and $|\Psi(g) \cap \Psi(x)| > 0$). \textbf{(2)} If the models both do not share any features with $x$, then by the previous assumptions, the models will make random guesses \yj{(see Appendix~\ref{app:randomness} for why this is justified)}; however, if the models share features with each other, their random guesses will not be independent from each other (\hyperlink{obs4}{O.4}). We hypothesize that how two models agree with each other is a function of $k$, the number of features they share, and $c$, the model capacity. We capture this intuition with an \textit{agreement function},  $\zeta: \sN \times \sN \rightarrow [0, 1]$, which returns the probability that two models will agree based on how many features they share relative to the full model capacity. This function is crucial for understanding how models make mistakes. \textbf{(3)} Finally, if the models do not share any features with each other or with $x$, both models will perform independent random guesses, in which case they will agree $50\%$ of the time.

\yj{
It is natural to ask how reasonable the simplifications are. In Appendix~\ref{app:theoretical_model}, we discuss these simplifications (e.g., random guess and number of classes) in detail and provide a comparison between this framework and \citet{allen2020towards}. We encourage interested readers to read this section. Still, we will see that this relatively simplified model readily offers interesting insights into observed phenomena and can make surprisingly accurate predictions about the results of experiments a priori. 
}

\subsection{Analytical forms of accuracy and agreement}
\yj{Using this model, the \emph{closed-form} form of expected accuracy, $\msf{Acc}$, and expected agreement rates, $\msf{Agr}$, can be derived through combinatorics.  All propositions are proven in Appendix~\ref{app:proof}.}

\begin{proposition}
The expected accuracy over the model distribution and data distribution is:
\begin{align}\label{eqn:acc}
    \msf{Acc} = p_d\left( 1- \frac{1}{2}\frac{{t_d-c_d \choose n_d}}{{t_d \choose n_d}} \right) + p_r\left( 1- \frac{1}{2}\frac{{t_r-c_r \choose n_r}}{{t_r \choose n_r}} \right).
\end{align}
\end{proposition}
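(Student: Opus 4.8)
The plan is to write the expected accuracy as $\msf{Acc} = 1 - \E[\msf{err}(f,x,y)]$, where the expectation is taken jointly over the model distribution, the data distribution, and the internal fair coin flip used for random guessing, and then to evaluate the right-hand side by conditioning on whether the sampled data point is dominant or rare.

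First I would invoke the law of total expectation on the event that $(x,y)$ is dominant (probability $p_d$) versus rare (probability $p_r = 1-p_d$). Using $\msf{err}(f,x,y) = \tfrac12 \mathbbm{1}\{\Psi(f)\cap\Psi(x) = \varnothing\}$ from the ``how the models make predictions'' assumption, and noting that the coin flip contributes only the factor $\tfrac12$, this gives
\begin{align*}
\E[\msf{err}] = \tfrac12\, p_d\, \P\big[\Psi(f)\cap\Psi(x)=\varnothing \mid x\text{ dominant}\big] + \tfrac12\, p_r\, \P\big[\Psi(f)\cap\Psi(x)=\varnothing \mid x\text{ rare}\big].
\end{align*}

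The crux is then a hypergeometric computation for each disjointness probability. Condition on $x$ being a dominant data point of class $y$: then $\Psi(x)$ consists of $n_d$ dominant features of class $y$, drawn uniformly without replacement from the $t_d$ available ones, while the only features of $f$ that can possibly meet $\Psi(x)$ are its $c_d = \tfrac12 p_d c$ dominant features for class $y$ (its rare features and its features for the other class lie outside $\Psi(x)$ by the data-generating process). Because the model's sampling is independent of the data point's, I condition on $f$'s $c_d$ dominant features and count directly: out of the $\binom{t_d}{n_d}$ equally likely choices for $\Psi(x)$, exactly $\binom{t_d-c_d}{n_d}$ avoid all of them, so the disjointness probability is $\binom{t_d-c_d}{n_d}/\binom{t_d}{n_d}$. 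Repeating the argument verbatim with $r$ in place of $d$ handles the rare case. Substituting both expressions back, using $\msf{Acc} = 1-\E[\msf{err}]$, and splitting the constant $1 = p_d + p_r$ to fold it into the two terms produces exactly \eqref{eqn:acc}.

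I do not expect a genuine obstacle here — the combinatorics is routine once the setup is unwound. The step that needs the most care is the reduction inside the disjointness probability: one must argue that, because features carry a class label and a type (dominant/rare) and a class-$y$ data point carries only features of class $y$ of its own type, the relevant portion of $\Psi(f)$ is precisely the $c_d$ (resp.\ $c_r$) features guaranteed by the ``how the models learn'' assumption, so that the plain hypergeometric count applies. A minor point worth a remark is that $c_d$ and $c_r$ are rounded to integers, which does not affect the derivation since the formula is stated in terms of these integer counts.
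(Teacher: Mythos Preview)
Your proposal is correct and follows essentially the same route as the paper: condition on whether $x$ is dominant or rare, reduce the disjointness probability to a hypergeometric count $\binom{t_d-c_d}{n_d}/\binom{t_d}{n_d}$ (resp.\ rare) by fixing the model's relevant features and counting data configurations, and combine. The only cosmetic difference is that you compute $\msf{Acc}=1-\E[\msf{err}]$ while the paper weights the two events by $1$ and $\tfrac12$ directly to get $\msf{Acc}_d$ and $\msf{Acc}_r$; these are the same calculation rearranged.
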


\begin{proposition}
Let ${n \choose r}=0$ when $n < 0$, $r < 0$ or $n < r$,
and let:
\begingroup
\allowdisplaybreaks
\begin{align*}
    q_1 =& \,\, p_d \left(1-\frac{{t_d-c_d \choose n_d}}{{t_d \choose n_d}}\right)^2 + p_r \left(1-\frac{{t_r-c_r \choose n_r}}{{t_r \choose n_r}}\right)^2,   \\
    q_2(k) =& \,\, p_d \frac{{t_d-n_d \choose c_d}^2}{{t_d \choose c_d}^2} \left( \sum_{a+b=k} \frac{{c_d \choose a}{t_d-n_d-c_d \choose c_d - a}}{{t_d-n_d \choose c_d}} \frac{{c_r \choose b}{t_r-c_r \choose c_r - b}}{{t_r \choose c_r}} \right) \\
    &+ p_r \frac{{t_r-n_r \choose c_r}^2}{{t_r \choose c_r}^2} \left( \sum_{a+b=k} \frac{{c_d \choose a}{t_d-c_d \choose c_d - a}}{{t_d \choose c_d}} \frac{{c_r \choose b}{t_r-n_r-c_r \choose c_r - b}}{{t_r-n_r \choose c_r}} \right),
\end{align*}
\endgroup
then the expected agreement between an i.i.d pair $(f,g)$ drawn for the model distribution is:
\begin{align}\label{eqn:agr}
    \msf{Agr} = \frac{1}{2} + \frac{1}{2}q_1 + \sum_{k=1}^c \left (\zeta(k, c) - \frac{1}{2} \right) q_2(k).
\end{align}

\end{proposition}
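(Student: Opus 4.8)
The plan is to condition on the three mutually exclusive cases described in the text right before the proposition, compute the conditional agreement probability in each case, and then reassemble. Write $A_f=\mathbbm{1}\{\Psi(f)\cap\Psi(x)\neq\varnothing\}$ and $A_g=\mathbbm{1}\{\Psi(g)\cap\Psi(x)\neq\varnothing\}$ for the events that $f$ (resp.\ $g$) shares a feature with the datum $x$, and let $K=|\Psi(f)\cap\Psi(g)|$ be the number of features shared between the two models. The event space splits as: (i) $A_f=A_g=1$, both classify $x$ correctly, so they agree with probability $1$; (ii) exactly one of $A_f,A_g$ equals $1$ — say $f$ is correct and $g$ guesses — so they agree with probability $\tfrac12$; (iii) $A_f=A_g=0$, both guess, and the guesses are coupled through the shared features, so they agree with probability $\zeta(K,c)$, with the convention $\zeta(0,c)=\tfrac12$ so that case (iii) with $K=0$ correctly yields the independent-guess probability $\tfrac12$. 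Thus
\begin{align*}
\msf{Agr}=\Pr[A_f=A_g=1]+\tfrac12\Pr[A_f\neq A_g]+\sum_{k=0}^{c}\zeta(k,c)\,\Pr[A_f=A_g=0,\;K=k].
\end{align*}

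First I would reduce everything to two primitive probabilities. By symmetry between $f$ and $g$ and the fact that, conditioned on the datum $x$, the two models' feature sets are drawn independently, we have $\Pr[A_f=A_g=1]=\beta^2$ and $\Pr[A_f\neq A_g]=2\beta(1-\beta)$ where $\beta=\beta(x)\triangleq\Pr[A_f=1\mid x]$ is the single-model "hit" probability; and $\Pr[A_f=A_g=0\mid x]=(1-\beta)^2$. The quantity $\beta$ depends only on whether $x$ is dominant or rare: if $x$ is dominant it has $n_d$ of the $t_d$ dominant features and the model has $c_d$ of them drawn without replacement, so $\Pr[\text{miss}\mid x\text{ dominant}]={t_d-c_d\choose n_d}/{t_d\choose n_d}$ (equivalently ${t_d-n_d\choose c_d}/{t_d\choose c_d}$ by the symmetry of the hypergeometric); similarly for rare $x$. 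Averaging $\beta^2$ over the data distribution $\Pr[x\text{ dominant}]=p_d$, $\Pr[x\text{ rare}]=p_r$ gives exactly $q_1$ as defined, and averaging $\beta(1-\beta)$ combined with the identity $\beta^2+2\beta(1-\beta)+(1-\beta)^2=1$ lets me collapse cases (i) and (ii): $\Pr[A_f=A_g=1]+\tfrac12\Pr[A_f\neq A_g]=\mathbb{E}_x[\beta^2+\beta(1-\beta)]=\mathbb{E}_x[\beta]$, and one checks $\mathbb{E}_x[\beta]=\tfrac12+\tfrac12 q_1$ is \emph{not} quite right — rather, separating the $\zeta(0,c)=\tfrac12$ term out of the sum and using $\mathbb{E}_x[\beta^2+2\beta(1-\beta)]+\tfrac12\mathbb{E}_x[(1-\beta)^2]=\tfrac12+\tfrac12\mathbb{E}_x[\beta^2+2\beta(1-\beta)]=\tfrac12+\tfrac12\mathbb{E}_x[1-(1-\beta)^2]$, and $\mathbb{E}_x[1-(1-\beta)^2]$ where $(1-\beta)$ is the miss probability is precisely $q_1$ written with the "$1-(\cdot)^2$" form — giving the leading $\tfrac12+\tfrac12 q_1$ and leaving $\sum_{k\geq1}(\zeta(k,c)-\tfrac12)\Pr[A_f=A_g=0,K=k]$.

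The main work, then, is to show $\Pr[A_f=A_g=0,\;K=k]=q_2(k)$. Condition on $x$ being dominant (the rare case is symmetric with $d\leftrightarrow r$). The event $A_f=0$ means $f$'s $c_d$ dominant features avoid the $n_d$ features of $x$, i.e.\ they lie in the remaining $t_d-n_d$ dominant features; this has probability ${t_d-n_d\choose c_d}/{t_d\choose c_d}$, and conditionally $f$'s dominant features are uniform among the $t_d-n_d$ "non-$x$" dominant features while its $c_r$ rare features are unconstrained (uniform among all $t_r$). The same holds independently for $g$, explaining the squared factor ${t_d-n_d\choose c_d}^2/{t_d\choose c_d}^2$. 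Now $K=k$ decomposes as $k=a+b$ where $a$ is the number of shared dominant features and $b$ the number of shared rare features. Given $f$'s dominant set (size $c_d$ inside the $t_d-n_d$-element pool), the number of $g$'s dominant features landing in it is hypergeometric: $\Pr[a\text{ shared dominant}]={c_d\choose a}{t_d-n_d-c_d\choose c_d-a}/{t_d-n_d\choose c_d}$. Given $f$'s rare set (size $c_r$ inside the full $t_r$-element pool), $\Pr[b\text{ shared rare}]={c_r\choose b}{t_r-c_r\choose c_r-b}/{t_r\choose c_r}$. Multiplying, summing over $a+b=k$, and weighting the dominant branch by $p_d$ and the rare branch by $p_r$ reproduces $q_2(k)$ verbatim. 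Substituting back gives \eqref{eqn:agr}.

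The step I expect to be the main obstacle is the bookkeeping in the second paragraph: getting the cases (i)+(ii) contribution to collapse cleanly into $\tfrac12+\tfrac12 q_1$ requires peeling the $k=0$ term (with $\zeta(0,c)=\tfrac12$) off the $\zeta$-sum and recognizing that the "$1-(\text{miss prob})^2$" appearing there is exactly the summand of $q_1$; a sign error or an off-by-one in which term absorbs the independent-guess $\tfrac12$ is the easiest place to slip. The combinatorial identification of $q_2(k)$ in the third paragraph is conceptually routine once one commits to the $k=a+b$ split and recalls hypergeometric sampling, but one should double-check the edge conventions (the stated ${n\choose r}=0$ for $n<0$, $r<0$, or $n<r$) so that terms with $c_d-a<0$ or $t_d-n_d-c_d<0$ vanish automatically rather than needing separate handling, and that the sum over $k$ indeed ranges only up to $c$ since $a\le c_d$, $b\le c_r$, $c_d+c_r=c$.
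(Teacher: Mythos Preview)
Your proposal is essentially the paper's own argument: the same three-way partition of the event space, the same hypergeometric identification of $q_2(k)$ via the $k=a+b$ split into dominant and rare overlaps, and the same final collapse to $\tfrac12+\tfrac12 q_1+\sum_{k\geq1}(\zeta(k,c)-\tfrac12)q_2(k)$.

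The one slip is exactly where you predicted it. In your second paragraph, the displayed expression $\mathbb{E}_x[\beta^2+2\beta(1-\beta)]+\tfrac12\mathbb{E}_x[(1-\beta)^2]$ weights case~(ii) by $1$ rather than $\tfrac12$ (the correct ``baseline'' is $\beta^2+\beta(1-\beta)+\tfrac12(1-\beta)^2=\tfrac12+\tfrac12\beta^2$), and the claimed identity $\mathbb{E}_x[1-(1-\beta)^2]=q_1$ is false: $q_1=\mathbb{E}_x[\beta^2]$, not $\mathbb{E}_x[2\beta-\beta^2]$. The paper sidesteps this bookkeeping entirely by lumping your case~(ii) together with the $K=0$ part of case~(iii) into a single complement event $C$ with probability $q_3\triangleq 1-q_1-\sum_{k\geq1}q_2(k)$, writing $\msf{Agr}=1\cdot q_1+\tfrac12\cdot q_3+\sum_{k\geq1}\zeta(k,c)\,q_2(k)$ directly, and substituting; no $k=0$ peel-off or tracking of $\beta(1-\beta)$ is needed.
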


Further, we discuss the origin of irreducible \textit{generalization error} and other properties of this framework in Appendix~\ref{app:error} and show a potential connection between feature learning and data scaling~\citep{hestness2017deep}.

\subsection{Numerical simulation}
\label{sec:sim}
\begin{figure*}[t]
     \centering
    \begin{subfigure}[b]{0.23\textwidth}
         \centering
        \includegraphics[width=\textwidth]{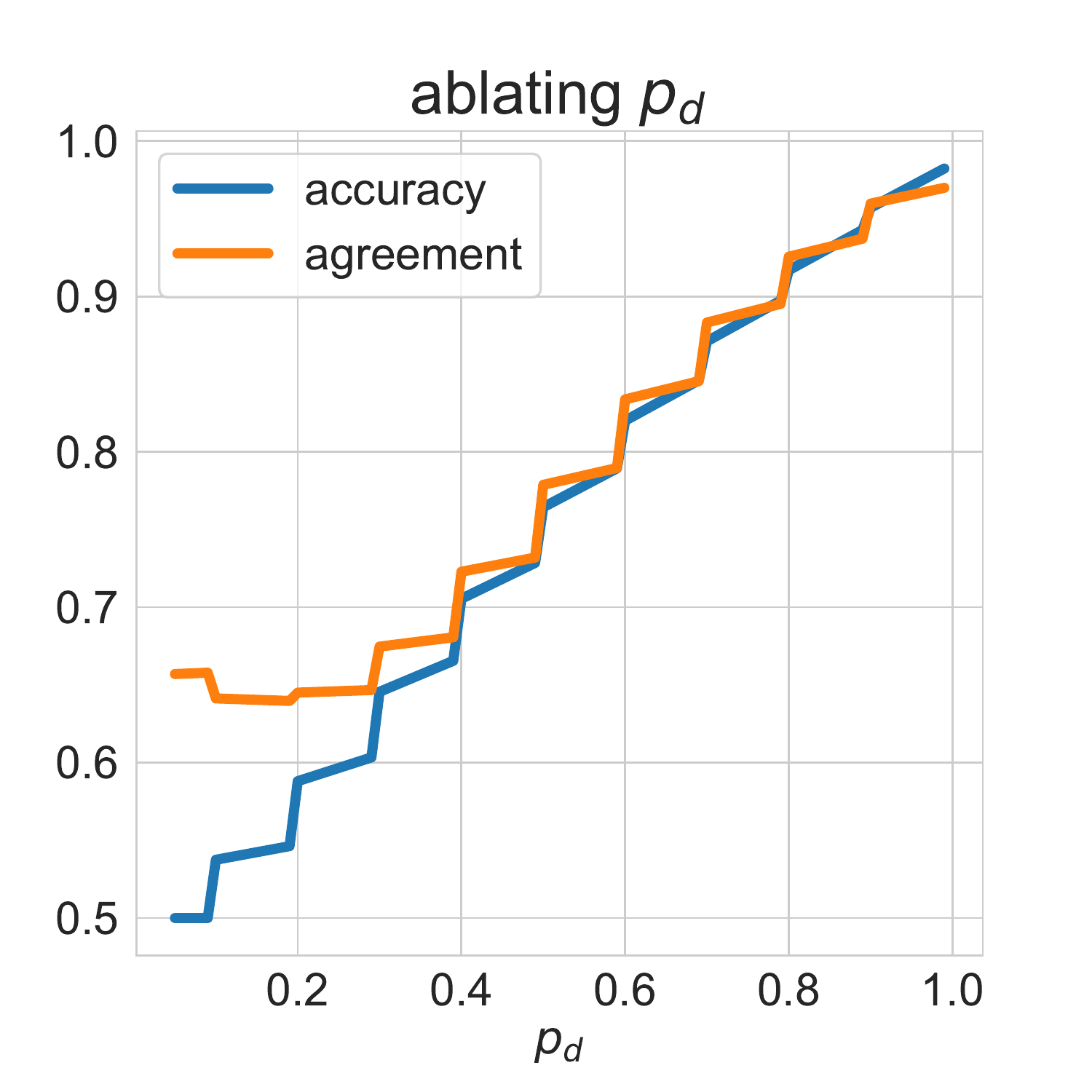}
    \end{subfigure}
    \begin{subfigure}[b]{0.23\textwidth}
         \centering
        \includegraphics[width=\textwidth]{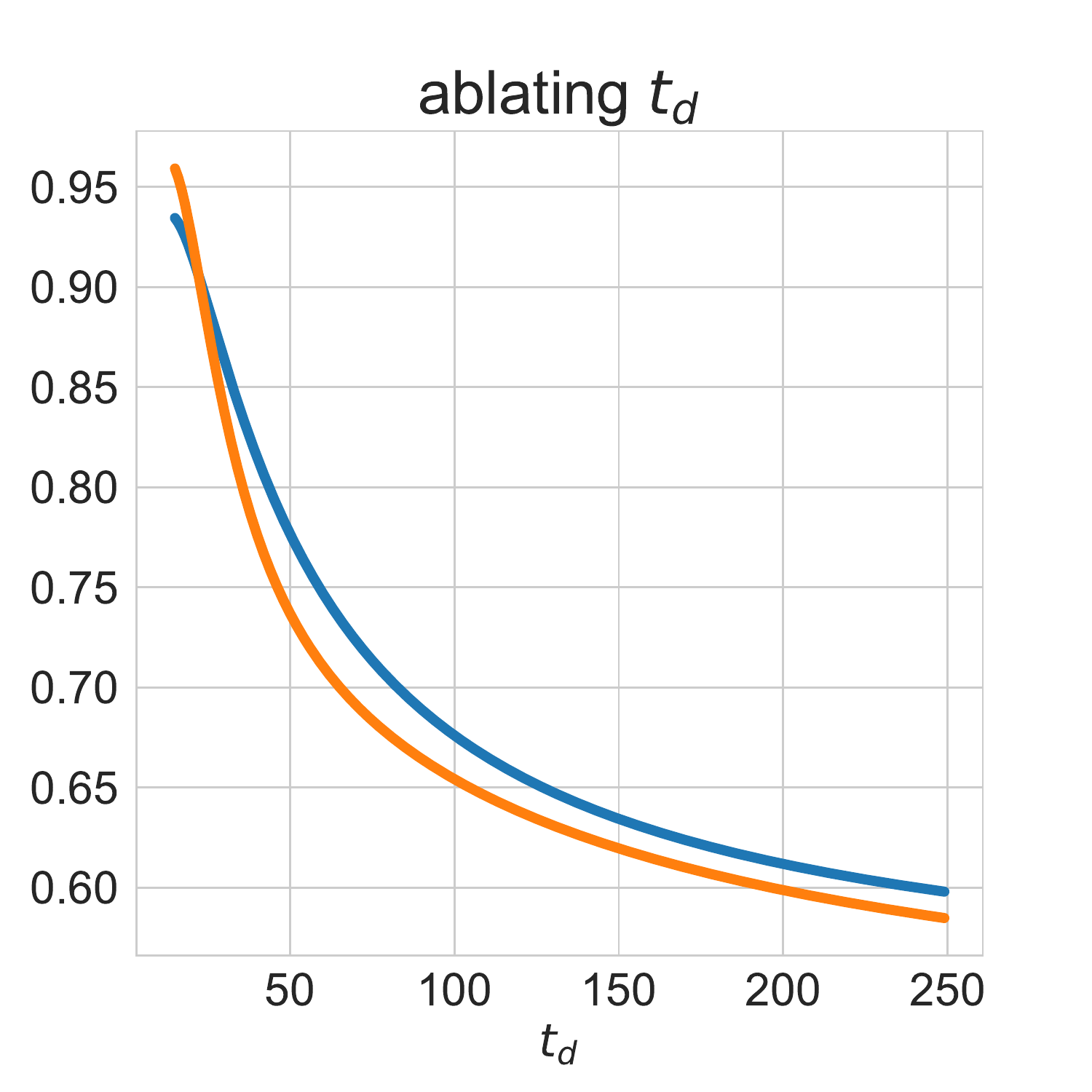}
    \end{subfigure}
    \begin{subfigure}[b]{0.23\textwidth}
         \centering
        \includegraphics[width=\textwidth]{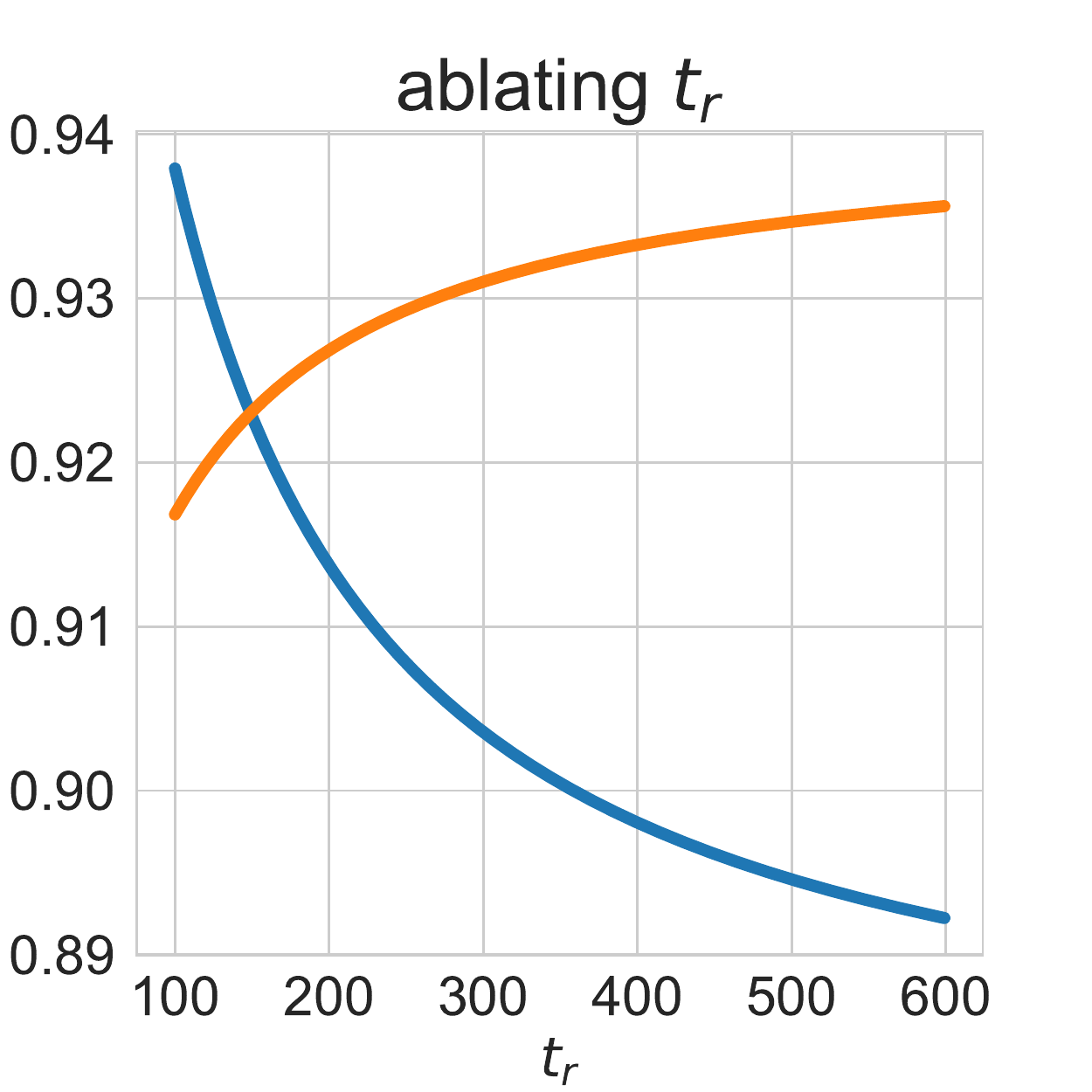}
    \end{subfigure}
    \begin{subfigure}[b]{0.23\textwidth}
         \centering
        \includegraphics[width=\textwidth]{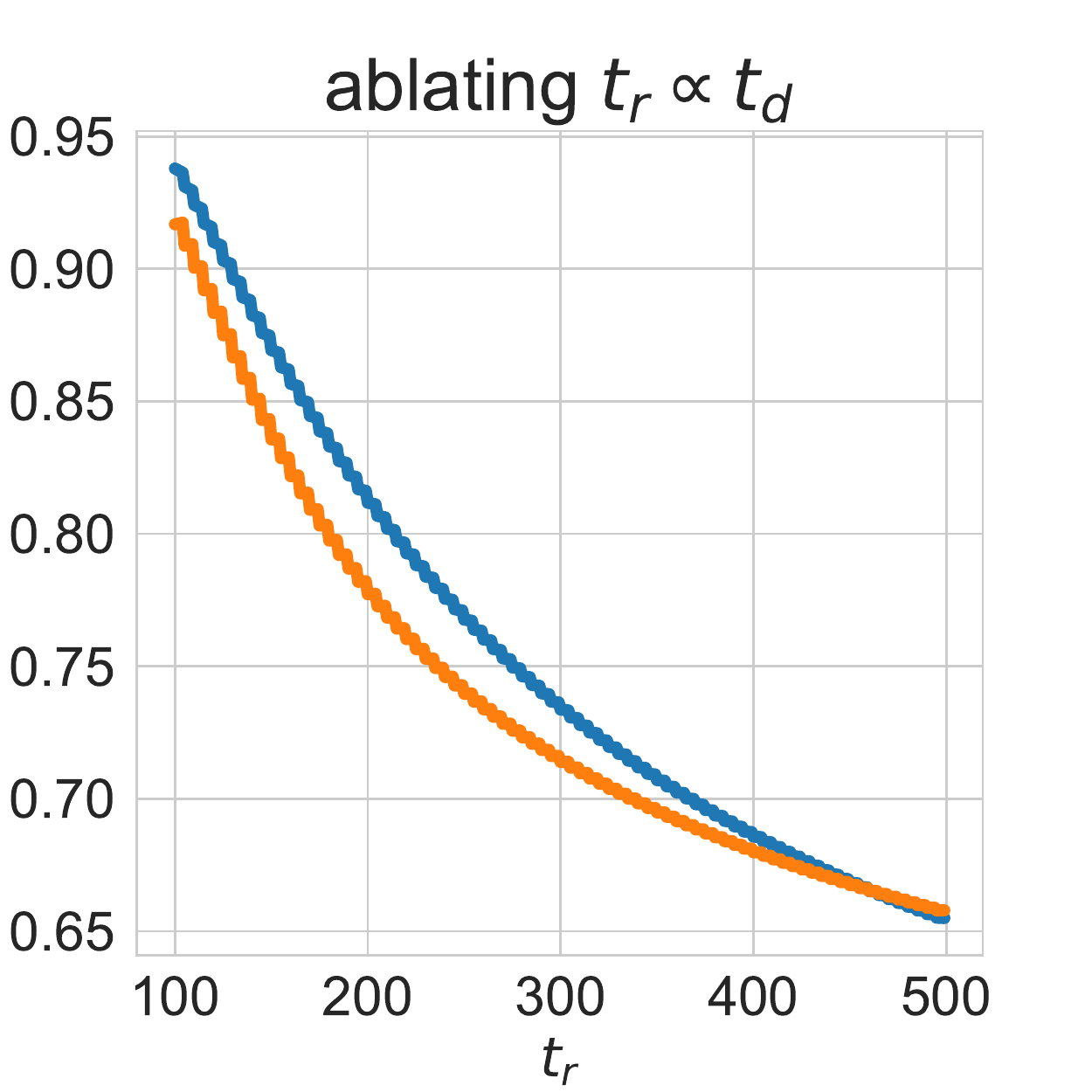}
    \end{subfigure}
    
    \begin{subfigure}[b]{0.23\textwidth}
         \centering
        \includegraphics[width=\textwidth]{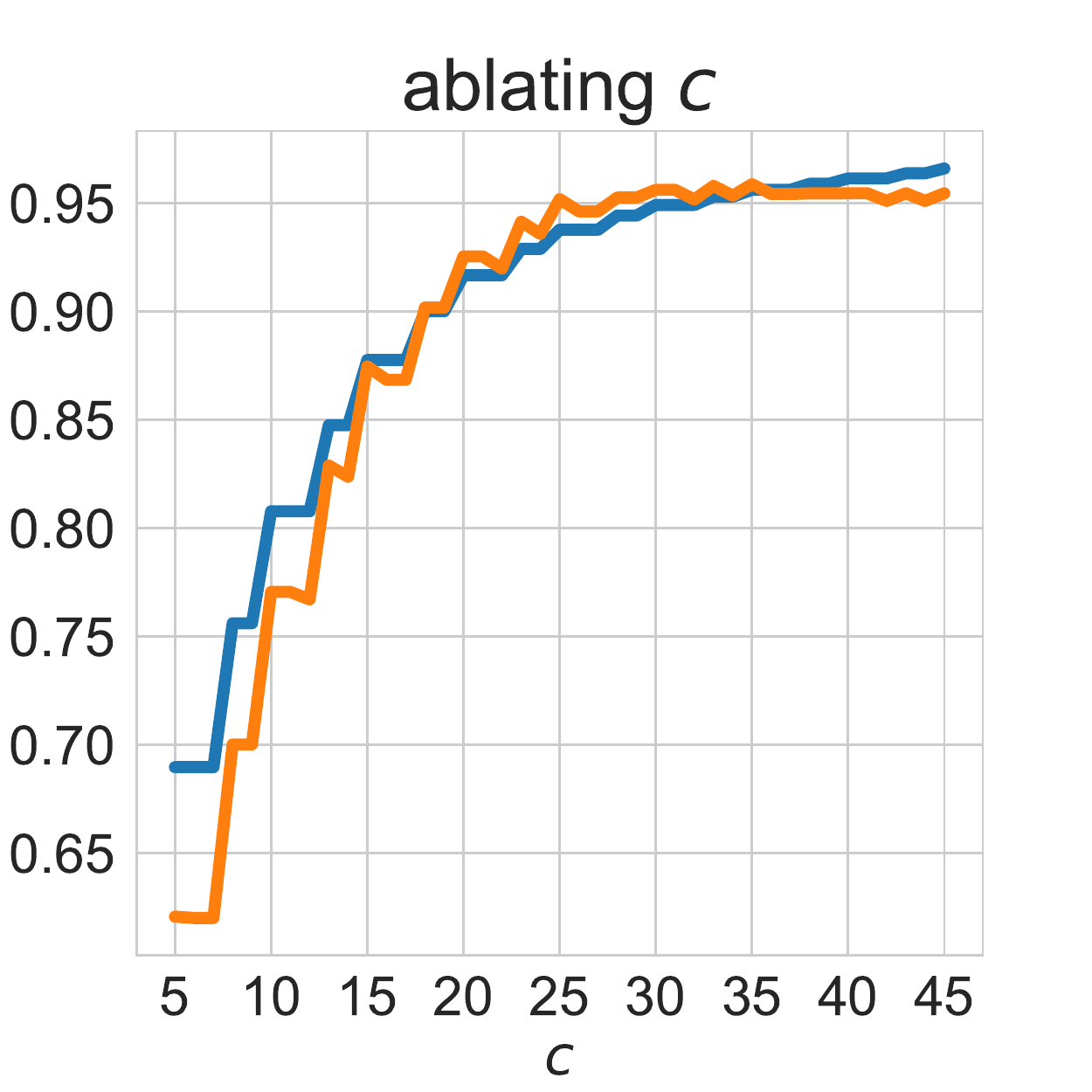}
    \end{subfigure}
    \begin{subfigure}[b]{0.23\textwidth}
         \centering
        \includegraphics[width=\textwidth]{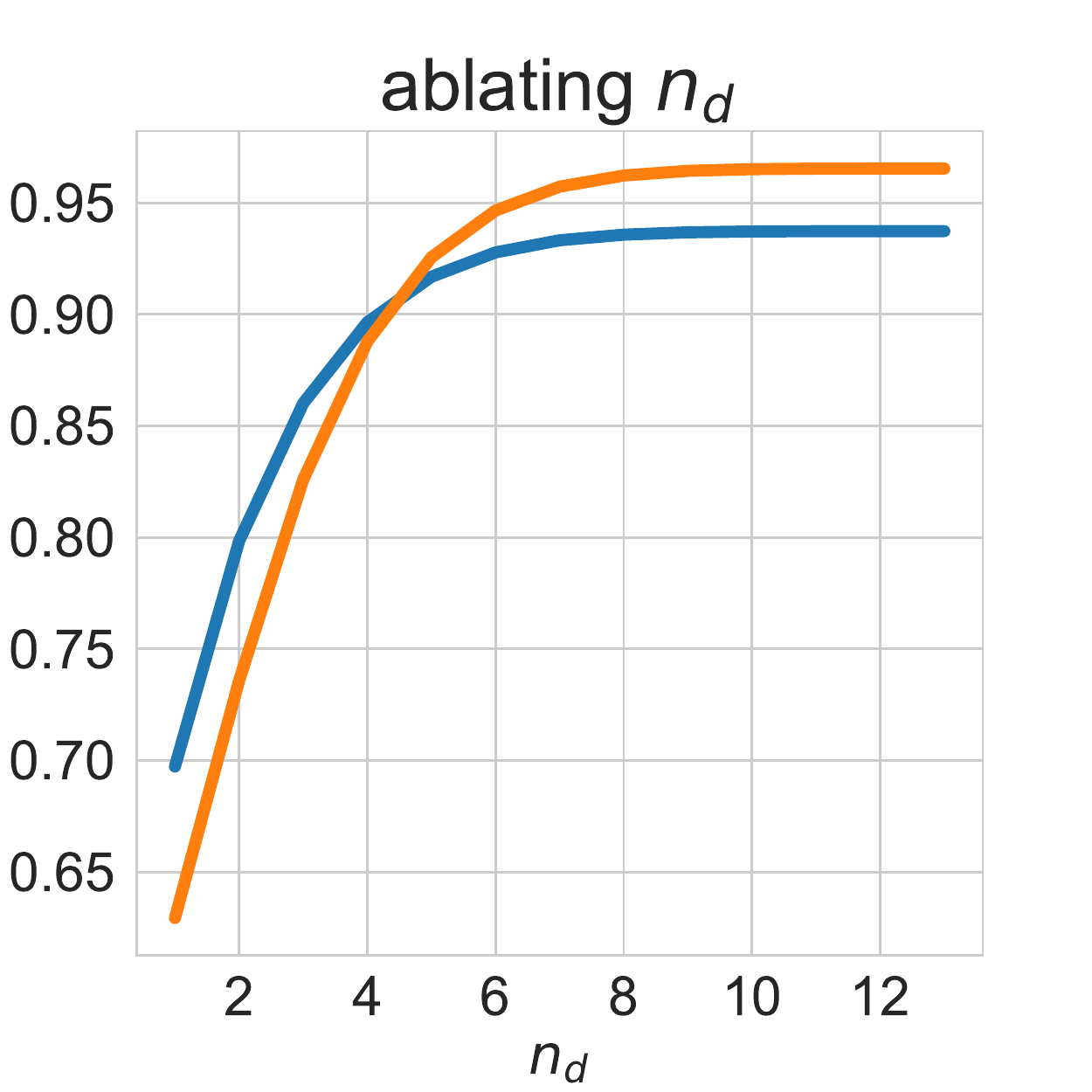}
    \end{subfigure}
    \begin{subfigure}[b]{0.23\textwidth}
         \centering
        \includegraphics[width=\textwidth]{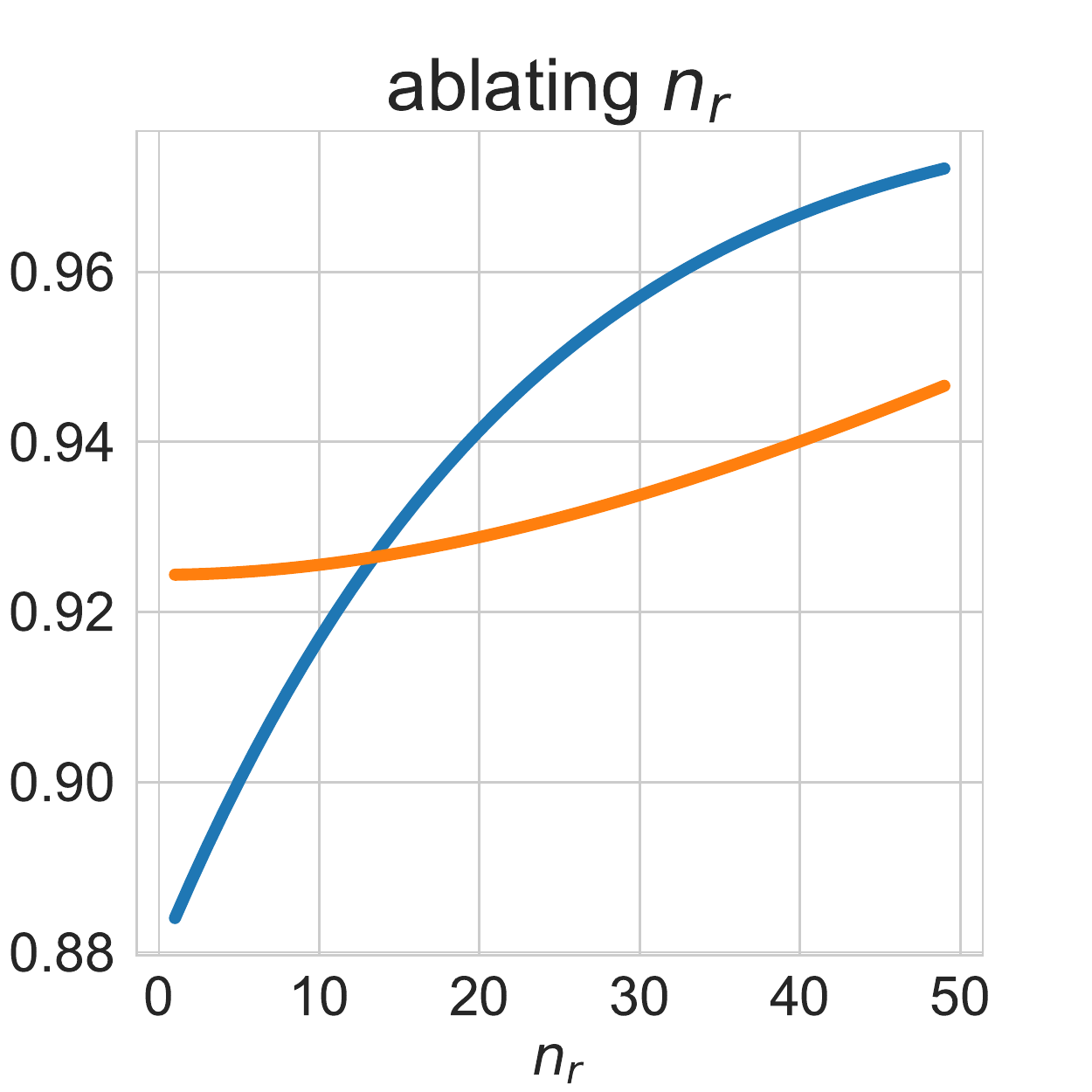}
    \end{subfigure}
    \begin{subfigure}[b]{0.23\textwidth}
         \centering
        \includegraphics[width=\textwidth]{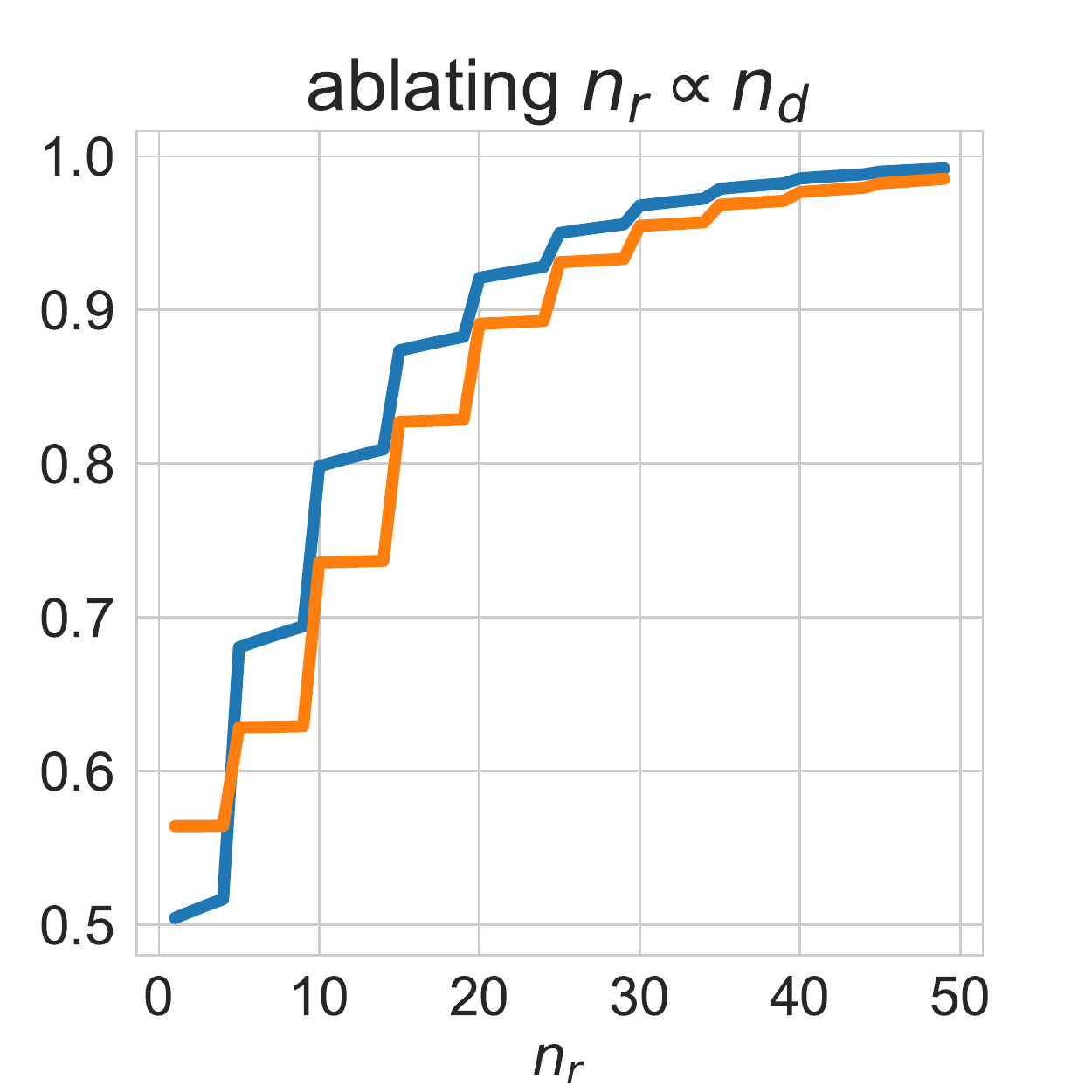}
    \end{subfigure}
    \caption{Numerical simulations using the analytical forms of accuracy and agreement. Each plot in the left three columns ablate a single parameters in the framework. The right column shows the effect of coupling two parameters, namely $(t_r, t_d)$ and $(n_r, n_d)$. Changing $t_r$ and $n_r$ alone deviates from GDE, but if they are coupled with dominant features, we can recover GDE (approximately).}
    \label{fig:numerical}
\end{figure*}

We now study the properties of the analytical forms of the expected agreement and the expected accuracy. Instead of bounding their difference, we will use numerical simulation to characterize their properties and difference. The model has $6$ free parameters, namely, $p_d, c, t_d, t_r, n_d, n_r$. We first pick a set of initial values and then vary each values to study the behavior of the model. Unless specified otherwise, the initial values used for all simulations are $p_d=0.7, c=20, t_d=20, t_r=180, n_d=5, n_r=10$. Further, we pick $\zeta(k, c) = 0.9\,\mathbbm{1}\{k > 0\}$. This reflects the intuition that if two models share any features, then with high probability they would agree with each other. This is reasonable if we assume all the models in the same hypothesis distribution are naturally more inclined to agree with each other (\hypertarget{obs4}{\hyperlink{obs4}{O.4}} and Figure~\ref{fig:similarity_vs_error} show that for a \textit{10-class} classification the shared error is at least 35\%). We study the properties and effects of $\zeta$ in Appendix~\ref{app:diff_zeta}.

In the left 3 columns of Figure~\ref{fig:numerical}, we vary each parameter of the framework over a wide range of values. We observe that for both $p_d$ and $c$, the agreement closely track each other for a large portion of the parameter values. This suggests that the difference between generalization error and agreement is robust to how much of the data have dominant features and the size of the model. Further, we observe both accuracy and agreement saturate as the model capacity, $c$, increases. This is equivalent to increasing the model capacity with infinite amount of training data. This is consistent with prior works on \textit{model scaling}~\citep{tan2019efficientnet} which suggests that model size may be related to how many features the model can learn. On the other hand, the behaviors of agreement and accuracy appear to be more sensitive to the other parameters that describe the relationship between total numbers of existing features and how often these features appear in a single data point, in particular $t_r$ and $n_r$, quantities that govern the distribution of rare data. Intuitively, the rare features and data represent the part of data distribution that appear in the tail of the data distribution, and require \textit{memorization} to learn~\citep{feldman2020does}.

The observations about $n_r$ and $t_r$ suggest that GDE requires some distributional assumptions on the features and data in our framework (and in reality~\citep{jiang2022assessing}). One possible hypothesis is that the relationship between $t_r$ and $t_d$ and the relationship between $n_r$ and $n_d$ follow the Pareto principle~\citep{pareto1964cours} (given the long-tailed behavior observed in \hyperlink{obs1}{O.1}). To verify the effect of this hypothesis, we vary $n_r$ and $t_r$ while keeping $n_d$ and $n_r$ proportional to them, that is, $n_d = \lfloor\alpha n_r\rfloor$ and $t_d = \lfloor\alpha t_r\rfloor$. We choose $\alpha = 0.2$ and show the results the in right column of Figure~\ref{fig:numerical}. Notice that if the ratio between these quantities is constant, agreement once again tracks the accuracy closely, indicating that the relationship between dominant and rare  data is central to the origin of GDE in this model.

\section{\yj{From Description to Prediction}}
\yj{The proposed theoretical model makes a series of simplifications. We now demonstrate its predictive power of what actually happens in deep learning under specific interventions -- the following experiments on GDE are conducted \emph{after} we derived the theoretical framework and to the best of our knowledge have never been done in prior works. In other words, our model has not been specifically adjusted to account for the results of these experiments. Results for both experiments are shown in Table~\ref{tab:prediction} (with uncertainty in Table~\ref{tab:prediction_uncertinaty}) and the experimental details are in Appendix~\ref{app:merge} and~\ref{app:partition}.

The first experiment considers \textbf{merging classes}. We observed in Section~\ref{sec:sim} that for GDE to hold approximately, the features distribution needs specific properties, namely, $t_r \propto t_d$ and $n_r \propto n_d$. 
If features do not interfere with each other significantly, 
our framework predicts that merging classes into superclasses should not change the ratios and thus would not break the GDE. We merge the classes of CIFAR 10 into different superclasses and run the same learning algorithms as Section~\ref{sec:exp} on the new data (6 random seeds). The accuracy-agreement difference does not change significantly across different partitions as predicted, even though the accuracy and agreement are different. 

\begin{table}
\small
\centering
\caption{\yj{Average accuracy and agreement on datasets with different interventions. $\widetilde{K}$ represents different number of superclasses and $\widetilde{K}=10$ is the original CIFAR10. $G_i$ presents the data in the $(i-1)\times 20^\text{th}$ to $i\times 20^\text{th}$ percentile of blue intensity. The difference between accuracy and agreement is approximately the same for different $\widetilde{K}$'s (thus GDE holds) but not for different $G_i$'s.}}
\vspace{2mm}
\begin{tabular}{c|cccc|ccccc}
\toprule
& $\widetilde{K}=2$ & $\widetilde{K}=3$ &  $\widetilde{K}=5$ & $\widetilde{K}=10$ & $G_1$ & $G_2$ & $G_3$ & $G_4$ & $G_5$ \\
\midrule
\texttt{Accuracy} & $0.80$ & $0.84$ & $0.78$ & $0.81$ & $0.62$ & $0.60$ & $0.59$ & $0.66$ & $0.70$\\
\texttt{Agreement} & $0.85$ & $0.88$ & $0.83$ & $0.85$ & $0.70$ & $0.67$& $0.69$ & $0.71$ & $0.75$\\
\midrule
\texttt{Difference} & {\color{black}0.05} & {\color{black}0.04} & {\color{black}0.04} & {\color{black}0.05} & {\color{red}0.08} & {\color{red}0.07} & {\color{red}0.10} & {\color{black}0.05} & {\color{black}0.05}\\
\bottomrule
\end{tabular}
\label{tab:prediction}
\end{table}

This result suggests that breaking the GDE requires intervening on the covariate distribution in order to change $\frac{t_r}{t_d}$. Thus, our second set of experiments considers \textbf{re-partitioning data}. In particular, we sort CIFAR 10 images by the proportion of blue in their total color intensity and partition them into 5 equally sized groups with increasing blue intensity.
We observed that the accuracy-agreement differences of different data partitions are drastically different, corroborating the prediction made by the theoretical framework. Furthermore, we see that group 0 has the largest difference between accuracy and agreement which according to our theoretical framework suggests that the total number of rare features is larger. Through visual inspection (Figure~\ref{fig:partition}), we can see that in group 1, the examples seem more visually complex and diverse, which could lead to a larger number of rare features (i.e., larger $t_r$).
It is worth noting that, unlike the setting of \citet{kirsch2022note}, each group is still i.i.d. Therefore, the violation of GDE immediately implies that the ensemble is not calibrated on the data partition (Theorem 4.2 of \citet{jiang2022assessing}). We believe this is the first direct, non-adversarial construction of natural datasets where a deep ensemble is not well-calibrated in-distribution from a dataset on which the deep ensemble is usually well-calibrated.
}

\section{Conclusion}
We investigate distributions of features in data and how neural networks perform feature learning. Based on the empirical observations, we propose a new framework for understanding feature learning. We show that the proposed framework is more reflective of reality and can explain other phenomena in deep learning, notably GDE, without making any assumption about calibration. We believe this work provides new insight into our understanding of feature learning and data distribution in deep learning. \yj{The proposed framework could be useful for studying other phenomena related to agreement and ensembles such as calibration~\citep{jiang2022assessing}, phenomena related to distribution shift such as accuracy-on-the-line~\citep{miller2021accuracy} and agreement-on-the-line~\citep{baek2022agreement}, and transfer learning. We discuss the limitations of our framework and some future directions in Appendix~\ref{app:limitations}}. The new empirical tools we introduced can be valuable for other empirical investigations beyond the scope of this work.

\section*{Acknowledgement}
We would like to thank Vaishnavh Nagarajan, Samuel Sokota, Elan Rosenfeld, Saurabh Garg, Jeremy Cohen, and Zixin Wen for the helpful discussion. We also thank Victor Akinwande, Zhili Feng, and Josh Williams for their feedback on an early draft of this work. Yiding Jiang and Christina Baek were supported by funding from the Bosch Center for Artificial Intelligence.

\bibliographystyle{abbrvnat}
\bibliography{reference.bib}
\newpage

\appendix
\newpage
\section{Irreducible Error and Data Scaling}
\label{app:error}
In this section, we discuss the sources of irreducible error in our framework. Concretely, there are two sources of irreducible error:
\begin{enumerate}
    \item \textbf{Inductive bias mismatch}: these are errors that arise from the fact the models fundamentally \textit{cannot} learn some of the features present in the data via conventional training, e.g., stochastic gradient descent.
    \item \textbf{Finite sample error}: these are the errors that arise from insufficient samples size, where the models do not observe all the features in the data.
\end{enumerate}
In both cases, the errors result from the models being unable to learn all the $2(t_d + t_r)$ features in the support of $\mathscr{D}$. We will refer to the percentage of all features that are present in the training data as \textit{coverage} and use $\beta$ to denote it. When irreducible error occurs, in the best case, the best possible model can only learn up to $\beta_d t_d$ dominant features and $\beta_r t_r$ rare features for each class. Further, while we have previously assumed that there are always more features than the model capacity $c$, we will make the a mild but new assumption: \textit{if the model's capacity is larger than the coverage, the model will sample noise for the remaining capacity.} Concretely, the model may be memorizing noise patterns in the data that do not help generalization similar to \citep{allen2020towards}.
We can characterize the expected accuracy when the irreducible error occurs (Proof in Appendix~\ref{app:coverage}).

\begin{lemma}\label{lemma:coverage}
Under the proposed framework, with coverage of $\beta_d$ and $\beta_r$, the expected accuracy is upper-bounded by:
\begin{align}
\label{eqn:beta_acc_1}
    \msf{Acc} \leq p_d\left( 1- \frac{1}{2}\frac{{(1-\beta_d)t_d \choose n_d}}{{t_d \choose n_d}} \right) + p_r\left( 1- \frac{1}{2}\frac{{(1-\beta_r) t_r \choose n_r}}{{t_r \choose n_r}} \right).
\end{align}
\end{lemma}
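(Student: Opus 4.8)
The plan is to re-run the counting that produces \eqref{eqn:acc}, but with the model's learnable features restricted according to the coverage. Since $\msf{Acc}$ is an expectation over both the model distribution and the data distribution, I will condition on whether the test pair $(x,y)$ is dominant (probability $p_d$) or rare (probability $p_r$), work inside a single class throughout by the class-balanced symmetry of the construction, and bound each conditional accuracy separately. The guiding observation is that under coverage $\beta_d$ a model can learn at most $\beta_d t_d$ of the $t_d$ dominant features of a class, and by the stated assumption the rest of its capacity is filled with noise features that lie outside the $t_d$ real features, hence never overlap with any test point.

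The core step is the overlap computation. Recall the prediction rule: $f$ is correct on a fixed pair with probability $1$ if $\Psi(f)\cap\Psi(x)\neq\varnothing$ and with probability $\tfrac12$ otherwise, so the accuracy on that pair equals $1-\tfrac12\,\mathbbm{1}\{\Psi(f)\cap\Psi(x)=\varnothing\}$. For a dominant test point, $\Psi(x)$ is a uniformly random $n_d$-subset of the $t_d$ dominant features of its class and contains no rare features, so only the model's dominant features can create an overlap. Let $S$ be the set of \emph{real} dominant features the model has learned, with $|S|=s\le\min(c_d,\beta_d t_d)$. Then
\begin{align*}
\P\big[\Psi(f)\cap\Psi(x)=\varnothing \,\big|\, \text{dominant}\big] \;=\; \P\big[\Psi(x)\subseteq S^{c}\big] \;=\; \frac{\binom{t_d-s}{n_d}}{\binom{t_d}{n_d}},
\end{align*}
which depends on $S$ only through $s$ and is nonincreasing in $s$ (using that $m\mapsto\binom{m}{n_d}$ is nondecreasing and vanishes for $m<n_d$, per the convention in Proposition~2). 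Since $s\le\beta_d t_d$, this no-overlap probability is at least $\binom{(1-\beta_d)t_d}{n_d}/\binom{t_d}{n_d}$, so the conditional accuracy on dominant data is at most $1-\tfrac12\binom{(1-\beta_d)t_d}{n_d}/\binom{t_d}{n_d}$, with equality in the best case $s=\beta_d t_d$ (i.e. $c_d\ge\beta_d t_d$). The identical argument with $(t_r,n_r,c_r,\beta_r)$ gives the rare-data bound. Combining the two by the law of total probability, $\msf{Acc}=p_d\,\P[\text{correct}\mid\text{dominant}]+p_r\,\P[\text{correct}\mid\text{rare}]$, yields \eqref{eqn:beta_acc_1}.

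The only delicate points are bookkeeping rather than real obstacles: (i) the inequality must point the right way — we take the \emph{largest} admissible $s$ precisely to get an \emph{upper} bound on accuracy; (ii) the regime $c_d<\beta_d t_d$, where coverage is not the binding constraint, is absorbed by the same monotonicity since then $s=c_d$ and $\binom{t_d-c_d}{n_d}\ge\binom{(1-\beta_d)t_d}{n_d}$; and (iii) the degenerate regime $(1-\beta_d)t_d<n_d$, where every test point is guaranteed an overlap, is handled uniformly by the $\binom{\cdot}{\cdot}=0$ convention. Thus the whole argument is the combinatorics behind \eqref{eqn:acc} with the effective learned-feature count capped at $\beta_d t_d$ (resp. $\beta_r t_r$).
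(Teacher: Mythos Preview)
Your proposal is correct and follows essentially the same route as the paper: condition on dominant versus rare data, compute the no-overlap probability $\binom{t_d-s}{n_d}/\binom{t_d}{n_d}$ for a model that has learned $s$ real dominant features, and then use the monotonicity of $m\mapsto\binom{m}{n_d}$ together with $s\le\beta_d t_d$ to replace $t_d-s$ by $(1-\beta_d)t_d$. Your treatment is in fact slightly more careful than the paper's, which handles the case $c_d\ge\beta_d t_d$ only in a footnote, whereas you absorb both regimes uniformly via $s\le\min(c_d,\beta_d t_d)$ and explicitly note the degenerate case $(1-\beta_d)t_d<n_d$.
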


Lemma~\ref{lemma:coverage} provides an upper bound on the expected accuracy under this framework when the models cannot learn all the features. To test the validity of this hypothesis, we simulate different coverage by using training sets of different sizes. Specifically, we use training set size at $5\%$ increment from $5\%$ to $100\%$ on CIFAR 10 and ResNet18.
In Figure~\ref{fig:coverage_acc}, we show the the upperbound in Equation~\ref{eqn:beta_acc_1} as a function of coverage $\beta$ (same value for both $\beta_r$ and $\beta_d$). In Figure~\ref{fig:pct_acc}, we show the test accuracy as the function of training set size.

\begin{figure}[h]
     \centering
     \begin{subfigure}[t]{0.32\textwidth}
         \centering
         \includegraphics[width=\textwidth]{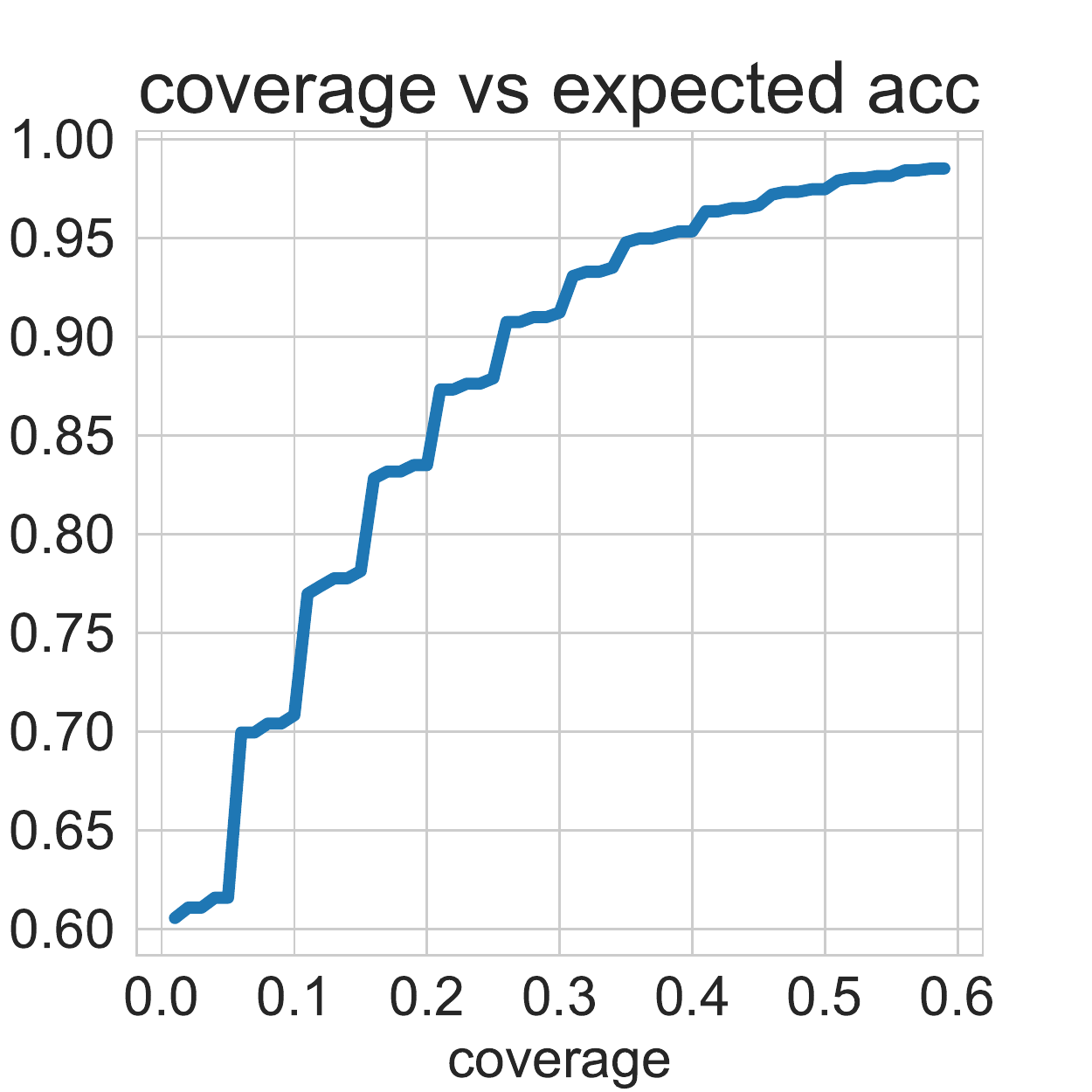}
         \caption{}
         \label{fig:coverage_acc}
     \end{subfigure}
     \hspace{6mm}
     \begin{subfigure}[t]{0.32\textwidth}
         \centering
         \includegraphics[width=\textwidth]{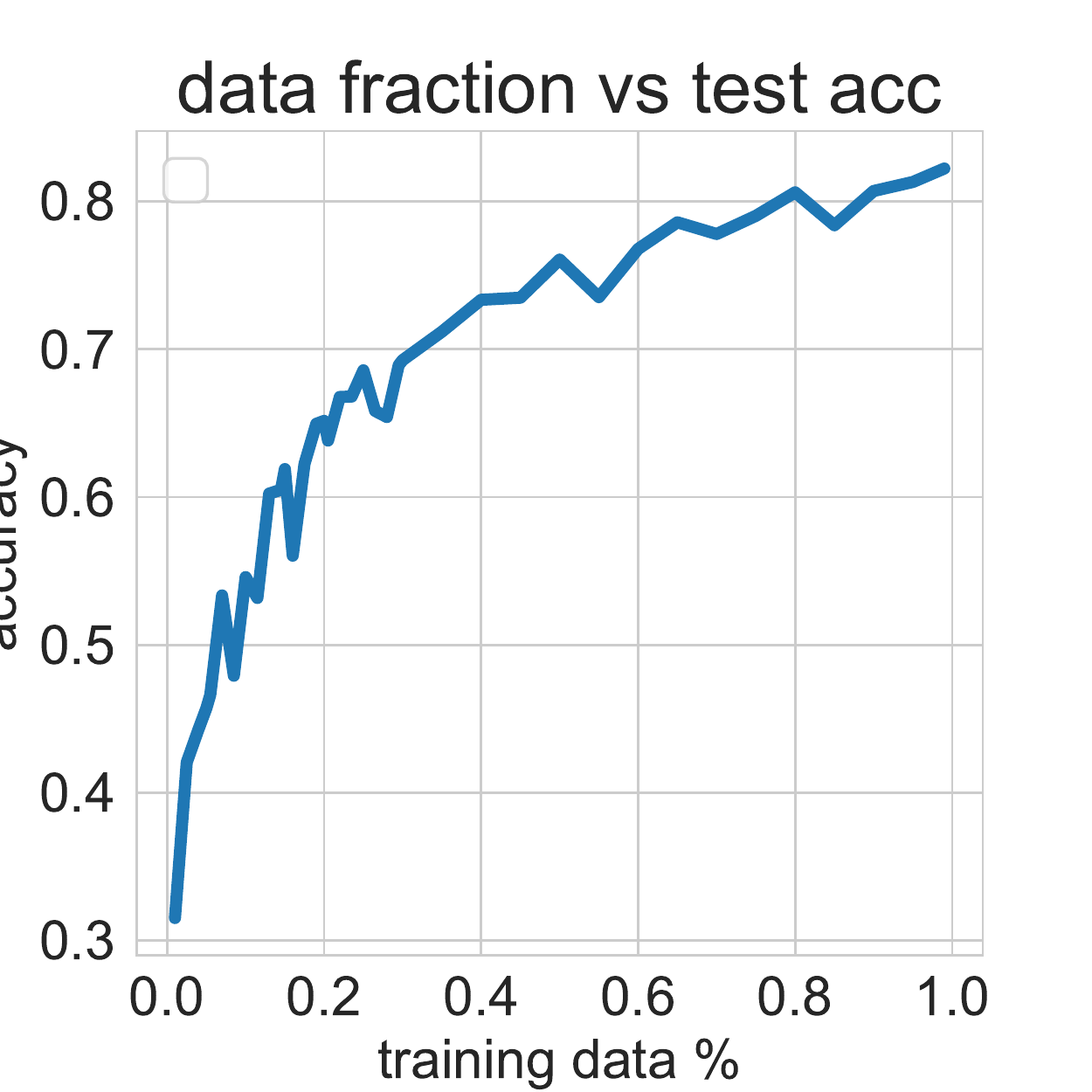}
         \caption{}
         \label{fig:pct_acc}
     \end{subfigure}
        \caption{Plots of the predicted accuracy as a function of coverage (left) and real model accuracy as a function of the percent of training data (right).}
        \label{fig:coverage}
\end{figure}

Note that the lemma describes the \textit{average-case test error} rather than the \textit{worst-case test error} that classical bounds based on uniform convergence describe. Figure~\ref{fig:coverage} shows that varying coverage can approximate the behavior of scaling dataset size~\citep{hestness2017deep}. Nonetheless, we see that some discrepancies between the two plots remain. Most notable is the fact that the test accuracy seems to increase at a faster rate than the accuracy described by the framework when the dataset is small. This difference exists likely because the relationship between training dataset size and coverage is \textit{not linear}. In particular, coverage increases faster when dataset size is small but saturates after dataset size becomes large. One explanation for this phenomenon is that the models learn features differently in presence of different dataset sizes. Our framework currently does not account for this effect but it is a promising direction for future works.

\section{Full Proof}
\label{app:proof}
In this section, we provide the full proof for the theoretical results. For convenience, we repeat the claims here.
\subsection{Expected Accuracy}
\label{app:exp_acc}
\begin{proposition}
Under the proposed model, the expected accuracy over the model distribution and data distribution is:
\begin{align*}
    \msf{Acc} = p_d\left( 1- \frac{1}{2}\frac{{t_d-c_d \choose n_d}}{{t_d \choose n_d}} \right) + p_r\left( 1- \frac{1}{2}\frac{{t_r-c_r \choose n_r}}{{t_r \choose n_r}} \right)
\end{align*}
\end{proposition}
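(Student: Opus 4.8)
The plan is to compute $\msf{Acc}$ directly from the single-model error defined in the framework. Since $\msf{err}(f,x,y)=\tfrac12\mathbbm{1}\{\Psi(f)\cap\Psi(x)=\varnothing\}$, taking expectations over both the model distribution and the data distribution gives $\msf{Acc}=1-\E[\msf{err}(f,x,y)]=1-\tfrac12\,\P[\Psi(f)\cap\Psi(x)=\varnothing]$. So the whole task reduces to evaluating the probability that a freshly drawn model shares no feature with a freshly drawn datum.

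First I would condition on whether $x$ is dominant or rare, which occurs with probabilities $p_d$ and $p_r$. The key structural observation is that the dominant and rare feature pools (of sizes $t_d$ and $t_r$ per class) are disjoint, and by the class-symmetry of the generating process we may fix the class of $x$. Hence if $x$ is dominant it carries $n_d$ features drawn uniformly without replacement from the $t_d$ dominant features of its class, and only the model's $c_d$ dominant features of that class can intersect them — the model's rare features and its features for the other class are irrelevant. Thus, conditioned on $x$ dominant, $\{\Psi(f)\cap\Psi(x)=\varnothing\}$ is exactly the event that a uniformly random $c_d$-subset of a $t_d$-set avoids a fixed $n_d$-subset, which has probability $\binom{t_d-n_d}{c_d}\big/\binom{t_d}{c_d}$. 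The identical argument with $(t_r,c_r,n_r)$ in place of $(t_d,c_d,n_d)$ handles the rare case.

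Next I would invoke the elementary hypergeometric symmetry $\binom{t-n}{c}\big/\binom{t}{c}=\binom{t-c}{n}\big/\binom{t}{n}$ (both sides equal $\tfrac{(t-n)!\,(t-c)!}{t!\,(t-n-c)!}$) to rewrite the two conditional probabilities in the form appearing in the proposition. Combining by the law of total probability, $\P[\Psi(f)\cap\Psi(x)=\varnothing]=p_d\tfrac{\binom{t_d-c_d}{n_d}}{\binom{t_d}{n_d}}+p_r\tfrac{\binom{t_r-c_r}{n_r}}{\binom{t_r}{n_r}}$, and substituting into $\msf{Acc}=1-\tfrac12\,\P[\,\cdot\,]$ together with $p_d+p_r=1$ yields the stated expression after distributing the leading $1$ as $p_d+p_r$.

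There is no serious obstacle; this is a short combinatorial computation. The only points that need care are (i) justifying that cross-type overlaps (the model's rare features against a dominant datum, or vice versa) and features belonging to the wrong class contribute nothing, so the intersection event is governed by a single hypergeometric draw, and (ii) recording the binomial-coefficient identity that lets the final answer be displayed symmetrically in the capacity parameter $c$ and the per-datum count $n$.
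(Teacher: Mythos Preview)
Your proposal is correct and follows essentially the same route as the paper: condition on dominant versus rare, compute the hypergeometric probability that model and datum share no feature, and combine. The only cosmetic difference is that the paper fixes the model's $c_d$ features and samples the datum's $n_d$ features, obtaining $\binom{t_d-c_d}{n_d}/\binom{t_d}{n_d}$ directly, whereas you fix the datum and sample the model, obtaining $\binom{t_d-n_d}{c_d}/\binom{t_d}{c_d}$ and then invoke the symmetry identity; both are equally valid and the extra step is harmless.
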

\begin{proof}
We are interested in computing the expected accuracy over the entire data distribution $\mathscr{D}$ \textit{and} the entire hypothesis distribution $\mathscr{F}_\gA$:
\begin{align}
    \msf{Acc} &= \E_{f \sim \mathscr{F}_\gA} \left[ \E_{(x, y)\sim \mathscr{D}}\left[ \msf{err}(f, x, y)\right] \right]\\
    &= \E_{f \sim \mathscr{F}_\gA} \left[ \E_{(x, y)\sim \mathscr{D}}\left[ \frac{1}{2}\mathbbm{1}\left\{\Psi(f) \cap \Psi(x) = \varnothing \right\}\right] \right] \\
    &=\frac{1}{2} \sP\left[ \Psi(f) \cap \Psi(x) = \varnothing \right]  \\
    &= \frac{1}{2}\sP\left[ \text{ a sampled $f$ and a sampled $x$ do not share any features } \right]
\end{align}
To avoid notational clutter, we will use $\rvf$ and $\rvx$ to denote \textit{a sampled $f$} and  \textit{a sampled $x$}. The probability of interest is thus:
\begin{align}
    \sP\left[ \text{ $\rvf$ and $\rvx$ do not share any features } \right] = 1-\sP\left[ \text{ $\rvf$ and $\rvx$ share at least 1 features } \right]
\end{align}
We first partition the event space into two parts: 1. $\rvx$ is dominant,  2. $\rvx$ is rare. Suppose that $\rvx$ is dominant, we want to compute $\sP\left[ \text{ $\rvf$ and $\rvx$ do not share any features} \mid \rvx \text{ is dominant }  \right]$. Since all $f$'s have equal probability of being sampled (as they contain the same numbers of problematically indistinguishable features), the probability is equivalent to:
\begin{align}\label{eqn:p_not_share}
    &\,\sP\left[ \text{ $\rvf$ and $\rvx$ do not share any features } \mid \rvx \text{ is dominant }  \right] \\
    = &\,\sP\left[ \text{ $\rvx$ does not have a fixed set of $c_d$ features } \mid \rvx \text{ is dominant }  \right]\\
    =& \,\frac{{t_d - c_d \choose n_d}}{{t_d \choose n_d}}.
\end{align}

This is the configuration of $\rvx$ that does not contain the $c_d$ features of $f$. Analogously, we can compute:
\begin{align*}
    \sP\left[ \text{ $\rvf$ and $\rvx$ do not share any features } \mid \rvx \text{ is rare }  \right] = \frac{{t_r - c_r \choose n_r}}{{t_r \choose n_r}}.
\end{align*}
Given the assumption about how models make mistakes, the expected for both parts of the event space:
\begin{align}
    \msf{Acc}_d &=\quad \frac{1}{2} \cdot \sP\left[ \text{ $\rvf$ and $\rvx$ do not share any features } \mid \rvx \text{ is dominant }  \right] \nonumber \\ 
     & \quad \,\, + 1\cdot \sP\left[ \text{ $\rvf$ and $\rvx$ share at least 1 features } \mid \rvx \text{ is dominant }  \right] \\
    &= \frac{1}{2} \frac{{t_d - c_d \choose n_d}}{{t_d \choose n_d}} + 1-\frac{{t_d - c_d \choose n_d}}{{t_d \choose n_d}} = 1-\frac{1}{2} \frac{{t_d - c_d \choose n_d}}{{t_d \choose n_d}}, \\
    \intertext{Analogously we repeat the computation for rare data,}
    \msf{Acc}_r & = 1-\frac{1}{2} \frac{{t_r - c_r \choose n_r}}{{t_r \choose n_r}}. \\
    \intertext{We now compute the expected accuracy over the entire event space,}
    \msf{Acc} &= p_d  \msf{Acc}_d + p_r \msf{Acc}_r\\
    & =p_d\left( 1- \frac{1}{2}\frac{{t_d-c_d \choose n_d}}{{t_d \choose n_d}} \right) + p_r\left( 1- \frac{1}{2}\frac{{t_r-c_r \choose n_r}}{{t_r \choose n_r}} \right)
\end{align}

\end{proof}

\subsection{Expected Agreement}
\label{app:exp_agr}
\begin{proposition}
Under the proposed model, let ${n \choose r}=0$ when $n < 0$, $r < 0$ or $n < r$,
and further define:
\begingroup
\allowdisplaybreaks
\begin{align*}
    q_1 &= p_d \left(1-\frac{{t_h-c_d \choose n_d}}{{t_d \choose n_d}}\right)^2 + p_r \left(1-\frac{{t_r-c_r \choose n_r}}{{t_r \choose n_r}}\right)^2,   \\
    q_2(k) &= \,\,\,\,\, p_d \frac{{t_d-n_d \choose c_d}^2}{{t_d \choose c_d}^2} \left( \sum_{a+b=k} \frac{{c_d \choose a}{t_d-n_d-c_d \choose c_d - a}}{{t_d-n_d \choose c_d}} \frac{{c_r \choose b}{t_r-c_r \choose c_r - b}}{{t_r \choose c_r}} \right) \\
    &\quad + p_r \frac{{t_r-n_r \choose c_r}^2}{{t_r \choose c_r}^2} \left( \sum_{a+b=k} \frac{{c_d \choose a}{t_d-c_d \choose c_d - a}}{{t_d \choose c_d}} \frac{{c_r \choose b}{t_r-n_r-c_r \choose c_r - b}}{{t_r-n_r \choose c_r}} \right),\\
    q_3 & = 1 - q_1 - \sum_{k=1}^c q_2(k),
\end{align*}
\endgroup
the expected agreement between an i.i.d pair of model $(f,g)$ drawn for the model distribution over the data distribution is:
\begin{align*}
    \msf{Agr} = q_1 + \frac{1}{2}q_3 + \sum_{k=1}^c \zeta(k, c) q_2(k) 
\end{align*}

\end{proposition}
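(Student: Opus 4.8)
The plan is to write the expected agreement as $\msf{Agr} = \sP[f(x) = g(x)]$, where the probability is over $f,g \sim \mathscr{F}_\gA$ drawn i.i.d.\ and $(x,y)\sim\mathscr{D}$ drawn independently of them, and to partition the sample space according to how many of the two models share a feature with $x$, refining the ``neither shares'' branch by $k := |\Psi(f)\cap\Psi(g)|$. Let $E_2$ be the event that both $f$ and $g$ share a feature with $x$, $E_1$ that exactly one does, and $E_0$ that neither does. On $E_2$ both models output $y$, so they agree with probability $1$. On $E_1$ one model outputs $y$ and the other guesses uniformly over the two classes, so they agree with probability $\tfrac12$. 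On $E_0$ with $k=0$ the two guesses are independent and uniform, giving agreement $\tfrac12$; on $E_0$ with $k\ge 1$ the framework's correlated-guessing assumption gives agreement $\zeta(k,c)$. Hence $\msf{Agr} = \sP[E_2] + \tfrac12\big(\sP[E_1] + \sP[E_0, k=0]\big) + \sum_{k=1}^c \zeta(k,c)\,\sP[E_0, k]$. Setting $q_1 := \sP[E_2]$ and $q_2(k) := \sP[E_0,\, |\Psi(f)\cap\Psi(g)|=k]$ for $k\ge 1$, and noting that $E_2$, $E_1$, and $E_0\cap\{k=j\}$ for $j=0,\dots,c$ partition the space, we get $\sP[E_1] + \sP[E_0,k=0] = 1 - q_1 - \sum_{k=1}^c q_2(k) = q_3$, which is exactly the claimed identity; it then remains to verify the closed forms of $q_1$ and $q_2(k)$.

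For $q_1$, condition on whether $x$ is dominant (probability $p_d$) or rare (probability $p_r$). Given $x$ dominant, the event ``$f$ shares a feature with $x$'' is the complement of the event analyzed in the proof of the accuracy proposition, hence has probability $1 - \binom{t_d - c_d}{n_d}/\binom{t_d}{n_d}$; since $f$ and $g$ are drawn independently, the probability that both share with $x$ is the square of this, and the rare case is identical with subscript $d$ replaced by $r$. Averaging over $p_d, p_r$ gives the stated $q_1$.

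For $q_2(k)$ with $k\ge 1$, again condition on $x$. If $x$ is dominant it carries $n_d$ dominant features and no rare features, so $E_0$ is the event that each model's $c_d$ dominant class-$y$ features lie entirely among the $t_d - n_d$ features absent from $x$; by independence this has probability $\binom{t_d-n_d}{c_d}^2/\binom{t_d}{c_d}^2$, the prefactor of the dominant term. Conditioned on $E_0$, each model's dominant class-$y$ features are uniform over the $\binom{t_d-n_d}{c_d}$ admissible subsets, each model's rare class-$y$ features are uniform over the $\binom{t_r}{c_r}$ subsets of all rare features, and these four selections are mutually independent; therefore the count $a$ of shared dominant features and the count $b$ of shared rare features are independent, and freezing one model's choice and treating the other as a uniform draw shows $a$ is hypergeometric with pmf $\binom{c_d}{a}\binom{t_d-n_d-c_d}{c_d-a}/\binom{t_d-n_d}{c_d}$ and $b$ hypergeometric with pmf $\binom{c_r}{b}\binom{t_r-c_r}{c_r-b}/\binom{t_r}{c_r}$. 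Summing the product over $a+b=k$ (the convention $\binom{n}{r}=0$ for out-of-range arguments keeps this a finite, well-defined sum) yields the dominant contribution; the rare case is the mirror image, now constraining only the rare class-$y$ features, leaving the dominant ones unconstrained, and producing the $p_r$ term. Averaging over $p_d, p_r$ gives $q_2(k)$, and $q_3$ follows by the complementation identity above.

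The step I expect to be the main obstacle is the conditional-independence analysis inside $q_2(k)$: one has to see clearly that conditioning on $E_0$ restricts only the dominant (resp.\ rare) feature selection when $x$ is dominant (resp.\ rare), that after this conditioning the shared-count of each feature type is an independent hypergeometric obtained by exchangeability, and that the dominant/rare roles must be swapped consistently between the two cases. The rest --- the reduction to the accuracy-proposition computation for $q_1$, and the handling of degenerate binomial coefficients --- is routine bookkeeping.
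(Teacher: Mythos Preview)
Your proposal is correct and follows essentially the same route as the paper. Your partition $E_2,\,E_1,\,E_0$ (with $E_0$ refined by $k$) coincides with the paper's partition into $A$ (both share with $x$), $B$ (neither shares with $x$ but $f,g$ share with each other), and $C$ (the remainder), and your computations of $q_1$ via squaring the ``share'' probability and of $q_2(k)$ via the two independent hypergeometrics, conditioning on the dominant/rare type of $x$, are exactly the paper's arguments; the paper likewise appeals to symmetry (fixing one model and sampling the other) to obtain the hypergeometric pmf.
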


\begin{proof}
We are interested computing the expected disagreement of $(f,g) \sim \mathscr{F}_\gA \times \mathscr{F}_\gA$ over the data distribution $x \sim \mathscr{D}$.
Based on the features in $f$ and $g$, We partition the event space into $3$ subsets:
\begin{itemize}
    \item A: $f$ and $g$ both share features with $x$.
    $$\Big\{\Psi(f) \cap \Psi(x) \neq \varnothing \Big\} \bigcap \Big\{\Psi(g) \cap \Psi(x)\neq \varnothing\Big\}$$
    \item B: $f$ and $g$ do not share any features with $x$ but share features with each other.
    $$\Big\{|\Psi(f) \cap \Psi(g)| \neq \varnothing \Big\} \bigcap \Big\{\Psi(f) \cap \Psi(x) =\varnothing\Big\}  \bigcap \Big\{\Psi(g) \cap \Psi(x) =\varnothing\Big\}$$
    \item C: The rest of the event space. In these events, we have either:
    \begin{itemize}
        \item $f$ and $g$ do not share any features with $x$ or each other.
        \item only one of $f$ and $g$ share features with $x$.
    \end{itemize}
\end{itemize}

\paragraph{Case A.} Since $f$ and $g$ are independent and identically distributed, it suffices to compute the probability of one of them not sharing any features with $x$. We further partition the event space into two part conditioned on whether the data point is dominant or rare (this is possible because $x$ is independent from $f$ and $g$). Following the same logic as Equation~\ref{eqn:p_not_share}:
\begin{align}
    &\sP[\, \Psi(f) \cap \Psi(\rvx) \neq \varnothing \mid \rvx \text{ is dominant } ] \\
    =& 1- \sP[ \Psi(f) \cap \Psi(\rvx) = \varnothing \mid \rvx \text{ is dominant } ] \\
    =& 1- \frac{{t_d-c_d \choose n_d}}{{t_d \choose n_d}} \\
    \intertext{Analogously,}
    &\sP[ \Psi(f) \cap \Psi(\rvx) \neq \varnothing \mid \rvx \text{ is rare } ] 
    =  1- \frac{{t_r-c_r \choose n_r}}{{t_r \choose n_r}}.
\end{align}
By the independence of $f$ and $g$:
\begin{align}
     &\sP[\,A \mid \rvx \text{ is dominant } ] \\
    = &\sP[\Psi(f) \cap \Psi(\rvx) \neq \varnothing  \mid \rvx \text{ is dominant } ]^2 \\
    =& \left(1- \frac{{t_d-c_d \choose n_d}}{{t_d \choose n_d}}\right)^2,\\
    \intertext{and similarly,}
    &\sP[A \mid \rvx \text{ is rare } ] = \left(1- \frac{{t_r-c_r \choose n_r}}{{t_r \choose n_r}}\right)^2.
\end{align}
Putting everything together:
\begin{align}
    q_1 = \sP[A] = p_d \left(1- \frac{{t_r-c_r \choose n_r}}{{t_r \choose n_r}}\right)^2 + p_r \left(1- \frac{{t_r-c_r \choose n_r}}{{t_r \choose n_r}}\right)^2.
\end{align}

\paragraph{Case B.} 
Again, we partition the event space based on dominant and rare data. Then we further partition the even space based on $k$, the number of features that $f$ and $g$ share with each other. First, we compute the probability that both $f$ and $g$ do not share any features with $x$. By independence and equation~\ref{eqn:p_not_share}:
\begin{align}
    &\sP[\Psi(f) \cap \Psi(\rvx) = \varnothing  \,\,\, \text{and} \,\,\, \Psi(g) \cap \Psi(\rvx) = \varnothing \mid \rvx \text{ is dominant } ]\\
    =& \sP[\Psi(f) \cap \Psi(\rvx) = \varnothing \mid \rvx \text{ is dominant } ]^2\\
    =& \frac{{t_d-c_d \choose n_d}^2}{{t_d \choose n_d}^2}.
\end{align}
Conditioned on that $x$ is dominant and that $f$ and $g$ do not share any features with $x$, we now compute the probability where $f$ and $g$ share exactly $k$ features. By symmetry, this probability is equal to the probability of sampling $g$ that shares exactly $k$ features with a \textit{fixed} $f$. Since $f$ and $g$ cannot share any feature with $x$, the total number of dominant features available is $t_d - n_d$. This event space can be further partitioned into disjoint events where $g$ shares exactly $a$ dominant features and $b$ rare features with $f$ for $(a, b) \in \{(0, k), (1, k-1), \dots, (k-1, 1), (k, 0)\}$. Since $g$ always samples $c_d$ dominant features and $c_r$ rare features, the two processes are independent from each other and respectively follow hypergeometric distributions (i.e., marble picking problem):
\begin{align*}
    &\sP\left[ \left|\Psi(\rvf) \cap \Psi(\rvg)\right| = k\mid \Psi(\rvf) \cap \Psi(\rvx) \neq \varnothing  \,\,\, \text{and} \,\,\, \Psi(\rvg) \cap \Psi(\rvx) \neq \varnothing \,\,\, \text{and} \,\,\, \rvx \text{ is dominant } \right]\\
    =& \sum_{a+b = k}\frac{{c_d \choose a}{t_d-n_d-c_d \choose c_d - a}}{{t_d-n_d \choose c_d}} \frac{{c_r \choose b}{t_r-c_r \choose c_r - b}}{{t_r \choose c_r}}.
\end{align*}
The first term in the summation is the density of the hypergeometric distribution for sampling $a$ allowed dominant features, and the second term is the hypergeometric distribution for sampling $b$ allowed rare features.

The same reasoning process can be applied to when $\rvx$ is rare by modifying the available number of rare features to $t_r - n_r$ and keep the available number of dominant features as $t_d$:
\begin{align}
    &\sP[\Psi(f) \cap \Psi(\rvx) \neq \varnothing  \,\,\, \text{and} \,\,\, \Psi(g) \cap \Psi(\rvx) \neq \varnothing \mid \rvx \text{ is rare } ] = \frac{{t_r-c_r \choose n_r}^2}{{t_r \choose n_r}^2} \\
    &\sP\left[ \left|\Psi(\rvf) \cap \Psi(\rvg)\right| = k\mid \Psi(\rvf) \cap \Psi(\rvx) \neq \varnothing  \,\,\, \text{and} \,\,\, \Psi(\rvg) \cap \Psi(x) \neq \varnothing \,\,\, \text{and} \,\,\, \rvx \text{ is rare } \right]\\
    =& \sum_{a+b=k} \frac{{c_d \choose a}{t_d-c_d \choose c_d - a}}{{t_d \choose c_d}} \frac{{c_r \choose b}{t_r-n_r-c_r \choose c_r - b}}{{t_r-n_r \choose c_r}}.
\end{align}
Putting everything together, we arrive at the probability:
\begin{align}
    q_2(k) &= \sP[\left|\Psi(\rvf) \cap \Psi(\rvg)\right| = k  \,\,\, \text{and} \,\,\, \Psi(\rvf) \cap \Psi(\rvx) \neq \varnothing  \,\,\, \text{and} \,\,\, \Psi(\rvg) \cap \Psi(\rvx) \neq \varnothing ] \\
    &=\,\,\,\,\, p_d \frac{{t_d-n_d \choose c_d}^2}{{t_d \choose c_d}^2} \left( \sum_{a+b=k} \frac{{c_d \choose a}{t_d-n_d-c_d \choose c_d - a}}{{t_d-n_d \choose c_d}} \frac{{c_r \choose b}{t_r-c_r \choose c_r - b}}{{t_r \choose c_r}} \right) \\
    &\quad + p_r \frac{{t_r-n_r \choose c_r}^2}{{t_r \choose c_r}^2} \left( \sum_{a+b=k} \frac{{c_d \choose a}{t_d-c_d \choose c_d - a}}{{t_d \choose c_d}} \frac{{c_r \choose b}{t_r-n_r-c_r \choose c_r - b}}{{t_r-n_r \choose c_r}} \right).
\end{align}
Note that there may be cases where the combination is undefined (e.g., $t_d -n_d -c_d < 0$ or $t_d-n_d-c_d < c_d -a$). These cases means that the configurations are impossible to exist, so their corresponding probabilities are $0$. We will define ${n \choose r}=0$ when $n < 0$, $r < 0$ or $n < r$ to handle these cases. The total probability of $B$ is equal to the sum of $q(k)$ from $k=1$ to $c$ since that is equivalent of the event $|\Psi(f) \cap \Psi(g)| > 0$:
\begin{align}
    \sP[B] = \sum_{k=1}^c q_2(k)
\end{align}

\paragraph{Case C.} This event is the complement of $A\cup B$ so:
\begin{align}
    q_3 = \sP[C] = 1 - \sP[A] - \sP[B] = 1-q_1 - \sum_{k=1}^c q_2(k).
\end{align}

In A, we know the models agree with probability $1$. In C, either both models will make a random guess or one model will make a random guess and the other will classify $x$ correctly. In both cases, they will agree with probability $\frac{1}{2}$. In B, we assumed that the probability agreement is modulated by the agreement function $\zeta$ (Section~\ref{sec:toy_model}). Combining these agreement conditions with the probability of A, B, C gives:
\begin{align}
    \msf{Agr} &= 1 \cdot \sP[A] + \frac{1}{2} \cdot \sP[C] + \sum_{k=1}^c q_2(k) \zeta(k, c) \\
    &= q_1 + \frac{1}{2}q_3 + \sum_{k=1}^c q_2(k) \zeta(k, c).
\end{align}
Replacing $q_3$ with $1-q_1 - \sum_{k=1}^c q_2(k)$ and simplify yields the final results.

\end{proof}

\subsection{Coverage Lemma}
\label{app:coverage}

\begin{lemma}
Under the proposed framework, with coverage of $\beta_d$ and $\beta_r$, the expected accuracy is upper-bounded by:
\begin{align}
\label{eqn:beta_acc}
    \msf{Acc} \leq p_d\left( 1- \frac{1}{2}\frac{{(1-\beta_d)t_d \choose n_d}}{{t_d \choose n_d}} \right) + p_r\left( 1- \frac{1}{2}\frac{{(1-\beta_r) t_r \choose n_r}}{{t_r \choose n_r}} \right).
\end{align}
\end{lemma}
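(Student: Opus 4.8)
The plan is to mirror the proof of the expected-accuracy proposition (Appendix~\ref{app:exp_acc}) almost verbatim, replacing the exact count of learnable features by the count available under limited coverage. Recall that in the full-coverage case the model picks $c_d$ dominant features for a class from the $t_d$ available ones, and the quantity that controls the error is the probability that a freshly sampled dominant datum $\rvx$ (which holds $n_d$ of the $t_d$ dominant features) avoids all $c_d$ features of the model, namely $\binom{t_d-c_d}{n_d}/\binom{t_d}{n_d}$. First I would set up the identical decomposition $\msf{Acc}=p_d\,\msf{Acc}_d+p_r\,\msf{Acc}_r$ with $\msf{Acc}_d=1-\tfrac12\,\sP[\Psi(f)\cap\Psi(\rvx)=\varnothing\mid \rvx\text{ dominant}]$ and the analogous expression for rare data, so that everything reduces to bounding the two ``no-shared-feature'' probabilities.

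Next I would invoke the new assumption introduced in Appendix~\ref{app:error}: when coverage is $\beta_d$ (resp.\ $\beta_r$), the best attainable model learns at most $\beta_d t_d$ genuine dominant features (resp.\ $\beta_r t_r$ rare features) for each class, and any leftover capacity is spent on noise that never overlaps with a datum's features. Consequently the effective number of ``useful'' model features for a dominant datum is at most $\beta_d t_d$, so the probability that $\rvx$ misses all of them is at least $\binom{t_d-\beta_d t_d}{n_d}/\binom{t_d}{n_d}=\binom{(1-\beta_d)t_d}{n_d}/\binom{t_d}{n_d}$ — this is just the same hypergeometric ``marble-picking'' count from Eq.~\eqref{eqn:p_not_share} with $c_d$ replaced by (at most) $\beta_d t_d$, and the count is monotone in that parameter. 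The one technical point to state cleanly is monotonicity: $\binom{t-s}{n}/\binom{t}{n}$ is nonincreasing in $s$, so having \emph{at most} $\beta_d t_d$ useful features gives a \emph{lower} bound on the miss-probability, hence an \emph{upper} bound on $\msf{Acc}_d$; the noise features are handled by observing they contribute nothing to $\Psi(f)\cap\Psi(\rvx)$ and so only shrink the relevant intersection. Plugging $\msf{Acc}_d\le 1-\tfrac12\binom{(1-\beta_d)t_d}{n_d}/\binom{t_d}{n_d}$ and its rare-data analogue into the convex combination yields Eq.~\eqref{eqn:beta_acc}.

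The main obstacle is not a calculation but a modeling subtlety: making precise what ``coverage $\beta_d$'' means at the level of a single model and a single datum, and arguing that in the best case the $\beta_d t_d$ learned features and the $n_d$ features of a random dominant datum still behave like a uniform without-replacement sample (so the hypergeometric count applies), rather than being adversarially correlated. I would address this by reading the lemma as an \emph{upper bound over all models consistent with the stated coverage}: since the bound already uses ``at most $\beta_d t_d$ useful features,'' it suffices that some uniform-sampling configuration is feasible and that any other configuration only decreases $\sP[\Psi(f)\cap\Psi(\rvx)\ne\varnothing]$ relative to the full-coverage sampling with $c_d$ replaced by $\beta_d t_d$ — which is exactly the monotonicity statement above. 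Rounding of $\beta_d t_d$ to an integer, and the edge conventions $\binom{n}{r}=0$ for $n<r$, are handled by the same conventions already fixed in Appendix~\ref{app:proof}.
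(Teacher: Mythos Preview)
Your proposal is correct and follows essentially the same route as the paper: decompose $\msf{Acc}=p_d\,\msf{Acc}_d+p_r\,\msf{Acc}_r$, observe that under coverage the number of useful learned dominant (resp.\ rare) features is at most $\beta_d t_d$ (resp.\ $\beta_r t_r$), and then use monotonicity of $\binom{t-s}{n}$ in $s$ to replace $c_d$ by $\beta_d t_d$ in the miss-probability. The paper handles your ``modeling subtlety'' by conditioning on a fixed model and noting that, since the data still sample uniformly from all $t_d$ features, the hypergeometric count $\binom{t_d-c_d}{n_d}/\binom{t_d}{n_d}$ is valid for any fixed feature set of size $c_d$; your treatment of noise features as contributing nothing to $\Psi(f)\cap\Psi(\rvx)$ matches this.
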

\begin{proof}
Lets call the set of all features $\Gamma$ and set of features available for the models to learn $\widehat{\Gamma}$. We can naturally partition them based on dominant and rare features -- $\Gamma_d$ is the set of all dominant features and $\Gamma_r$ is the set of all rare features. By the coverage assumption $|\widehat{\Gamma}_r| = \beta_r |\Gamma_r|$ and $|\widehat{\Gamma}_d| = \beta_d |\Gamma_d|$. 

Notice that having different numbers of features available to the models and data means that the distributions of model sharing features with  conditioned on the data is no longer the identical for different data. The conditional probability changes depending on how many features of the data point is not in $\widehat{\Gamma}$. On the other hand, conditional probability of data point sharing features with a fixed model is the same for all models, because $\widehat{\Gamma} \subseteq \Gamma$ --- No matter what features are in $\Psi(f)$, the probability that a sampled data point does not share any dominant features with it is ${t_d - c_d \choose n_d} / {t_d  \choose n_d}$\footnote{Here we assume the capacity is smaller than the number of available features. If the capacity is larger, then model will learn all available features and the bound is tight.}.
Recall that $c_d$ and $c_r$ represent how many features the model can learn which is upperbounded by $\beta_d t_d$ and $\beta_r t_r$. Since ${n \choose r}$ is monotonically increasing in $n$:
\begin{align}
    \beta_d t_d &\geq c_d \Longrightarrow t_d - \beta_d t_d \leq t_d - c_d \Longrightarrow {(1-\beta_d)t_d \choose n_d} \leq {t_d - c_d \choose n_d},\\
    \intertext{the same can be derived for rare data. Substituting in the expression for accuracy from Equation~\ref{eqn:acc},}
\msf{Acc} &=p_d\left( 1- \frac{1}{2}\frac{{t_d-c_d \choose n_d}}{{t_d \choose n_d}} \right) + p_r\left( 1- \frac{1}{2}\frac{{t_r-c_r \choose n_r}}{{t_r \choose n_r}} \right)\\ &\leq p_d\left( 1- \frac{1}{2}\frac{{(1-\beta_d)t_d \choose n_d}}{{t_d \choose n_d}} \right) + p_r\left( 1- \frac{1}{2}\frac{{(1-\beta_r) t_r \choose n_r}}{{t_r \choose n_r}} \right).
\end{align}
\end{proof}

\section{Further Discussions of the Theoretical Model}
\label{app:theoretical_model}

\subsection{Comparison to prior works}
\label{app:comparison}
An important difference between this model and the multi-view model from~\citet{allen2020towards} is that our model does not treat all features as having the same learning difficulty (i.e., probability of being learned). 
Indeed, the experiments in Section~\ref{sec:exp} show that features demonstrate a wide range of behaviors in terms of how often they occur in the data and how they interact with the models. Another notable difference is that in~\citet{allen2020towards}, the multi-view portion of the dataset contains \textit{all} the features. In reality, the ``easy'' part of the data that a large portion of the models classifies correctly actually contains much fewer features. These observations suggest that having different types of features may be a more accurate description of nature. Nonetheless, we do not describe the exact mechanism of how feature learning actually happens under our model since we are not assuming any particular hypothesis class. Consequently, we do not use the same definition as \citet{allen2020towards} as they adopt a very simplified model of features (i.e., orthogonal vectors in the input space). The spirit of our model of feature learning is close to that of~\citet{allen2020towards} and we believe a similar iterative analysis can be applied to our model.

It is also natural to question whether the simplification where a single feature is sufficient for determining the class is sensible. We believe that this simplification is realistic for a binary classification problem and that using more features in determining the true class may make the model more expressive but should not fundamentally alter the behavior of the system.
Further, the true data distributions are evidently more complex --- dominant data can contain rare features, and, vice versa. In fact, both features and data can lie on a continuous spectrum between ``dominant'' and ``rare'' (Figure~\ref{fig:feat_freq} and \ref{fig:conf_feat}). These changes can be incorporated into the framework by modifying the distribution of features but doing so can increase the complexity of the analysis and require tail-bounds to characterize the system's behavior.

\subsection{Sources of randomness}
\label{app:randomness}
Another assumption we made is that when the model $f$ does not share any feature with a data point $x$, the model will make a random guess. At first look, this seems like a strong assumption that requires the model to make a perfectly random guess. However, recall that we are computing the expectation over the model distribution and the data distribution rather than a single fixed data point. For a single model $f$, its prediction is effectively random if its average prediction over all the distribution of data that do not share features with $f$ is at the chance:
\begin{align}
    \sP_\mathscr{D}[f(\rvx) = \rvy \mid \Psi(f) \cap \Psi(\rvx) = \varnothing] =\E_{(\rvx, \rvy) \sim \mathscr{D}}\left[\mathbbm{1}\{f(\rvx) = \rvy\}\mid \Psi(f) \cap \Psi(\rvx) = \varnothing\right] = \frac{1}{2}.
\end{align}
This means that $f$ can be \emph{completely deterministic} as long as its accuracy over all the data that it doesn't share feature with is random chance. This is in fact the only sensible outcome if we assume that features are indeed what the models use to make predictions. In this case, \emph{the source of randomness comes from the data}, $(\rvx, \rvy) \sim \mathscr{D}$.

We now analyze the case where we hold a single data point $(x,y)$ fixed and generate the source of randomness from the training algorithm $f \sim \mathscr{F}_\gA $ (once again, the individual model can be completely deterministic). When the data point is one with which $f$ does not share features, we cannot expect the models to make independent predictions since the models have similar inductive bias and can make predictions in a correlated manner depending on $x$ (e.g., noise in $x$):
\begin{align}
   \sP_{\mathscr{F}_\gA}[\rvf(x) = y \mid \Psi(\rvf) \cap \Psi(x) = \varnothing] = \E_{\rvf \sim \mathscr{F}_\gA}\left[\mathbbm{1}\{\rvf(x) = y\}\mid \Psi(\rvf) \cap \Psi(x) = \varnothing\right] \ne \frac{1}{2}.
\end{align}
Consequently, the agreement between a pair of models will not be random over the data distribution, and this is exactly what the agreement function tries to model.
\begin{align}
    \sP_{\mathscr{F}_\gA \times \mathscr{F}_\gA}[\rvf(x) = \rvg(x) \mid \Psi(\rvf) \cap \Psi(x) = \varnothing \,\,\text{and}\,\, \Psi(\rvg) \cap \Psi(x) = \varnothing] = \zeta(\mathscr{F}_\gA, x)
\end{align}
In the most general case, $\zeta$ is a function of the hypothesis distribution and a data point $x$, but the ones we used in the main text assume that $\zeta$ is a function of the model's features, since what type of data $x$ is irrelevant if neither models have the features to predict it so we can also drop that dependency. 

\subsection{Extension to multi-class}
\label{app:multi-class}
In order to extend this framework to multi-class, we would first have to decide on how the model makes predictions based on the features it has learned and the features present in the data. In this setting, perfect prediction based on a single feature may no longer be enough since different classes can share features. Instead, one may need to introduce a new function for the probability of correct classification based on the number of shared features between the model and the data point or the probability of making a mistake based on the features. This also means that we cannot no longer assume the model will make a random guess since there are more than one possible wrong class and how the model makes a prediction will depend on the features they share with these wrong classes. Mathematically, this means that $\zeta$ is no longer independent of the data point $x$. The desired quantities are still computable through combinatorics but the added complexity could make the derivation much more complicated and an analytical expression may or may not be attainable, though the problem may be amenable through tail-bounds.

\subsection{Limitations}
\label{app:limitations}
While our theoretical framework is able to explain some previously poorly understood phenomena, some limitations still exist.
Some limitations of the current framework are that the framework does not describe how features are learned mechanistically via optimization and assumes a still simplified dichotomy of features. Future works could try to establish how the feature learning procedure can happen via gradient descent similar to \citet{allen2020towards} or adopt a continuous parameterization of feature distribution instead of a binary one. Another potential avenue for future work is to simplify the currently somewhat complicated expression in order to make the closed-form expressions more interpretable.

\section{Clustering Algorithm}
\label{app:algo}

\yj{
Algorithm~\ref{alg:feature_clustering} iterates over all entries of $\mathbf{\Lambda}$ and assigns each feature to a cluster if its correlation with the members of the cluster exceeds $\gamma_{\text{corr}}$; otherwise, the algorithm creates a new cluster for that feature. One notable property of the greedy clustering algorithm is that it does not generate a fixed number of clusters. This is desirable in this case because if a feature does not have a high correlation with any other features, we would like to isolate it as a unique cluster rather than grouping it together with other features. 

For the number of principal components, we recommend picking the number where after projecting every activation vector onto the principal components, the linear layer can classify the projected representation with approximately the same accuracy as the representation before projection. In our setting, 50 principal components could retain 100\% of the original performance. For $\gamma_\text{corr}$
, we found that the qualitative results are not very sensitive to different values. We experimented with {0.75, 0.80, 0.9, 0.95}, and observed similar results.}
\begin{algorithm}[h]
\caption{ClusterFeatures}\label{alg:feature_clustering}
\begin{algorithmic}[1]
\State \textbf{Input}: $\mathbf{\Lambda}\texttt{[M,M,k,k]}$,
$\gamma_{\text{corr}}$
\State \texttt{Assignment[M,k]} $\leftarrow$ new empty matrix
\State \texttt{Maximum[M,k]} $\leftarrow$ new matrix filled with $-1$
\State \texttt{CurrentFeature} $\leftarrow$ 1
\For{i = 1 to k and j = 1 to m}
    \If{\texttt{Assignment[i,j]} is not empty}
    \State Skip to the next \texttt{j}
    \EndIf
        \State \texttt{Assignment[i,j]} $\leftarrow$ \texttt{CurrentFeature}
        \For{p = 1 to k}
            \State \texttt{CorrMat} $\leftarrow \mathbf{\Lambda}\texttt{[i,p,:,:]}$
            \State \texttt{FeatureRow} $\leftarrow \texttt{CorrMat} \texttt{[j,:]}$ 
            \For{q = 1 to m}
            \If{\texttt{FeatureRow[q]} $>$ \texttt{Maximum[p,q]} and \texttt{FeatureRow[q]} $>$ $\gamma_{\text{corr}}$}
            \State \texttt{Assignment[p,q]} $\leftarrow  \texttt{CurrentFeature}$
            \State \texttt{Maximum[p,q]} $\leftarrow \texttt{FeatureRow[q]}$
            \EndIf
            \EndFor
        \EndFor
        \State \texttt{CurrentFeature} $\leftarrow$ \texttt{CurrentFeature} + 1
\EndFor
\State \textbf{Return} \texttt{Assignment}
\end{algorithmic}
\end{algorithm}

\newpage
\section{Additional Figures, Simulations, and Experiments}
\label{app:fig}

\subsection{Effect of PCA}
\label{app:pca}
\begin{figure*}[h!]
     \centering
    \begin{subfigure}[b]{0.24\textwidth}
         \centering
        \includegraphics[width=\textwidth]{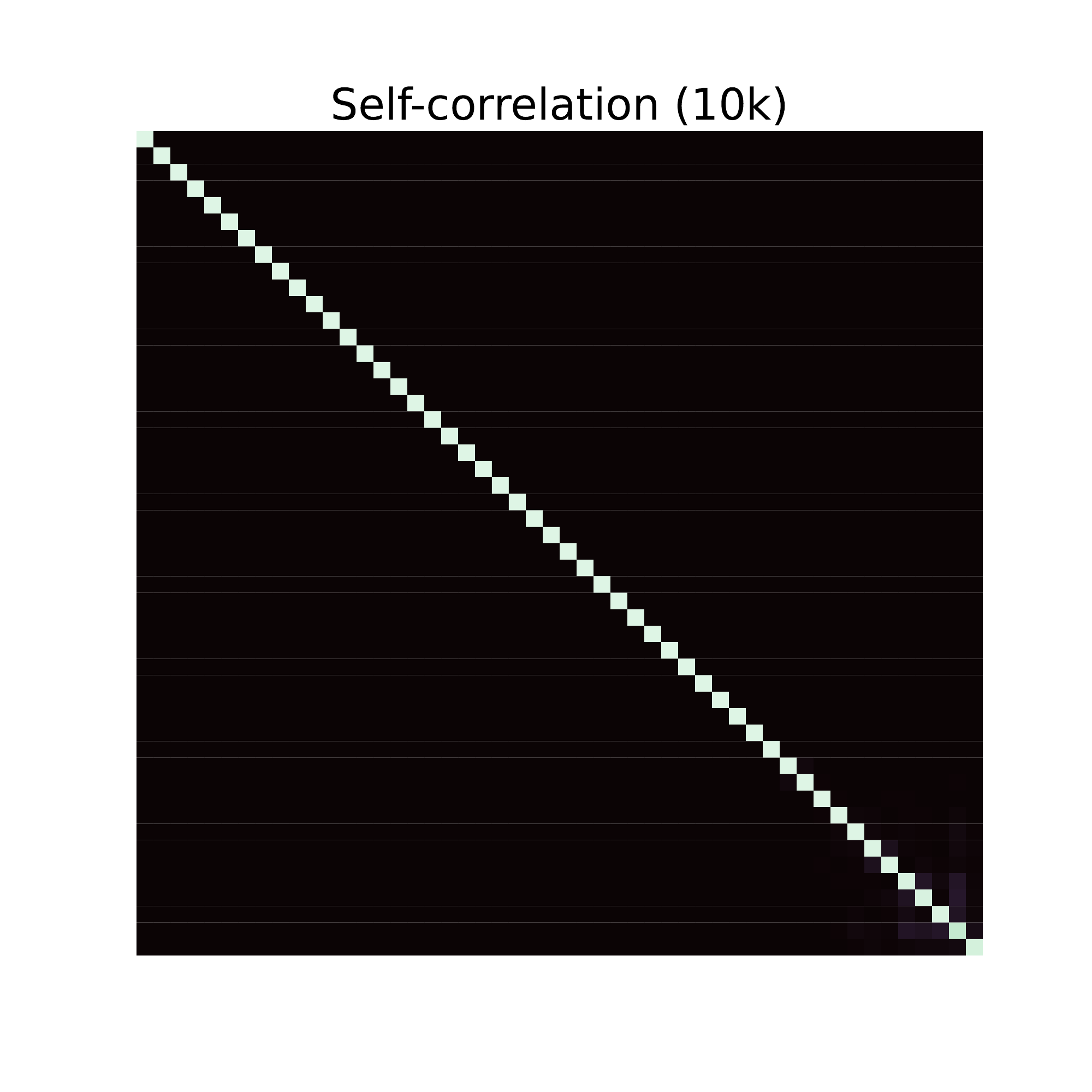}
    \end{subfigure}
    \begin{subfigure}[b]{0.24\textwidth}
         \centering
        \includegraphics[width=\textwidth]{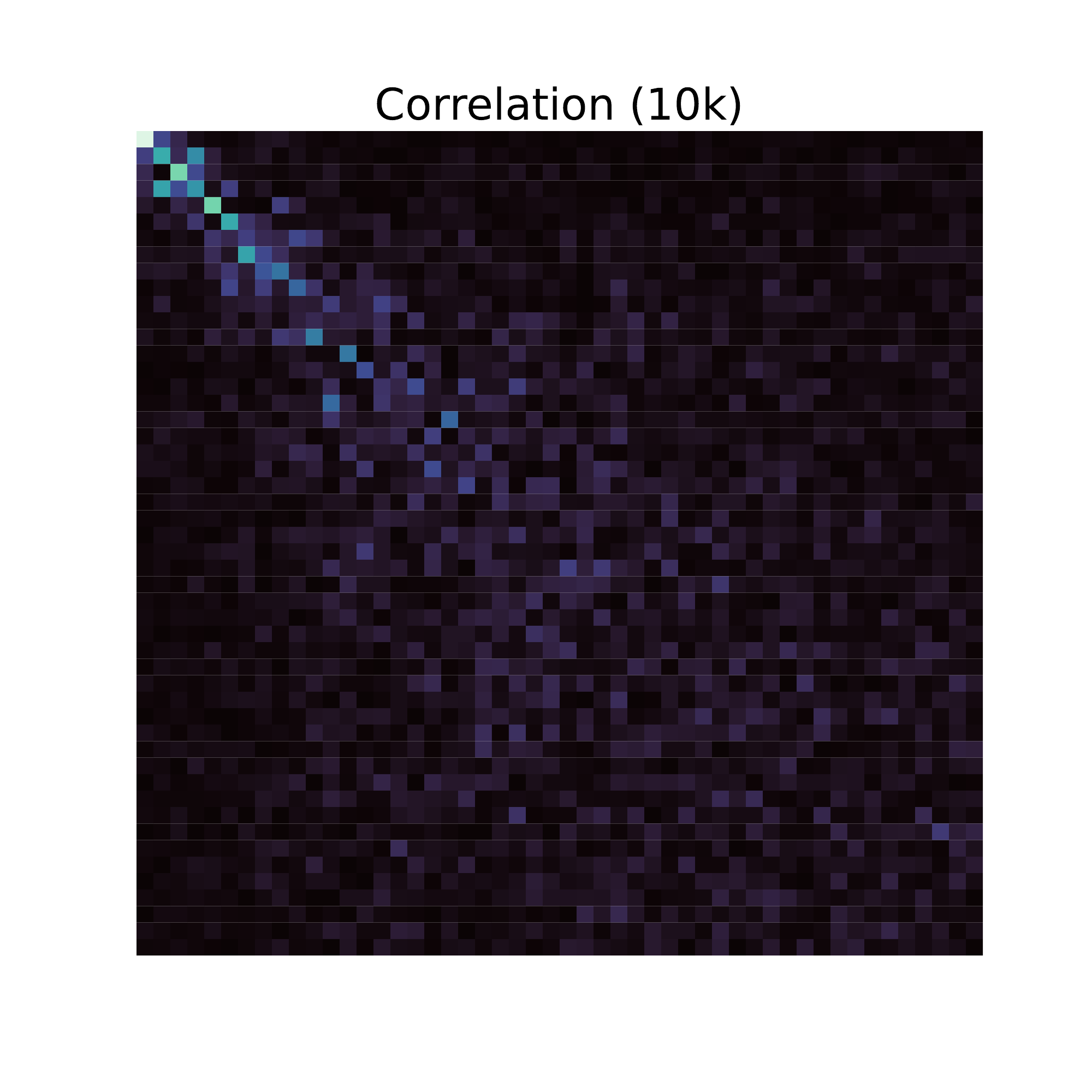}
    \end{subfigure}
    \begin{subfigure}[b]{0.24\textwidth}
         \centering
        \includegraphics[width=\textwidth]{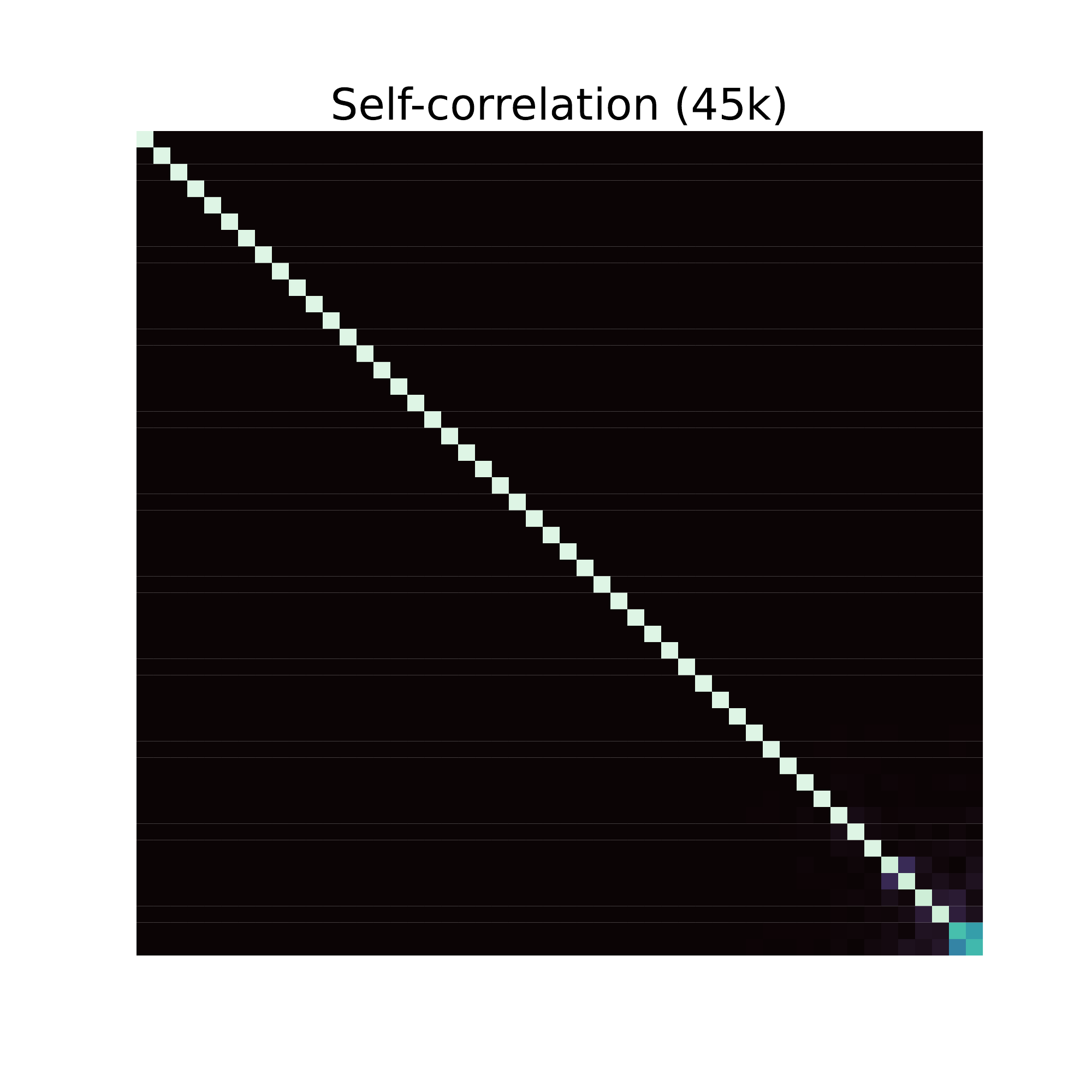}
    \end{subfigure}
    \begin{subfigure}[b]{0.24\textwidth}
         \centering
        \includegraphics[width=\textwidth]{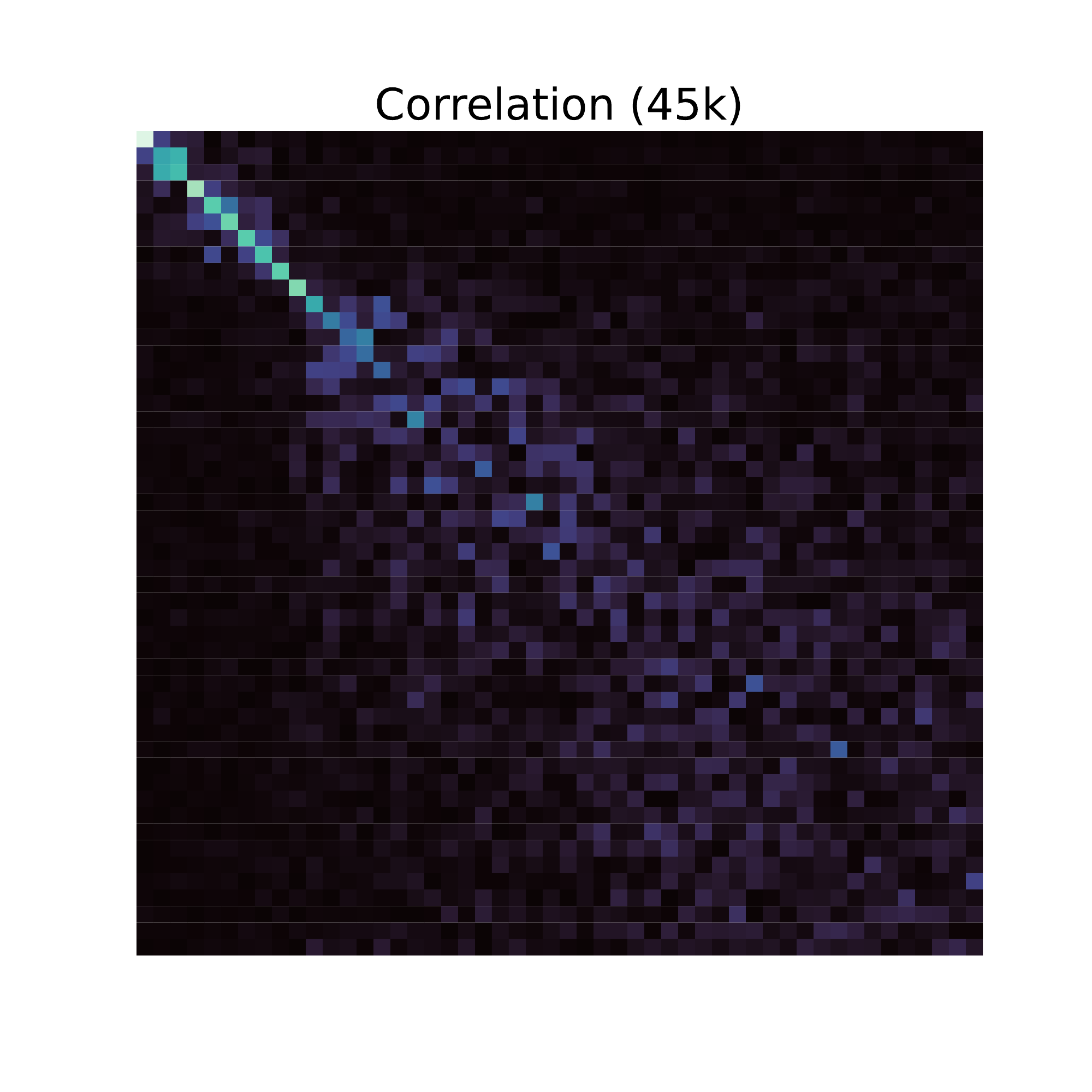}
    \end{subfigure}
     \vspace{-2mm}
    \captionof{figure}{Self-correlation ($\rmK_{i, j}$ where $i=j$) and correlation ($\rmK_{i, j}$ where $i\ne j$) for random models from \texttt{10k} and \texttt{45k}. Both self-correlation matrices contain only diagonal entries, \yj{indicating that the features of the same model contain no redundant information}. On the other hand, the correlation between models in \texttt{45k} exhibits more structure than the models in \texttt{10k}, with the non-zero entries more concentrated around the diagonal (zoom in for better visuals).  
    \yj{This suggests that the features of models learned in \texttt{45k} contain more similar information compared to models learned in \texttt{10k}.}
    }
    \label{fig:correlation}
\end{figure*}

In Figure~\ref{fig:correlation}, we show the correlation matrices, $\rmK_{i,j}$, for different model pairs from \texttt{10k} and \texttt{45k}. The first column shows the self-correlation matrix between the features of the same model. \yj{Both matrices are effectively diagonal which indicates that the principal components represent features with \textit{no redundant information}}. This contrasts with \citet{li2015convergent} where the self-correlation matrices have many off-diagonal entries. Off-diagonal entries for the self-correlation matrix indicate that either there is redundant information or a single feature is distributed across multiple neurons, \yj{which is not desirable for studying unique features.} Another interesting effect of using PCA projected features is that the features are naturally ``aligned'' because the principal components are already sorted by the amount of variance they can explain. We can see that the correlation matrices' entries (especially towards the top features) are naturally more concentrated towards the diagonal. Furthermore, the models with more data and higher test accuracy (\texttt{45k}) have more near diagonal entries. This indicates that the models in \texttt{45k} have learned nearly the same top features. This observation is consistent with \citet{li2015convergent, morcos2018insights} which find that better models tend to learn more similar representations.

\subsection{Empirical Properties of PCA Features}
\label{app:feature_explore}

\paragraph{Features are semantically meaningful} As shown in Figure~\ref{fig:feature_vis}, our feature definitions are semantically meaningful and can be used for identifying common prototypes and rare images in each class. \yj{new experiments on the density of feature in each group? Rare images contain more rare features.} This property of the defined features also allows us to find semantically similar images in the dataset. To do so, we first define a similarity metric between two images:
\begin{align}
    s(\vx_1, \vx_2) = \frac{2\,|\Psi(\vx_1) \cap \Psi(\vx_2)|}{|\Psi(\vx_1)|+|\Psi(\vx_2)|}.
\end{align}
This function intuitively computes the overlap of features between two images normalized by their total number of features. For any given image $\vx$, we can compute the similarity of $\vx$ and the entire dataset and find the ones with the highest similarities. In Figure~\ref{fig:similarity}, we show the nearest neighbors of a random sample of images. We can see that our metric is able to identify semantically similar neighbors for each image even if the images are not close in pixel space.

\begin{figure}[h!]
     \centering
         \begin{subfigure}[b]{0.9\textwidth}
         \centering
    \includegraphics[width=\textwidth]{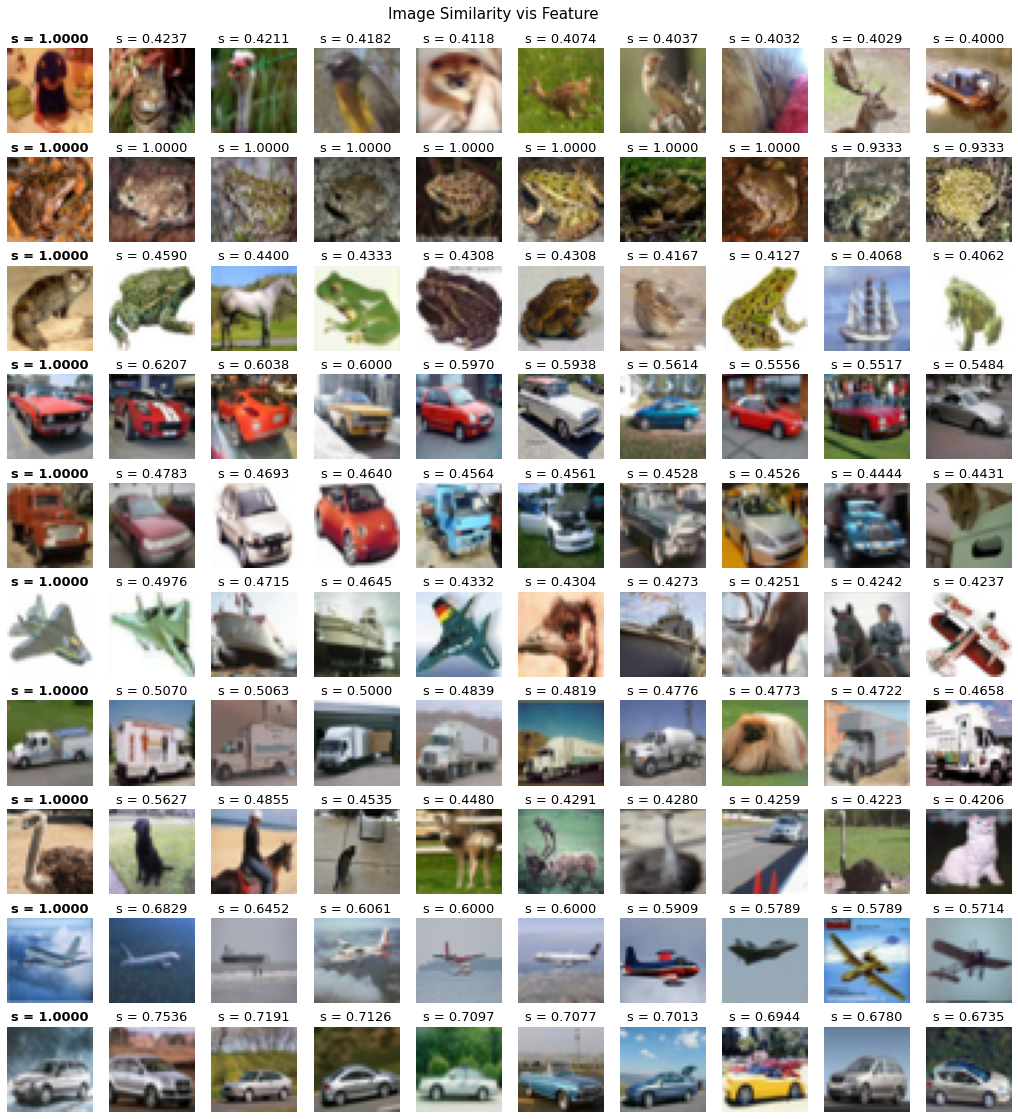}
    \end{subfigure}
    \caption{Nearest neighbors of random images measured by feature overlap. The leftmost image is the base image and the row contains its nearest neighbors. The similarity score is shown above each image. We can see that feature overlap can reliable capture the semantic similarities between different images even if the distance in pixels space is large. This phenomenon is particularly obvious in the second row where we can see that several distinct images of frogs have 100\% feature overlap.}
    \label{fig:similarity}
\end{figure}

Note that for the second row of Figure~\ref{fig:similarity}, the first 7 neighbors have 100\% feature overlap. The property of our definition of feature may be of independent interest to other applications.

\textbf{Individual features do not correspond to particular classes} It may be tempting to think that individual features may correspond to individual classes. In the extreme case, this would reduce to neural collapse~\citep{papyan2020prevalence} (which only happens after the model has been trained for an extremely long time). We find that this is not the case. Instead, individual features do not correspond to any particular classes (Figure~\ref{fig:feat_data}). To illustrate this point further, we plot the frequency at which the top features appear in each class, and observed that the dominant features often appear in many different classes with different frequencies and would be missing from only one or two classes (Figure~\ref{fig:feat_per_class}). This suggests that individual features can represent multiple ``concepts'' in the data but combinations of several features are much more interpretable (Figure~\ref{fig:similarity}).

\begin{figure}[h!]
     \centering
     \begin{subfigure}[t]{0.67\textwidth}
         \centering
         \includegraphics[width=\textwidth]{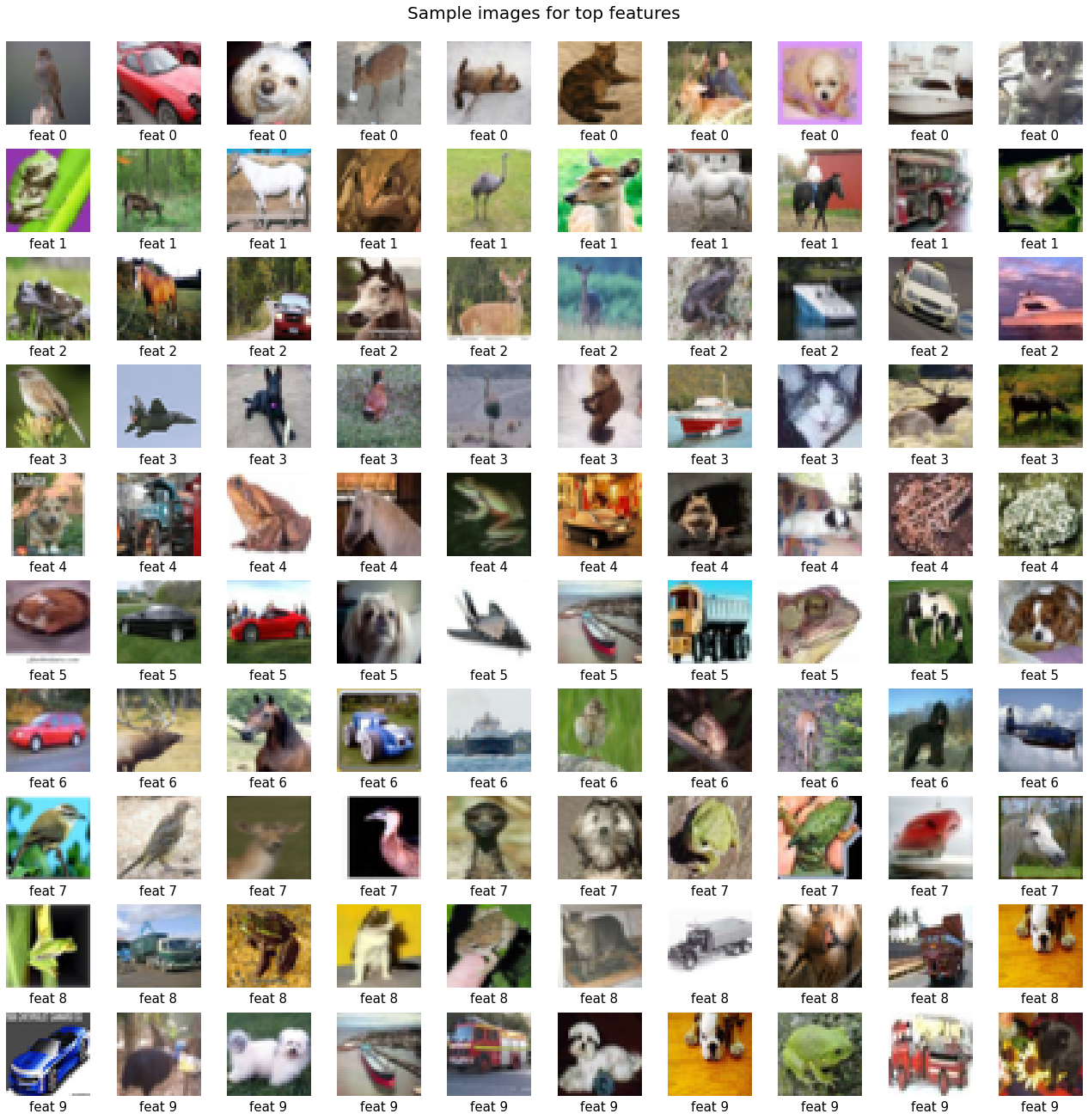}
     \end{subfigure}
     \begin{subfigure}[t]{0.67\textwidth}
         \centering
         \includegraphics[width=\textwidth]{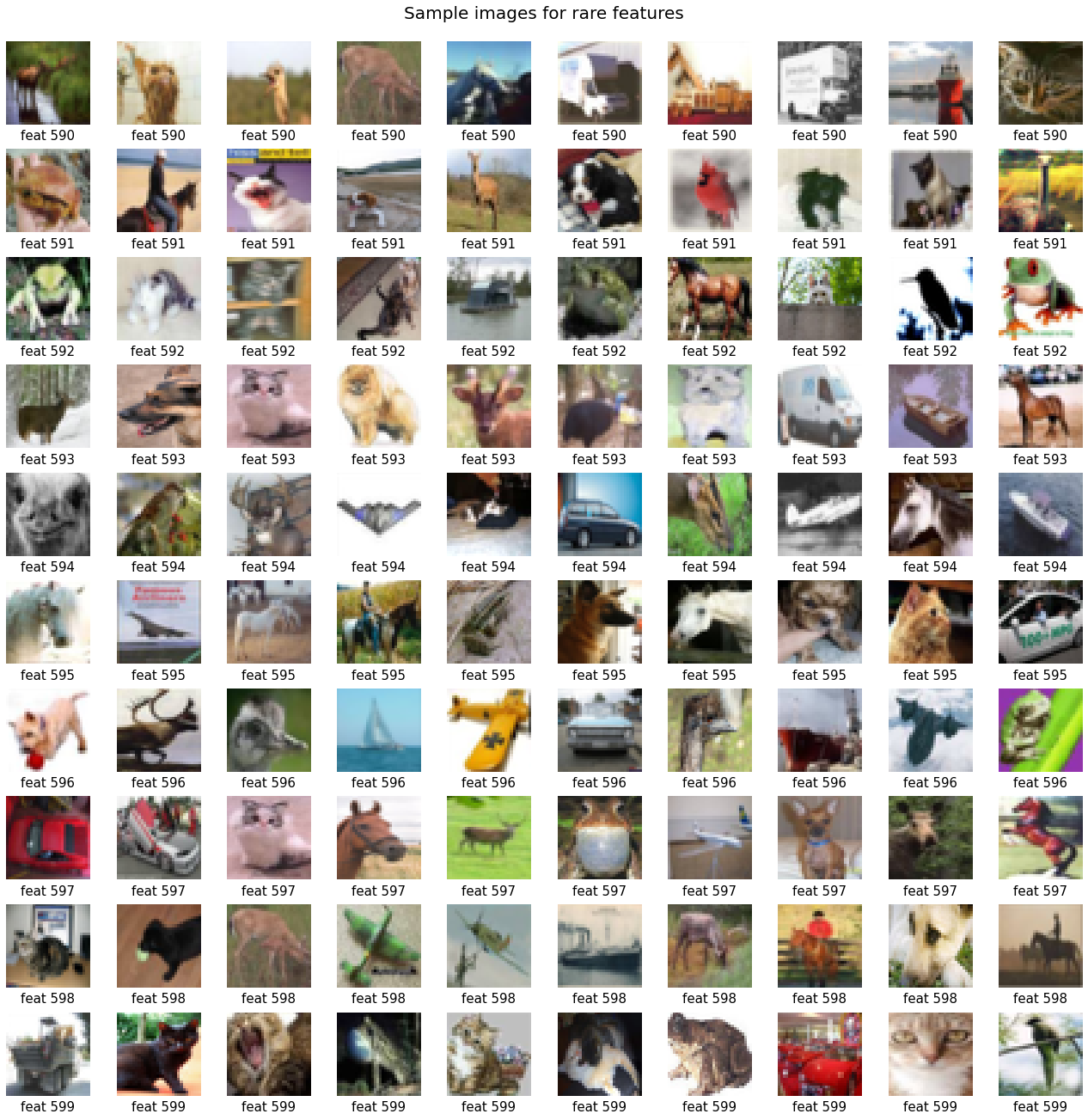}
     \end{subfigure}
        \caption{Random data points containing individual features. The top figure shows the sample images for dominant features and the bottom figure shows sample images for rare features. Neither shows obvious patterns, although images with dominant features do seem to be visually less complex.}
        \label{fig:feat_data}
\end{figure}

\begin{figure}[h]
     \centering
     \begin{subfigure}[t]{0.9\textwidth}
         \centering
    \includegraphics[width=\textwidth]{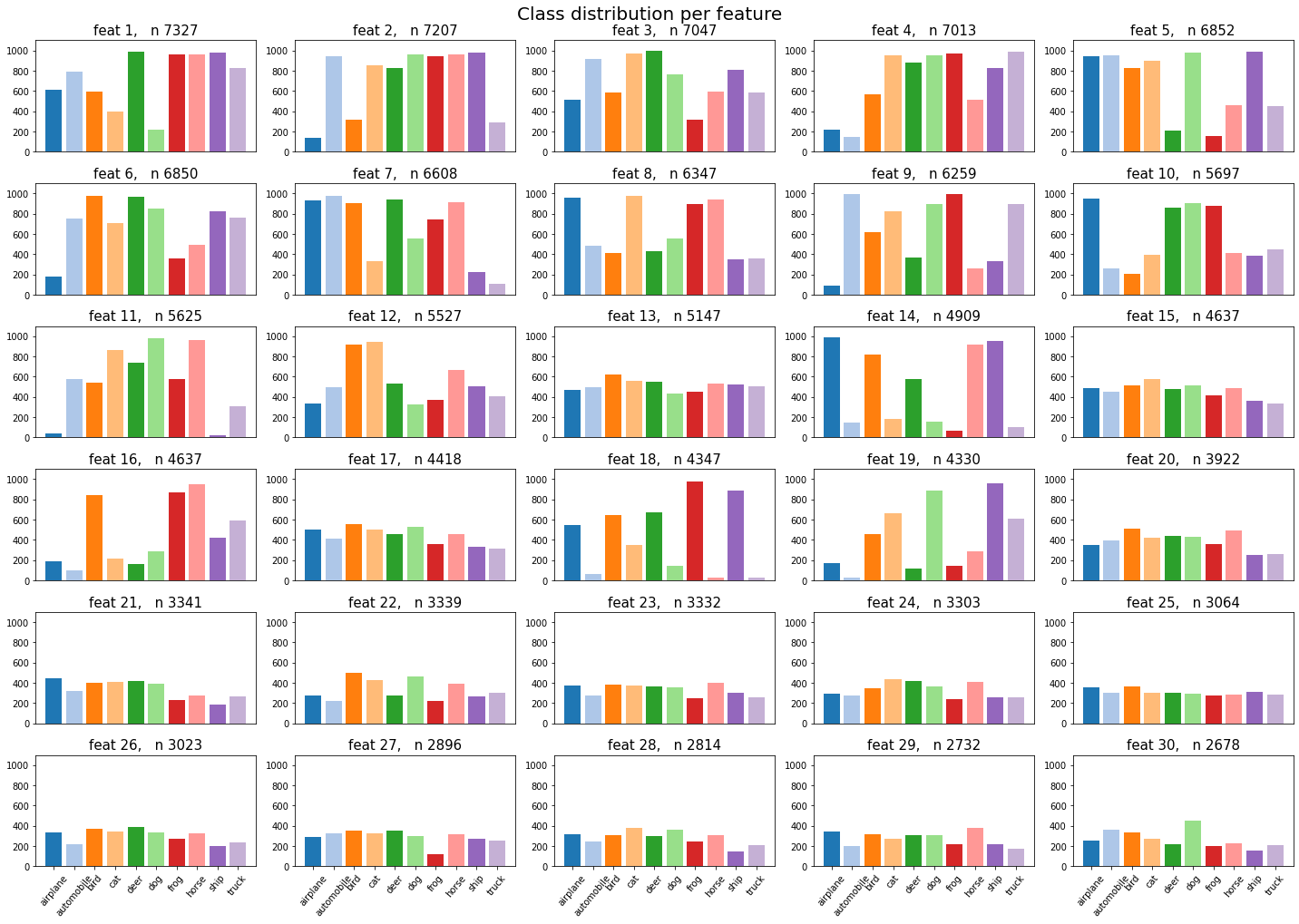}
     \end{subfigure}
        \caption{Frequency of top feature in each class. \texttt{n} is the feature's total number of occurrences.}
        \label{fig:feat_per_class}
\end{figure}

This is perhaps not too surprising since in general we cannot expect the models to learn features that humans consider to be good features. After all, the appeal for using neural networks is the difficulty of designing hand-engineered features. Future works could investigate these observations further.

\begin{table}[t]
\small
\centering
\begin{tabular}{c|cccc|ccccc}
\toprule
& $\widetilde{K}=2$ & $\widetilde{K}=3$ &  $\widetilde{K}=5$ & $\widetilde{K}=10$ & $G_1$ & $G_2$ & $G_3$ & $G_4$ & $G_5$ \\
\midrule
\texttt{Acc} & $0.80{\scriptstyle \pm \, 0.01}$ & $0.84{\scriptstyle \pm \, 0.01}$ & $0.78{\scriptstyle \pm \, 0.01}$ & $0.81{\scriptstyle \pm \, 0.02}$ & $0.62{\scriptstyle \pm \, 0.02}$ & $0.60{\scriptstyle \pm \, 0.01}$ & $0.59{\scriptstyle \pm \, 0.03}$ & $0.66{\scriptstyle \pm \, 0.02}$ & $0.70{\scriptstyle \pm \, 0.03}$\\
\texttt{Agr} & $0.85{\scriptstyle \pm \, 0.06}$ & $0.88{\scriptstyle \pm \, 0.05}$ & $0.83{\scriptstyle \pm \, 0.08}$ & $0.85{\scriptstyle \pm \, 0.07}$ & $0.70{\scriptstyle \pm \, 0.13}$ & $0.67{\scriptstyle \pm \, 0.15}$& $0.69{\scriptstyle \pm \, 0.15}$ & $0.71{\scriptstyle \pm \, 0.13}$ & $0.75{\scriptstyle \pm \, 0.11}$\\
\midrule
\texttt{Diff} & {\color{black}0.05} & {\color{black}0.04} & {\color{black}0.04} & {\color{black}0.05} & {\color{red}0.08} & {\color{red}0.07} & {\color{red}0.10} & {\color{black}0.05} & {\color{black}0.05}\\
\bottomrule
\end{tabular}
\vspace{1mm}
\caption{\yj{Average accuracy and agreement on datasets with different interventions. $\widetilde{K}$ represents different number of superclasses and $\widetilde{K}=10$ is the original CIFAR10. $G_i$ presents the data in the $(i-1)\times 20^\text{th}$ to $i\times 20^\text{th}$ percentile of blue intensity. The difference between accuracy and agreement is approximately the same for different $\widetilde{K}$'s (thus GDE holds) but not for different $G_i$'s. The uncertainty denotes standard deviation.}}
\vspace{-8mm}
\label{tab:prediction_uncertinaty}
\end{table}

\paragraph{Features capture prototypical examples.} We also observed (Figure~\ref{fig:prototype}) that our definition of features can recover the notion of \emph{prototypical examples} observed in \citet{carlini2019prototypical, jiang2020characterizing}. In particular, the images with the least features seem to correspond to the prototypical examples (images where the objects are presented in a canonical way) whereas the images with the most features seem to correspond to non-prototypical examples (images where the objects are presented in a rare way). This means that these prototypical examples usually contain much fewer (dominant) features whereas the non-prototypical examples contain much more rare features. Exploring these connections would be an interesting future direction.

\begin{figure}[h]
     \centering
    \begin{subfigure}[b]{0.95\textwidth}
         \centering
        \includegraphics[width=\textwidth]{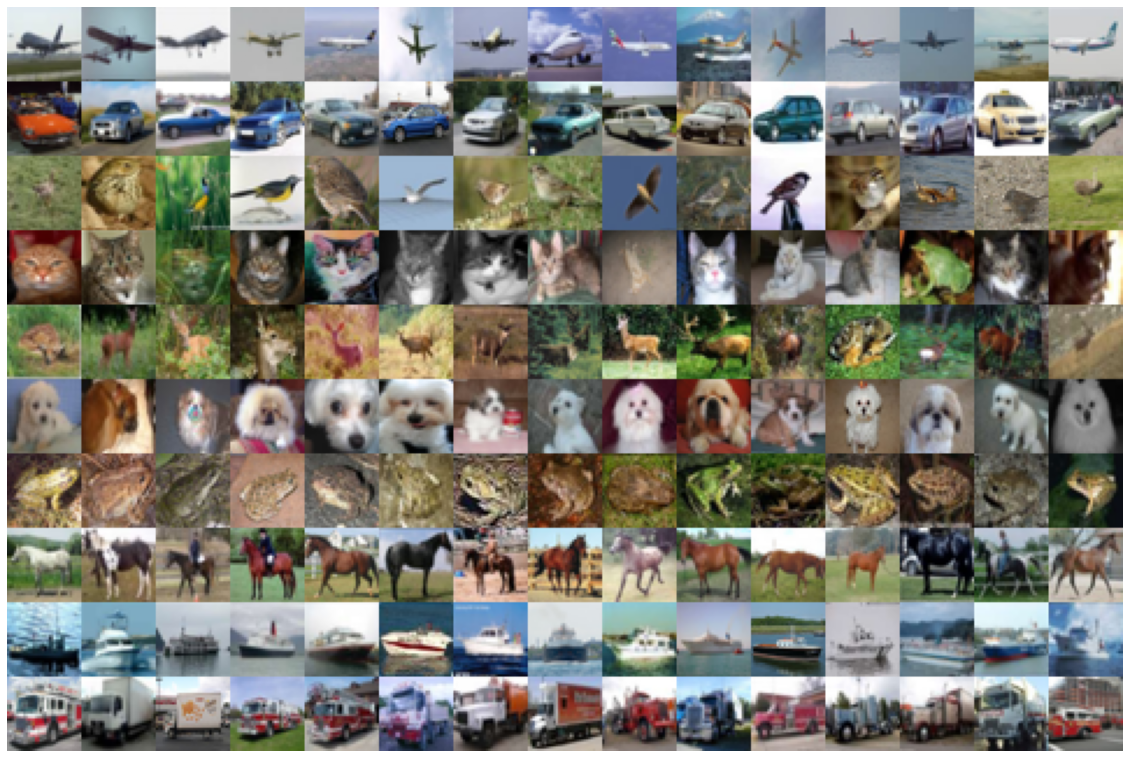}
    \end{subfigure}
    \begin{subfigure}[b]{0.95\textwidth}
         \centering
        \includegraphics[width=\textwidth]{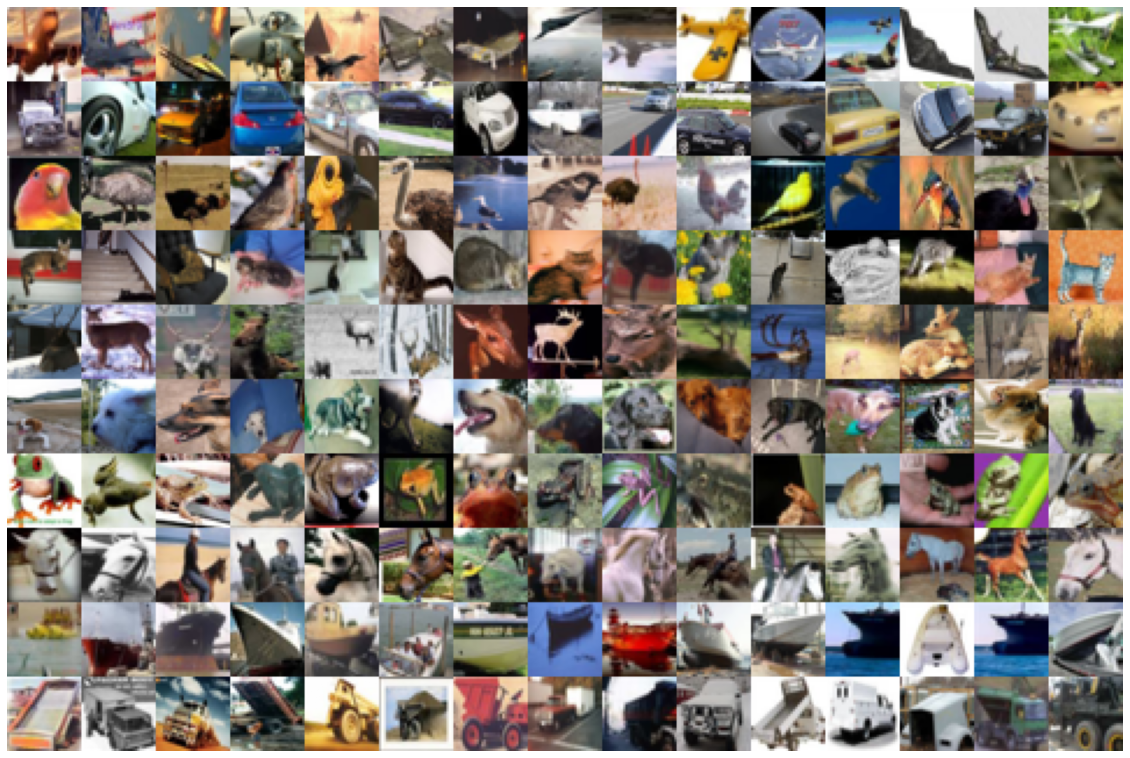}
    \end{subfigure}
    \caption{\yj{Visualization of images with \textit{the least features} (top) and \textit{the most features} (bottom) for each class of CIFAR 10 under our feature definition (defined in Section~\ref{sec:pca}). Each row corresponds to one class of CIFAR 10 (zoom in for better viewing quality). .}
    }
    \label{fig:prototype}
    \vspace{-5mm}
\end{figure}

\newpage
\subsection{CIFAR-10 10k Subset Experiments}
\begin{figure}[h]
     \centering
     \begin{subfigure}[t]{0.32\textwidth}
         \centering
 \includegraphics[width=\textwidth]{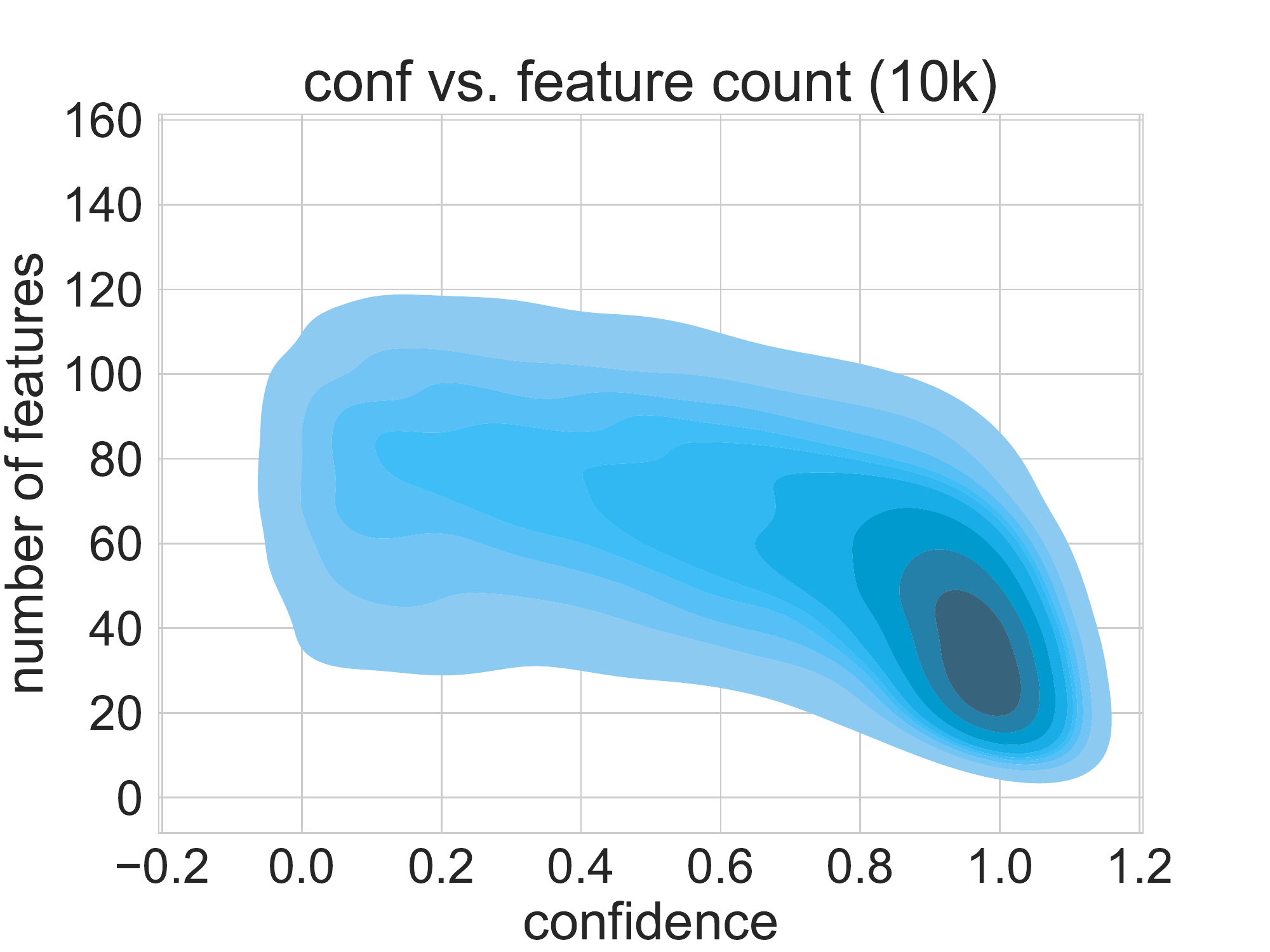}
         \caption{}
         \label{fig:10k_feat_count}
     \end{subfigure}
     \hfill
     \begin{subfigure}[t]{0.32\textwidth}
         \centering
         \includegraphics[width=\textwidth]{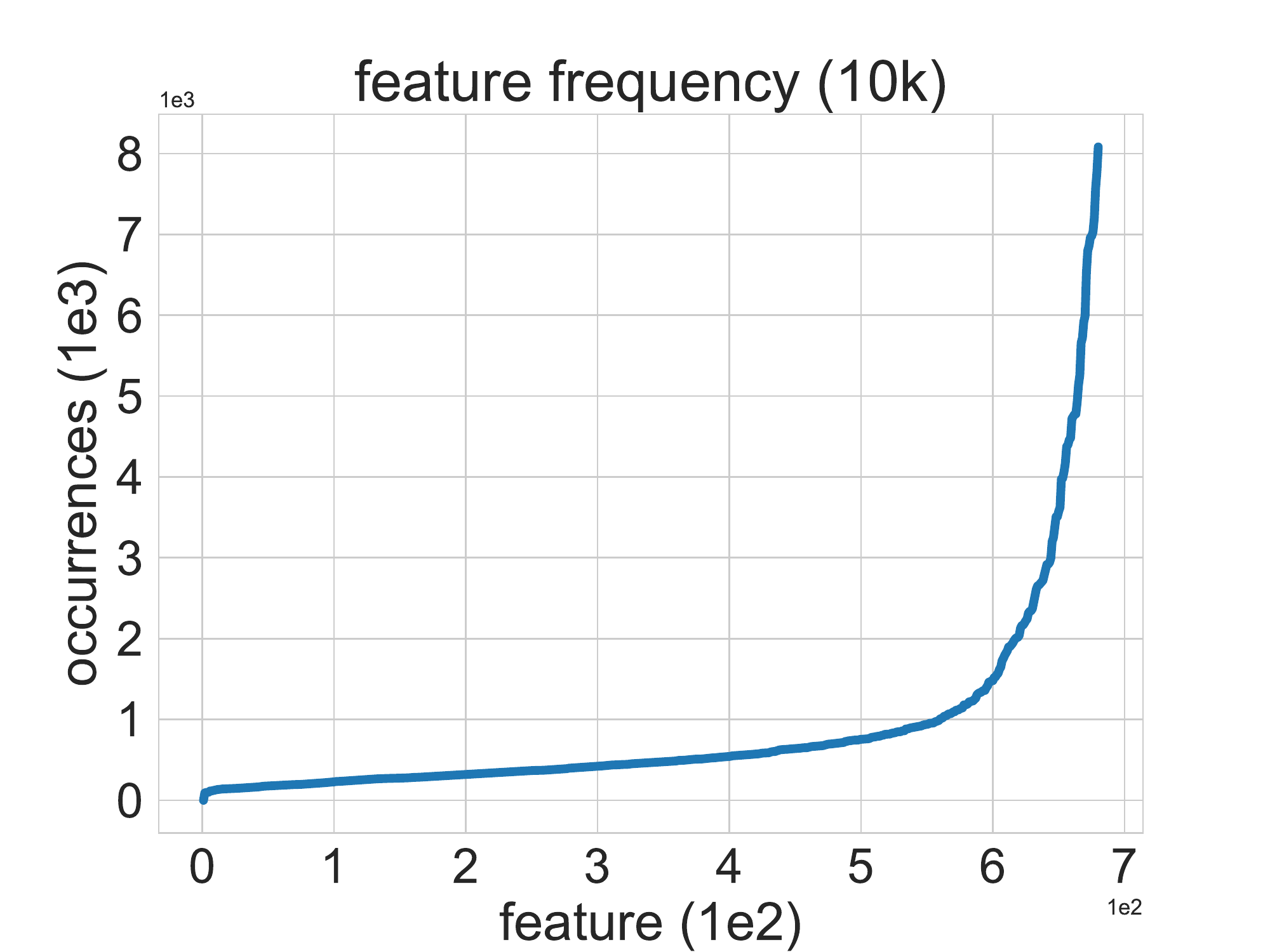}
         \caption{}
         \label{fig:10k_feat_freq}
     \end{subfigure}
     \hfill
     \begin{subfigure}[t]{0.32\textwidth}
         \centering
         \includegraphics[width=\textwidth]{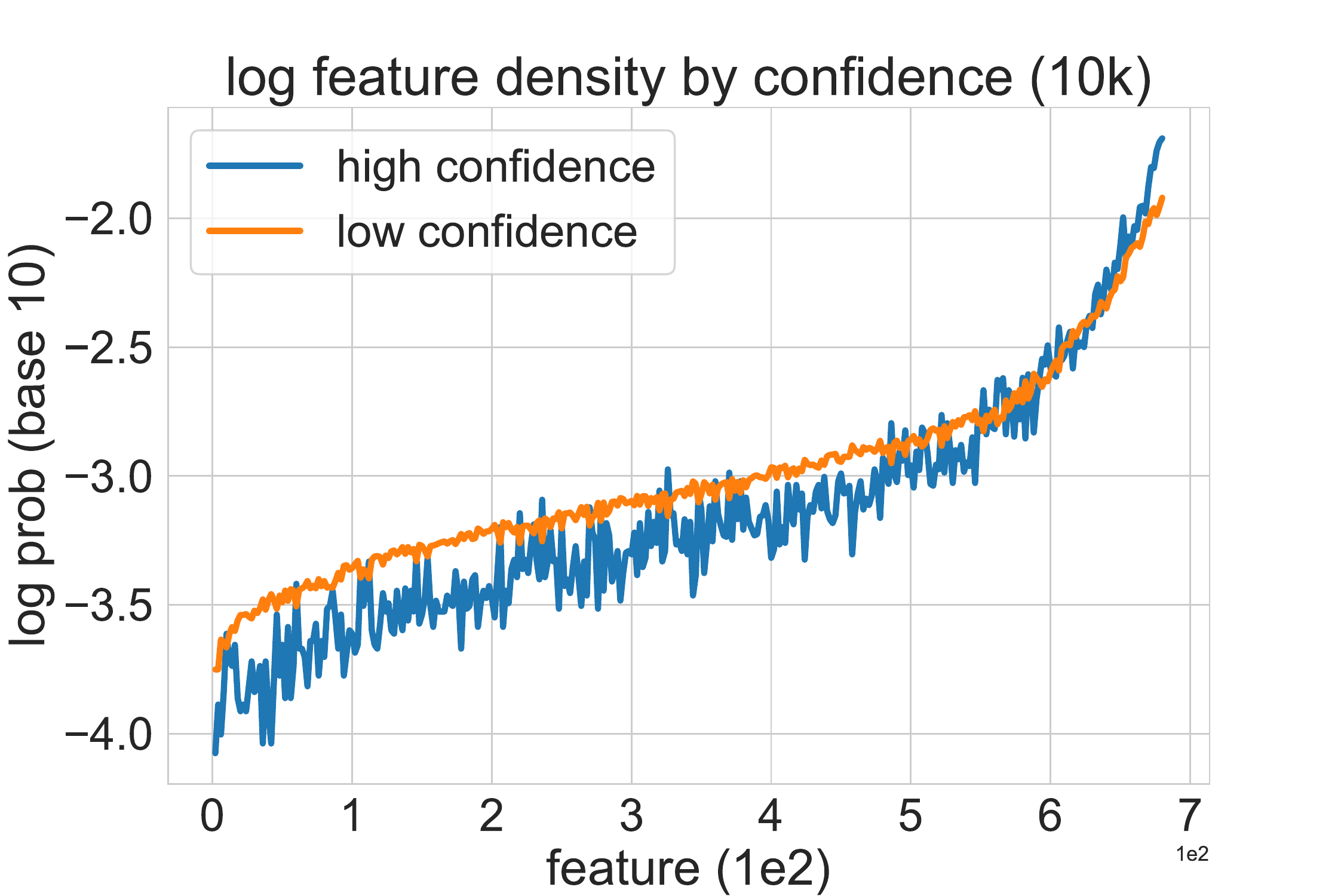}
         \caption{}
         \label{fig:10k_feat_freq_conf}
     \end{subfigure}

     \begin{subfigure}[t]{0.32\textwidth}
         \centering
         \includegraphics[width=\textwidth]{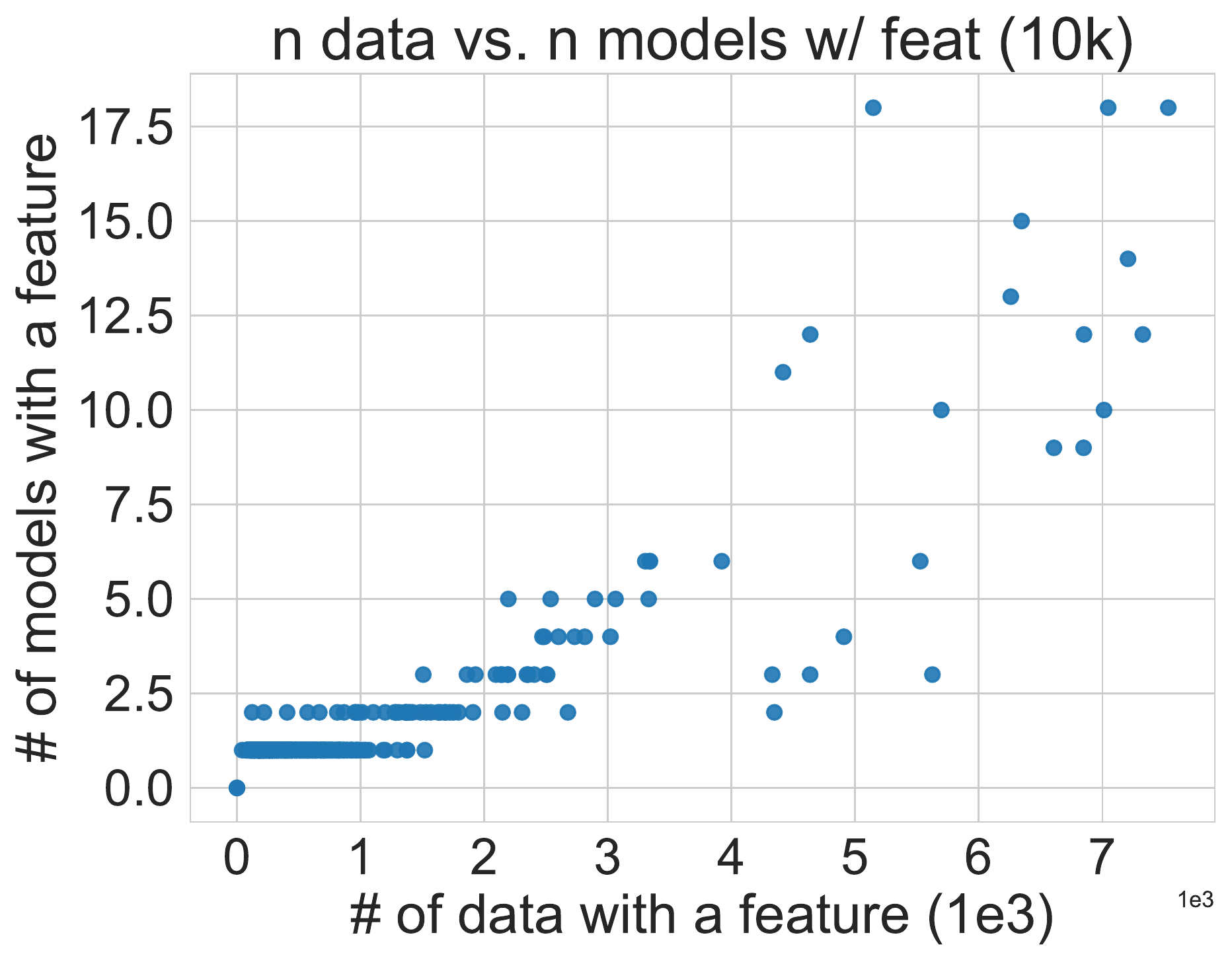}
         \caption{}
         \label{fig:10k_model_data_feat}
     \end{subfigure}
     \hspace{5mm}
     \begin{subfigure}[t]{0.32\textwidth}
         \centering
         \includegraphics[width=\textwidth]{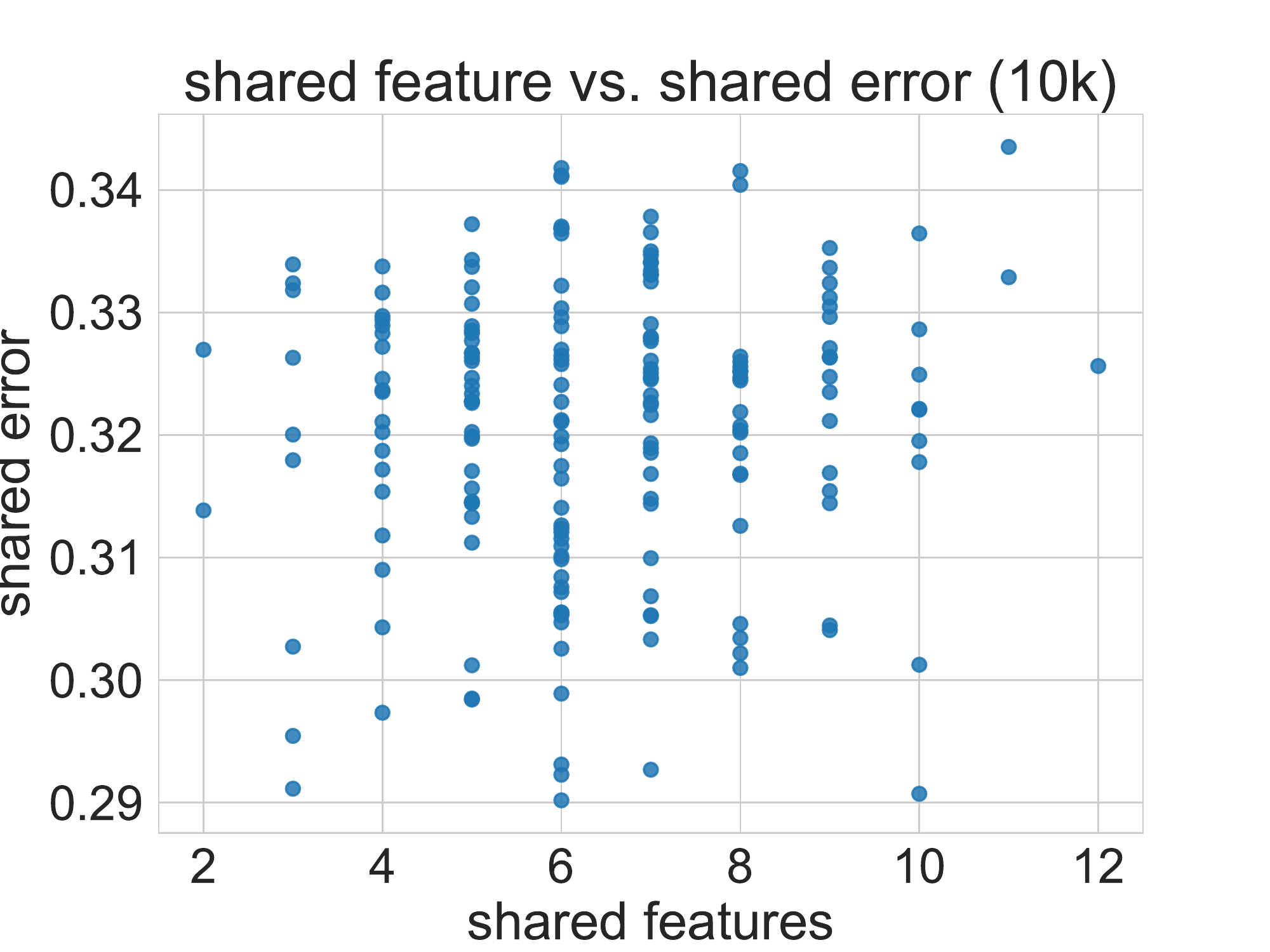}
         \caption{}
         \label{fig:10k_shared_err}
     \end{subfigure}

        \caption{Same set of plots for \texttt{10k}  models.}
        \label{fig:10k}
\end{figure}

For \texttt{10k} models, we see that the observations are largely consistent with the observations of \texttt{45k}. It is worth noting that in Figure~\ref{fig:10k_shared_err}, the shared errors are generally smaller than Figure~\ref{fig:similarity_vs_error} and the shared errors also exhibit more variance. This may be due to the fact that when the models have low performance, their agreement behaves more randomly rather than how the agreement of \texttt{45k} behaves. We will see in Appendix~\ref{app:diff_arch} that when the collection of models have different architecture, an even strong correlation between number of shared features and amount of shared error is observed. 

\newpage
\subsection{SVHN Experiments}
\label{app:fig-svhn}

\begin{figure}[h]
     \centering
     \begin{subfigure}[t]{0.32\textwidth}
         \centering
         \includegraphics[width=\textwidth]{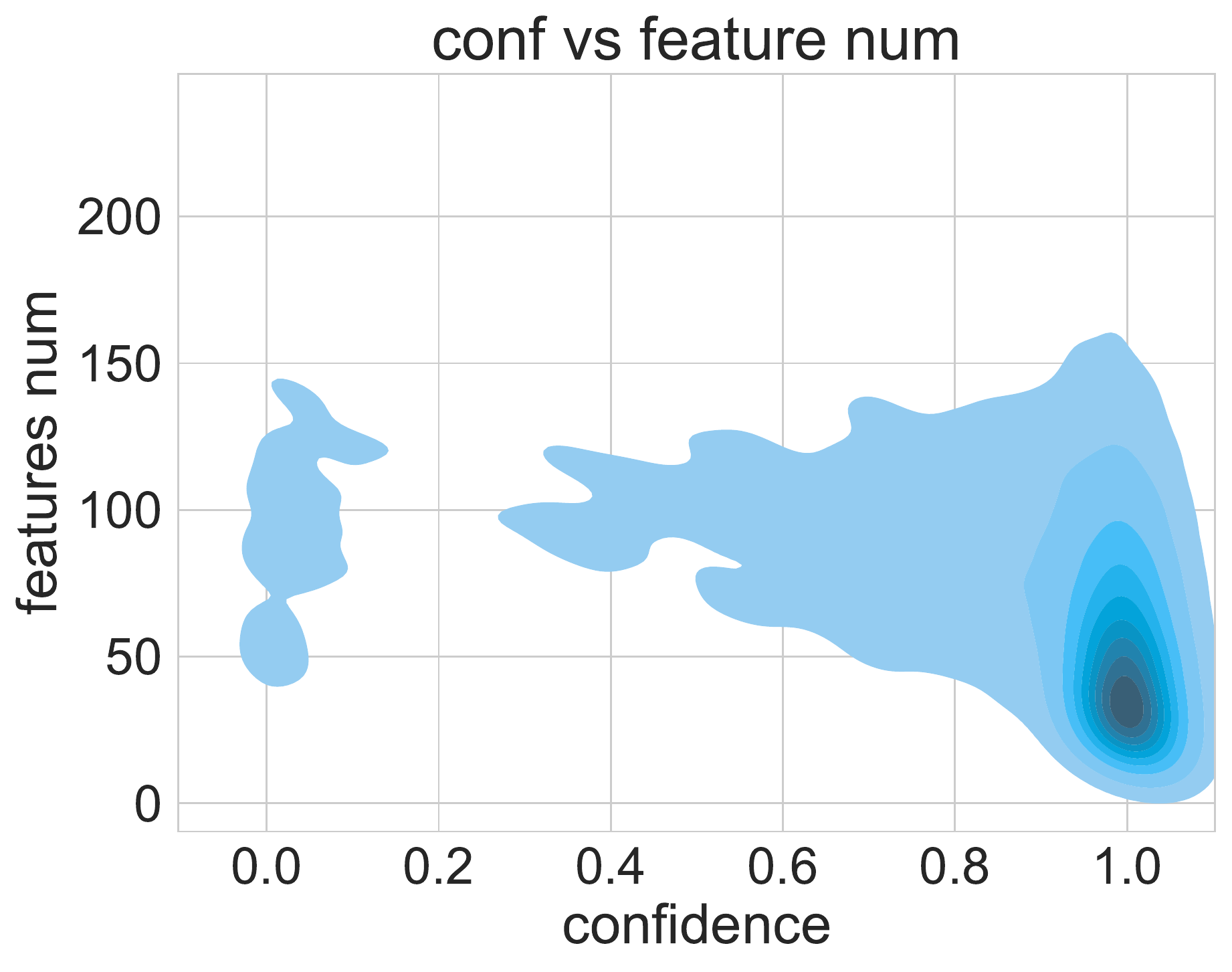}
         \caption{}
         \label{fig:svhn_feat_count}
     \end{subfigure}
     \hfill
     \begin{subfigure}[t]{0.32\textwidth}
         \centering
         \includegraphics[width=\textwidth]{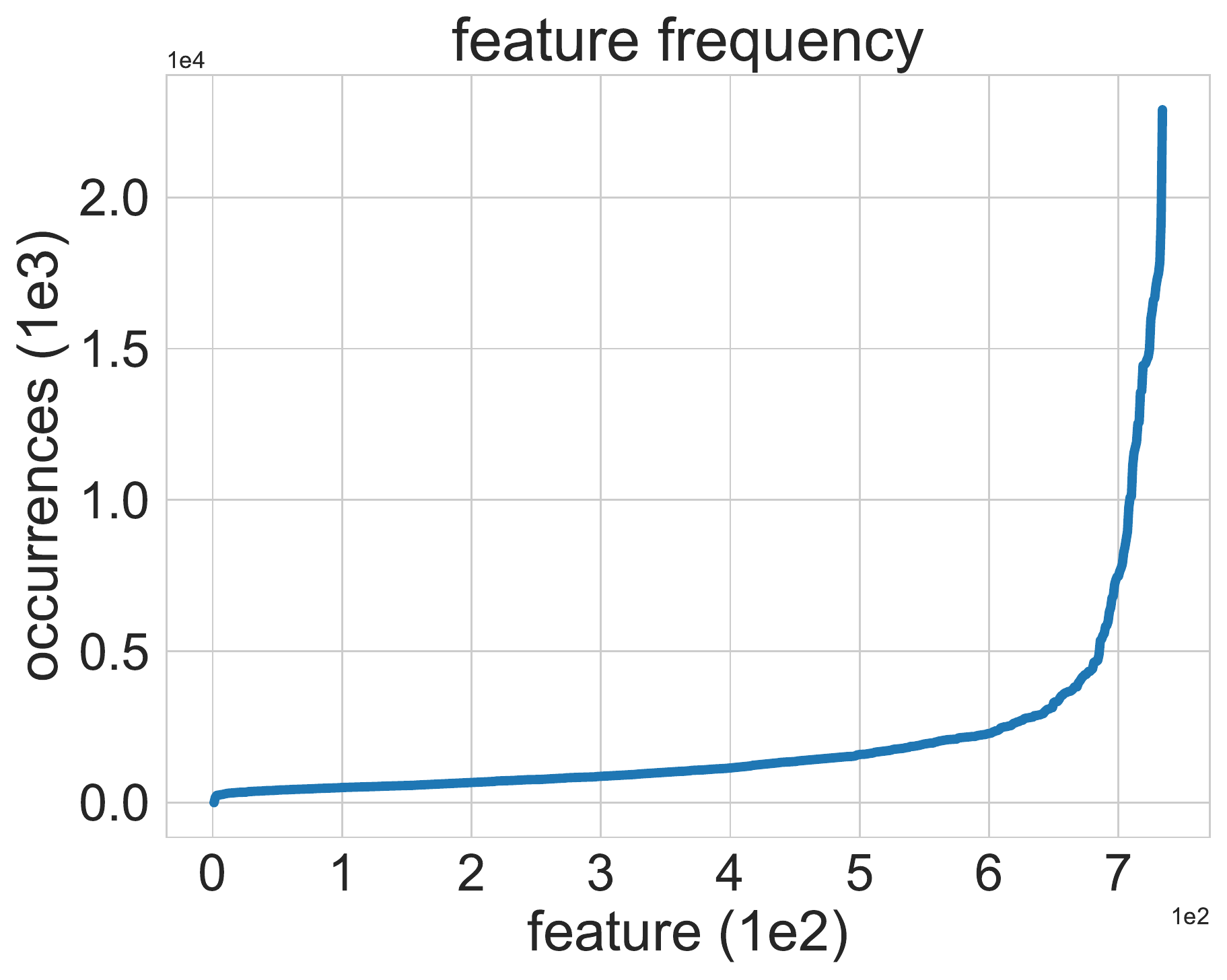}
         \caption{}
         \label{fig:svhn_feat_freq}
     \end{subfigure}
     \hfill
     \begin{subfigure}[t]{0.32\textwidth}
         \centering
         \includegraphics[width=\textwidth]{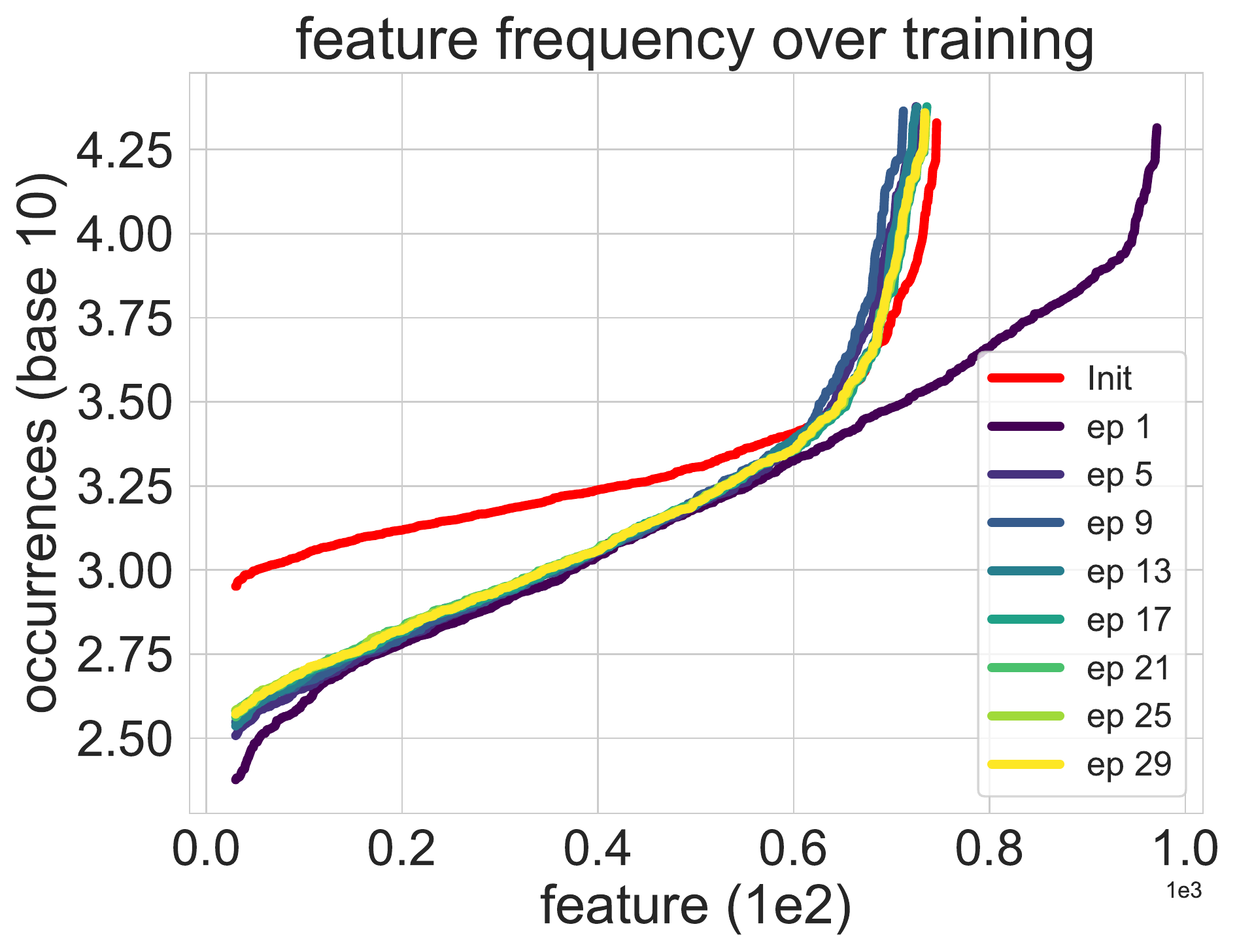}
         \caption{}
         \label{fig:svhn_feat_freq_over_training}
     \end{subfigure}
     \hfill
     \begin{subfigure}[t]{0.32\textwidth}
         \centering
         \includegraphics[width=\textwidth]{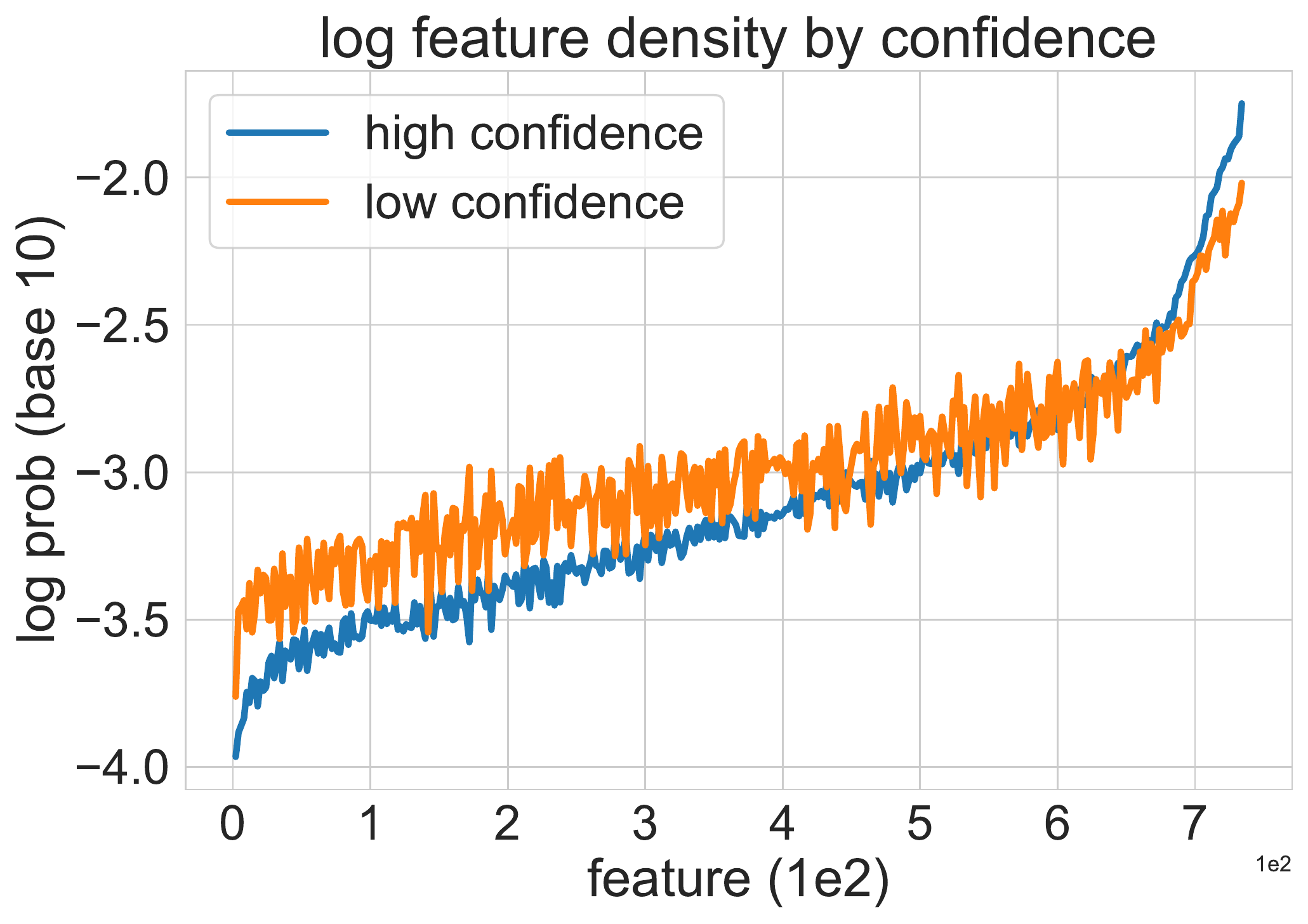}
         \caption{}
         \label{fig:svhn_feat_freq_conf}
     \end{subfigure}
     \begin{subfigure}[t]{0.32\textwidth}
         \centering
         \includegraphics[width=\textwidth]{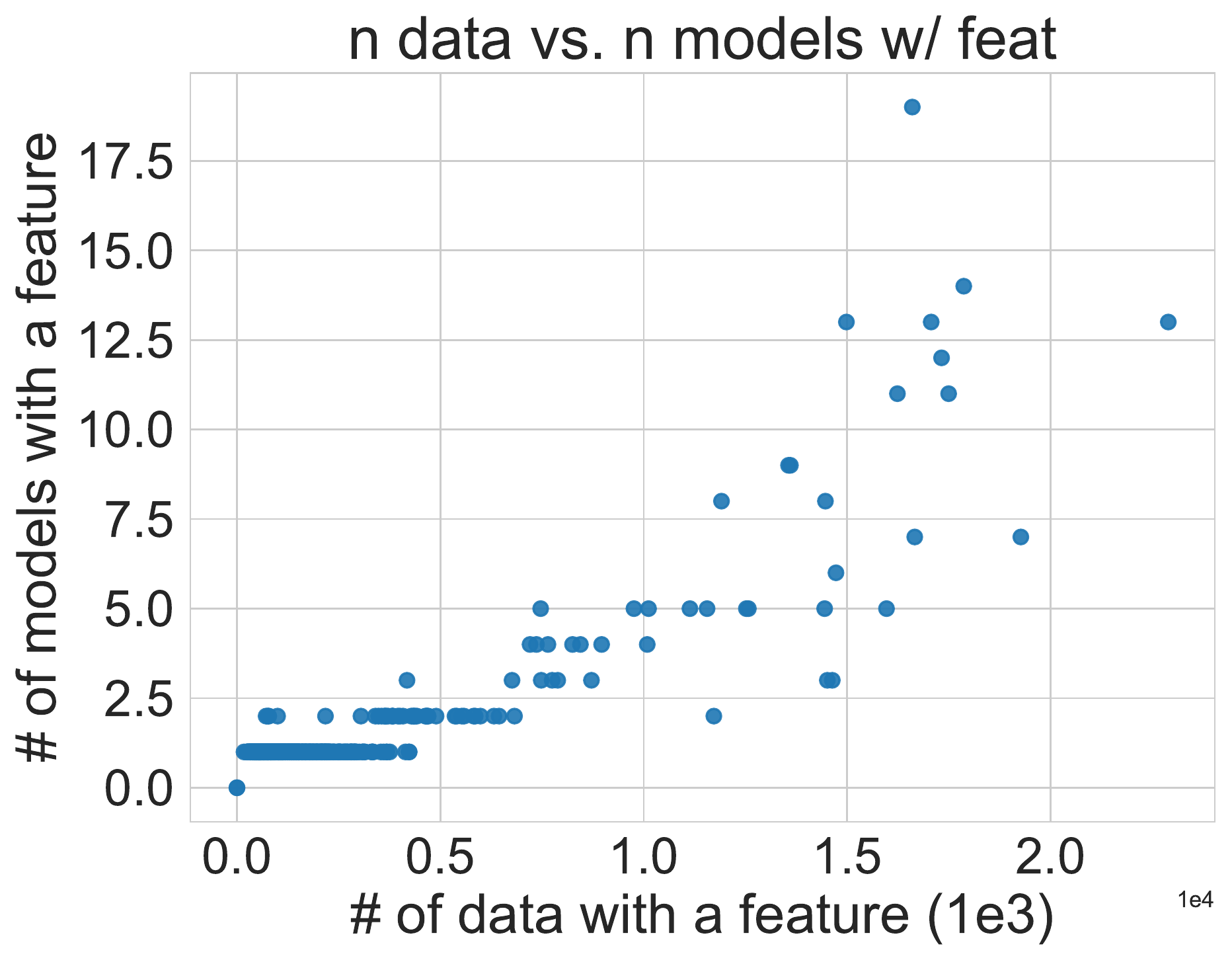}
         \caption{}
         \label{fig:svhn_model_data_feat}
     \end{subfigure}
     \begin{subfigure}[t]{0.32\textwidth}
         \centering
         \includegraphics[width=\textwidth]{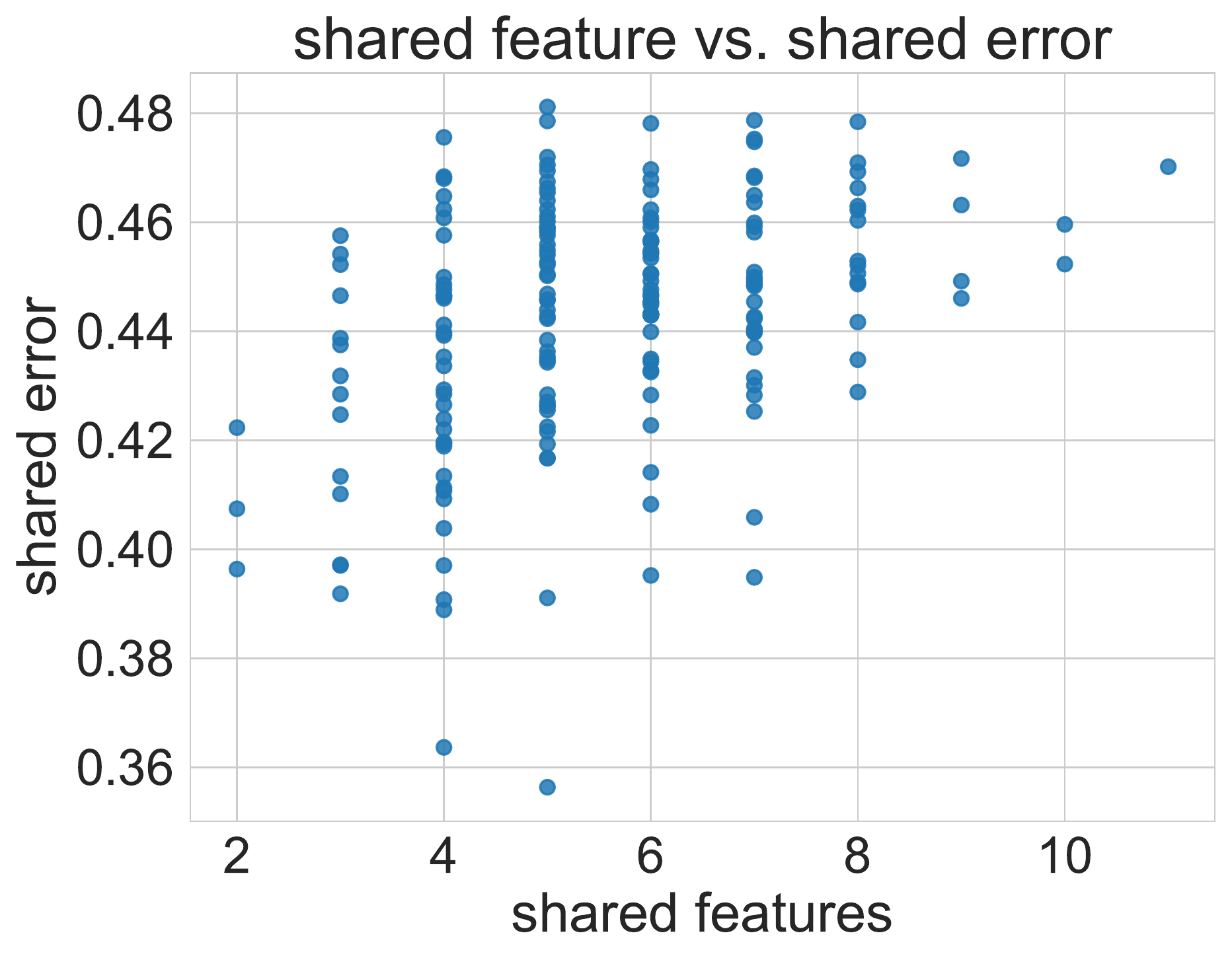}
         \caption{}
         \label{fig:svhn_shared_err}
     \end{subfigure}

        \caption{Same set of plots for \texttt{SVHN}  models.}
        \label{fig:svhn}
\end{figure}

For models trained on SVHN, we observe similar phenomena from the other experimental settings on CIFAR-10. Some notable differences include that there are much fewer low-confidence data points compared to CIFAR-10 likely because the performance of ResNet18 is higher on SVHN and the models classify most test points correctly. The absolute occurrences of different features are higher because SVHN has more test data than CIFAR-10.

\newpage
\subsection{Shared Features and Shared Error for Different Architectures}
\label{app:diff_arch}

\begin{figure}[h]
  \begin{center}
    \includegraphics[width=0.4\textwidth]{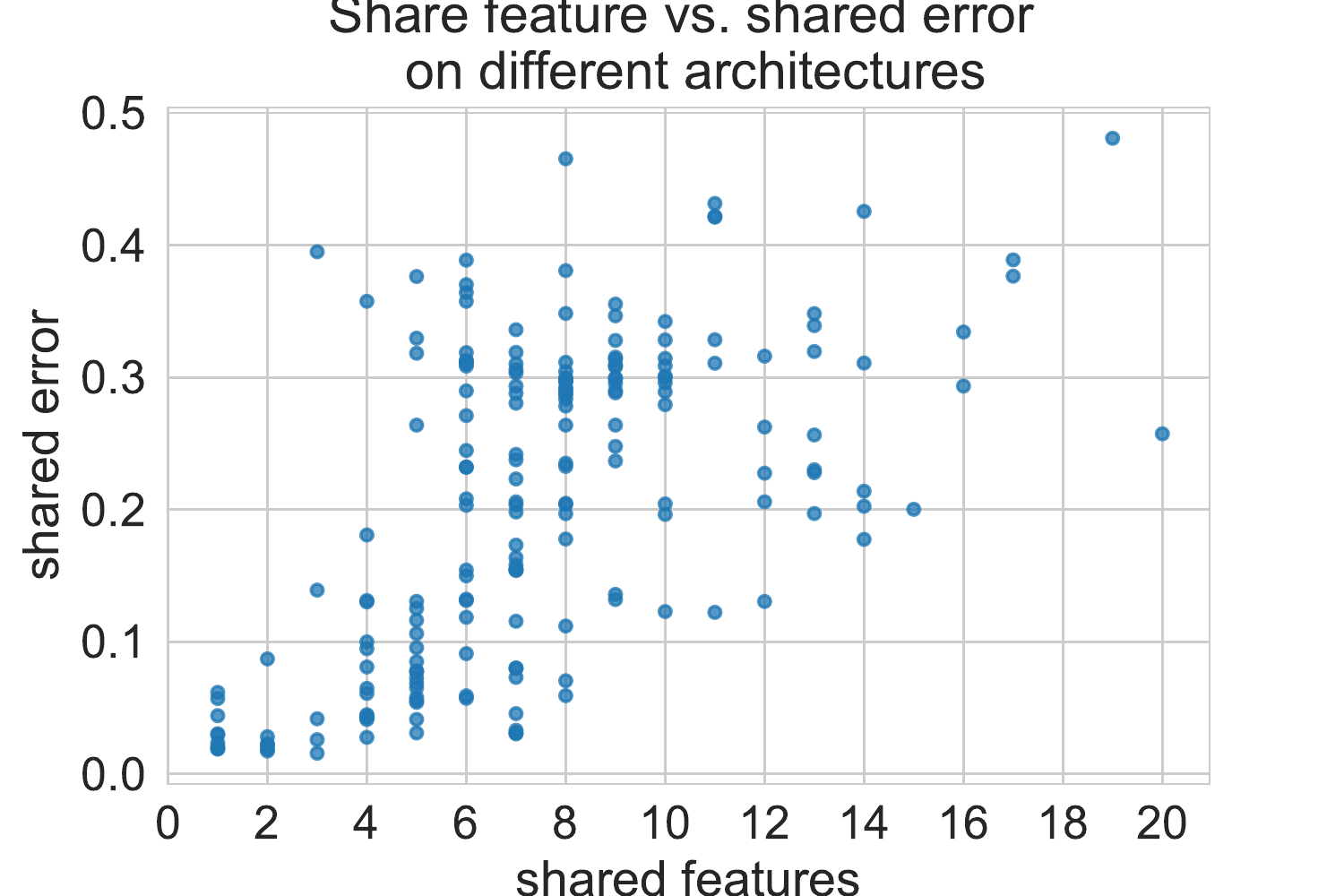}
  \end{center}
  \vspace{-3mm}
  \caption{Shared feature against shared error for models with \textbf{different} architectures.}
  \label{fig:diff_arch}
\end{figure}

We see earlier that when the architectures are the same, the models naturally tend to agree with each other more. For both \texttt{45k} and \texttt{10k}, even the smallest shared error is much greater than chance. Here we will use a wide range of different architectures trained of CIFAR 10 to test the validty of our hypothesis that more shared features lead to more shared error. Figure~\ref{fig:diff_arch} shows the number of shared features plotted against the shared error between each pair of models. Similar to Figure~\ref{fig:similarity_vs_error}, the shared error is almost monotonically increasing as a function of the number of shared features. If two models share a large number of features, they would tend to share high proportion of errors. If two models share a moderate number of features (4 to 11), the distribution of shared error once again appears random. However, unlike the case of same architecture, when two architectures share a small number of features (1 to 3), their shared errors tend to concentrate at much smaller values. This observation indicates that while having high number of shared features generally leads to models making similar mistakes, having low number of shared features does not mean two models will have low number of shared error. Rather, different sets of features \textit{can} still make similar mistakes. The architectures we test include:
\begin{multicols}{2}
\begin{itemize}
    \item PreActResNet18 \citep{he2016identity}
    \item PreActResNet34 \citep{he2016identity}
    \item PreActResNet50 \citep{he2016identity}
    \item VGG11 \citep{simonyan2014very}
    \item VGG13 \citep{simonyan2014very}
    \item VGG16 \citep{simonyan2014very}
    \item RegNet X200 \citep{Xu2022RegNetSN}
    \item RegNet X400 \citep{Xu2022RegNetSN}
    \item ResNet34 \citep{he2016deep}
    \item ResNet50 \citep{he2016deep}
    \item ResNet101 \citep{he2016deep}
    \item ResNeXt29 \citep{Xie2017AggregatedRT}
    \item DenseNet121 \citep{Huang2017DenselyCC}
    \item DenseNet169 \citep{Huang2017DenselyCC}
    \item ShuffleNetV2 with scale factor 1 \citep{Zhang2018ShuffleNetAE}
    \item ShuffleNetV2 with scale factor 1.5 \citep{Zhang2018ShuffleNetAE}
    \item ShuffleNetV2 with scale factor 0.5 \citep{Zhang2018ShuffleNetAE}
    \item ShuffleNetG2 \citep{Zhang2018ShuffleNetAE}
    \item SENet18 \citep{Hu2020SqueezeandExcitationN}
    \item SqueezeNet \citep{Iandola2016SqueezeNetAA}
    \item EfficientNetB0 \citep{tan2019efficientnet}
    \item PNASNetA \citep{Liu2018ProgressiveNA}
    \item PNSNetA large \citep{Liu2018ProgressiveNA}
    \item MobileNet V2 \citep{Howard2017MobileNetsEC}
\end{itemize}
\end{multicols}
All models we use are from \texttt{testbed} created by \citep{miller2021accuracy}.

\newpage
\subsection{Different Choices of $\zeta$}
\label{app:diff_zeta}

In this section, we investigate the effect of agreement function $\zeta$ on GDE. Instead of plotting accuracy and agreement separately, we show the difference between accuracy and agreement: $$\msf{Diff}= \msf{Acc} - \msf{Arg}$$ 
The closer the difference is to $0$, the closer the system is to satisfying GDE exactly. We test three types of agreement functions, each with an adjustable parameter:
\begin{enumerate}
    \item \texttt{constant}: This agreement function assumes that if two models share one or more features, then they have a constant probability $\eta \in [0.5,1]$ of agreeing.
    $$\zeta_{\text{const}}(k,c,\eta) = \eta$$
    In the main text, we use $\eta = 0.9$.
    \item \texttt{proportional}: This agreement function assumes that the probability of agreement is directly proportional to how many features two models share relative to the full model capacity. The constant of proportionality is $\eta \in (0, \infty)$ and the probability is clipped to $1$.
    $$\zeta_{\text{prop}}(k,c,\eta) = \min\left( \eta \frac{k}{c}, 1\right)$$
    \item \texttt{step}: This agreement function assumes that there is a threshold $\eta \in \sN$. If the number of shared features is above $\eta$, then the probability of agreement is $1$. Otherwise, the probability of agreement is some constant $\theta \in [0.5,1]$.
    $$\zeta_{\text{step}}(k,c,\eta, \theta) = \theta \cdot \mathbbm{1}\{c \leq \eta \}  +\mathbbm{1}\{c > \eta \} $$
    For these simulation, we use $\theta = 0.8$ since $\eta$ has a much greater effect on GDE.
\end{enumerate}
We vary the values of $\eta$ for each agreement function and show the results for different values $p_d$, $c$, coupled $n_r, n_d$ and coupled $t_r, t_d$ in Figure~\ref{fig:diff_agreement_function}. Each row corresponds to a different agreement function and from top to bottom are \texttt{constant}, \texttt{proportional}, and \texttt{step}. 

For \texttt{constant} (top row), $\eta$ ranges from $0.5$ to $0.95$. We see that for $c, p_d, n_r \propto n_d$, the range of variation in difference is consistently small when $p_d$ is sufficiently large. $t_r \propto t_d$ deviates from this behavior where different $\eta$'s behave more differently as $p_d$ increases. For this scenario, we see that larger $\eta$ are closer to GDE.

For \texttt{proportional}, $\eta$ ranges from $1$ to $2.8$. We see that the difference is generally large for all values of the parameters. Suggesting that $\zeta_{\text{prop}}$ may not be a good approximation for how models agree in practice. 

For \texttt{step}, $\eta$ is an integer that ranges from $0$ to $9$. We see that the differences are more robust to different values of $\eta$ than the other agreement functions. This suggests $\zeta_{\text{step}}$ could be a good approximation to how models agree in practice. This observation is consistent with Figure~\ref{fig:similarity_vs_error}, Figure~\ref{fig:10k_shared_err}, and Figure~\ref{fig:diff_arch} --- when models do not share many features, the shared error (therefore, agreement) is spread out but have similar expected values; when models share a large number of features, the probability of agreement increases significantly.

\begin{figure}[t]
     \centering
    \begin{subfigure}[b]{0.23\textwidth}
         \centering
        \includegraphics[width=\textwidth]{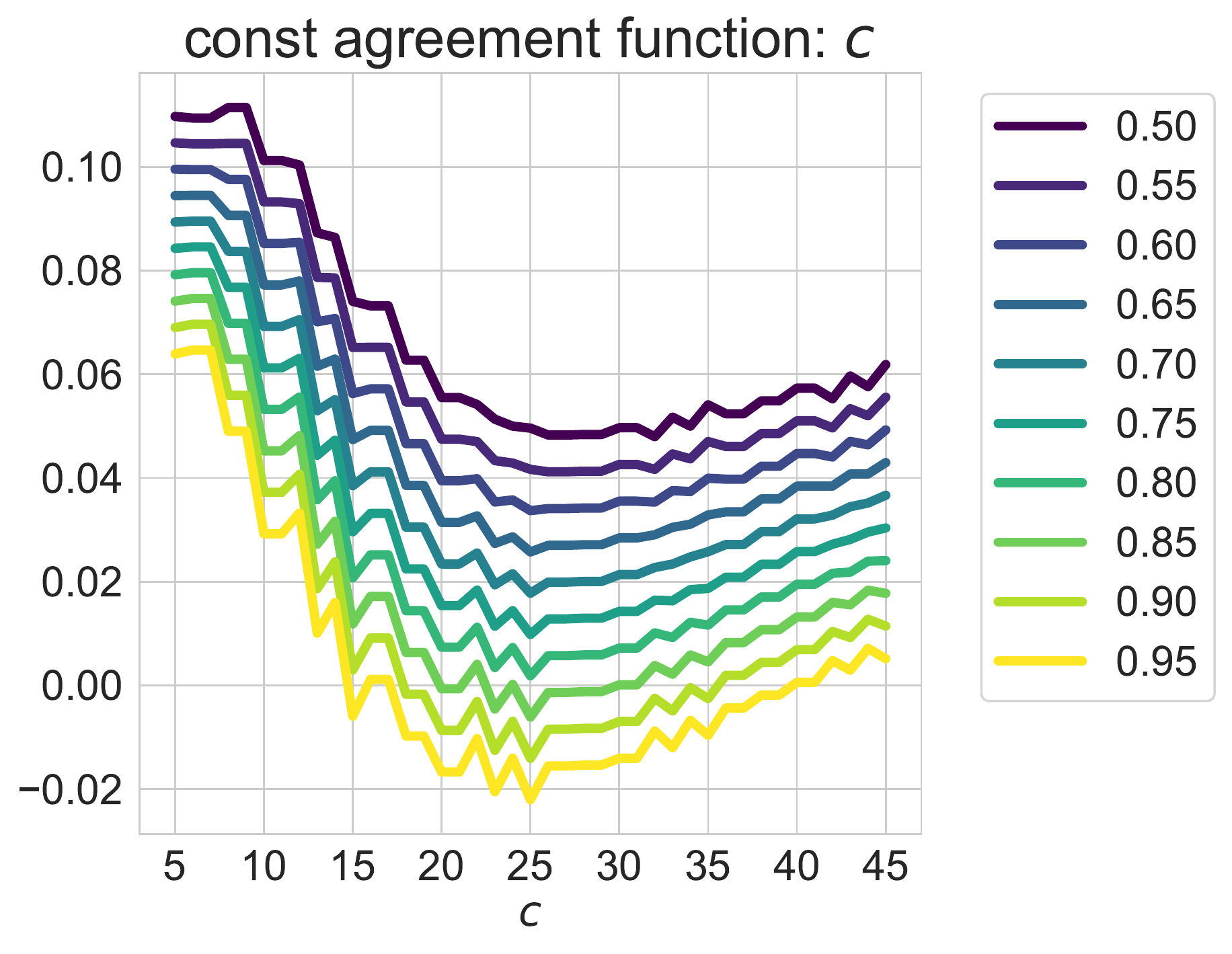}
    \end{subfigure}
    \begin{subfigure}[b]{0.23\textwidth}
         \centering
        \includegraphics[width=\textwidth]{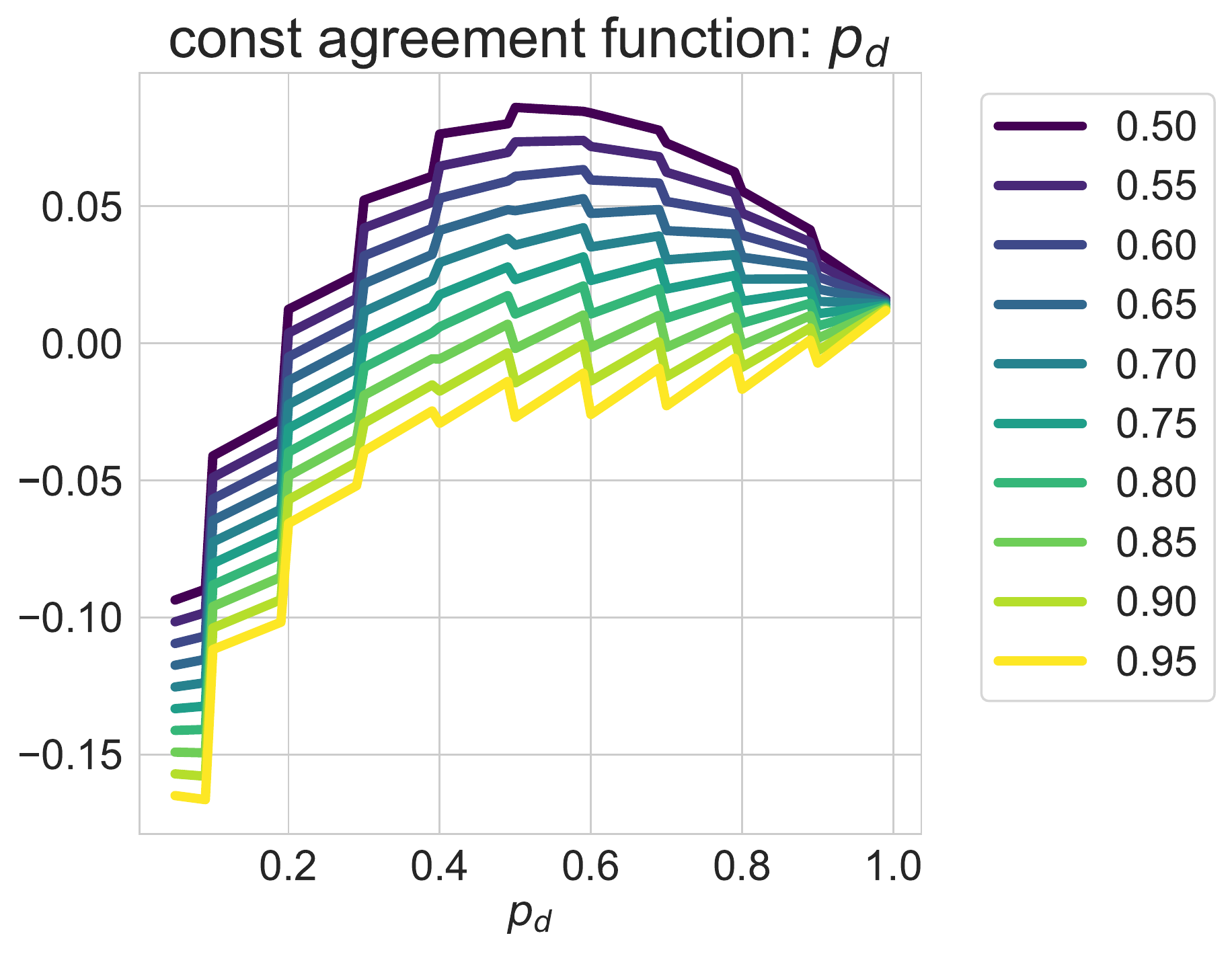}
    \end{subfigure}
    \begin{subfigure}[b]{0.23\textwidth}
         \centering
        \includegraphics[width=\textwidth]{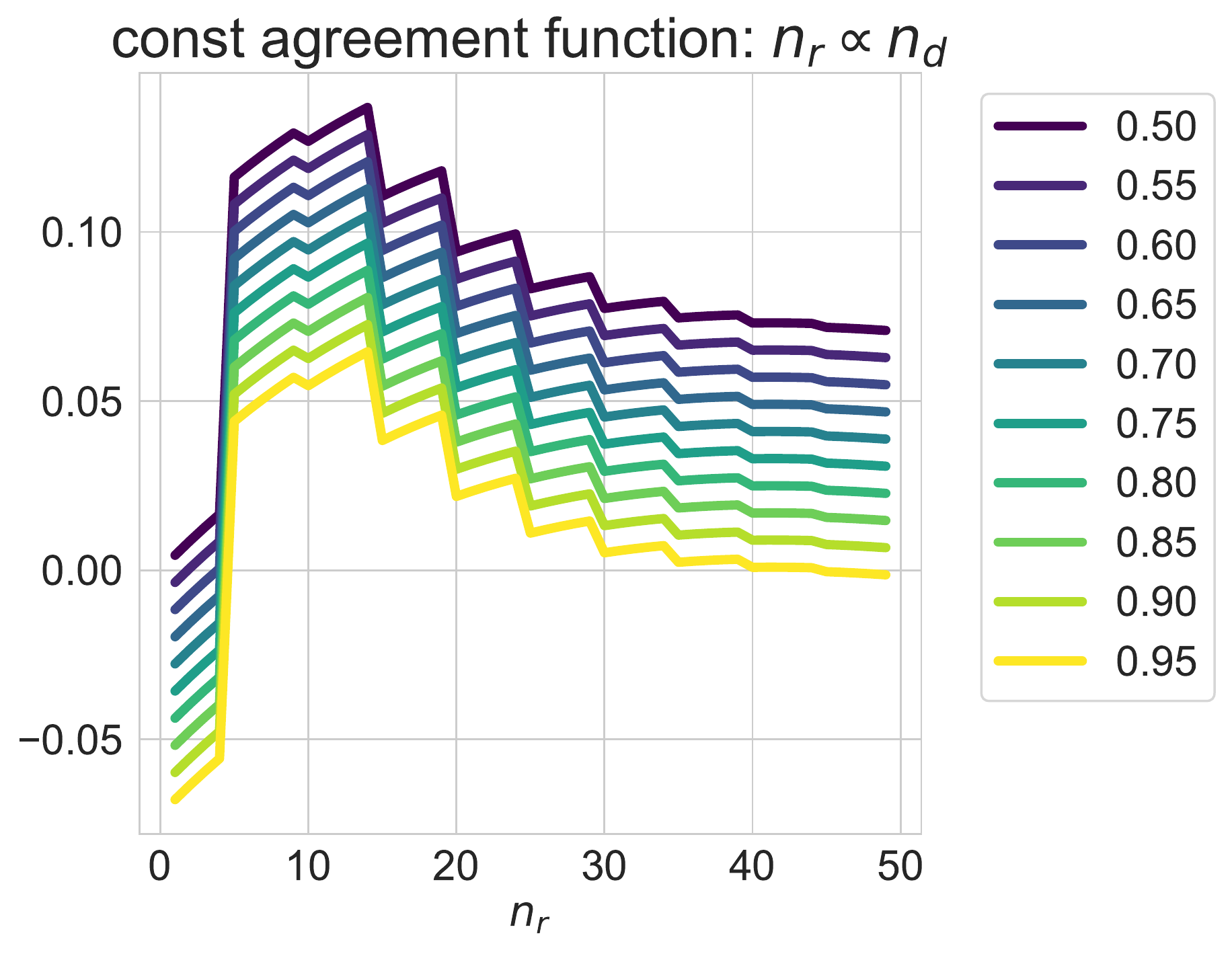}
    \end{subfigure}
    \begin{subfigure}[b]{0.23\textwidth}
         \centering
        \includegraphics[width=\textwidth]{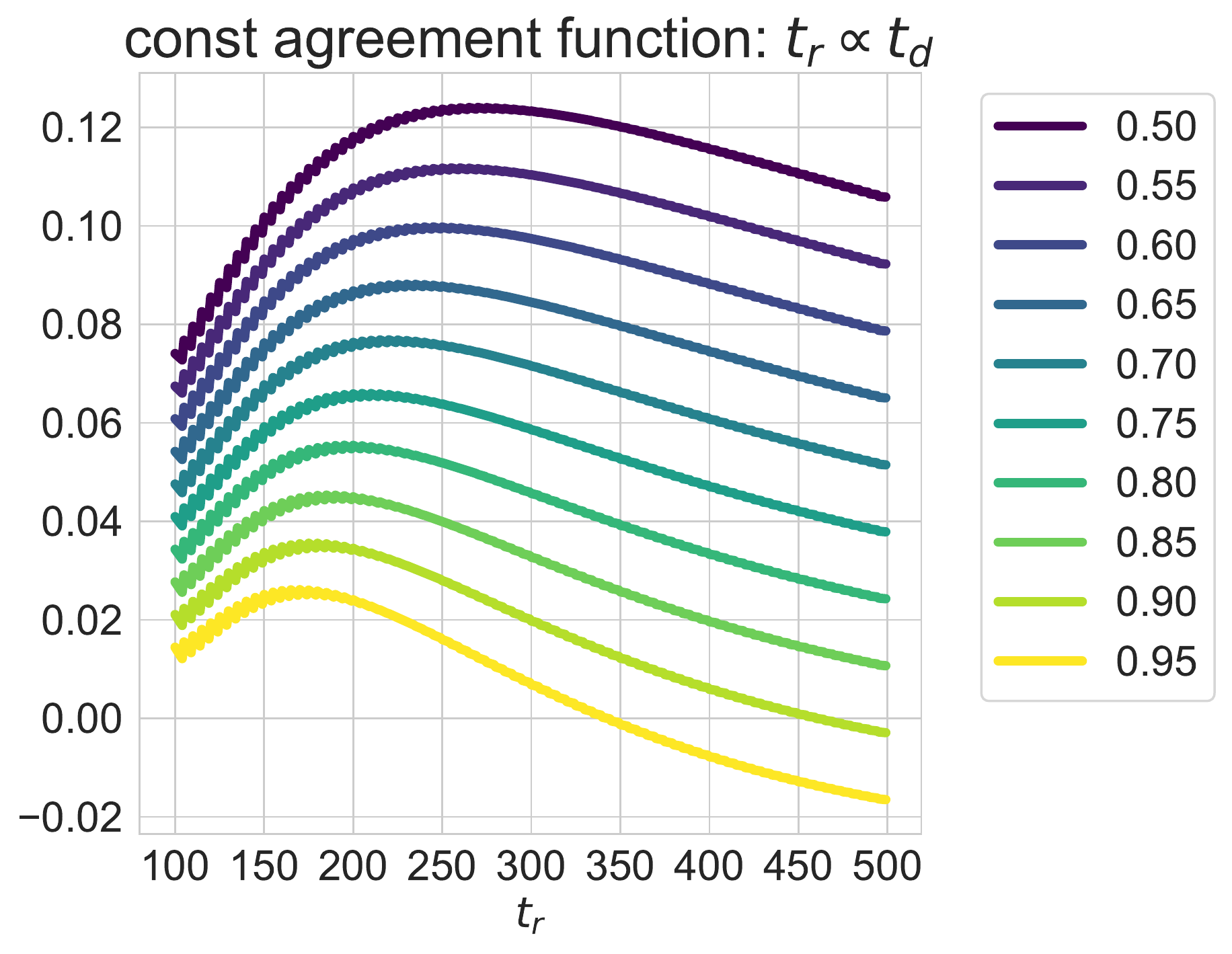}
    \end{subfigure}
    
    \begin{subfigure}[b]{0.23\textwidth}
         \centering
        \includegraphics[width=\textwidth]{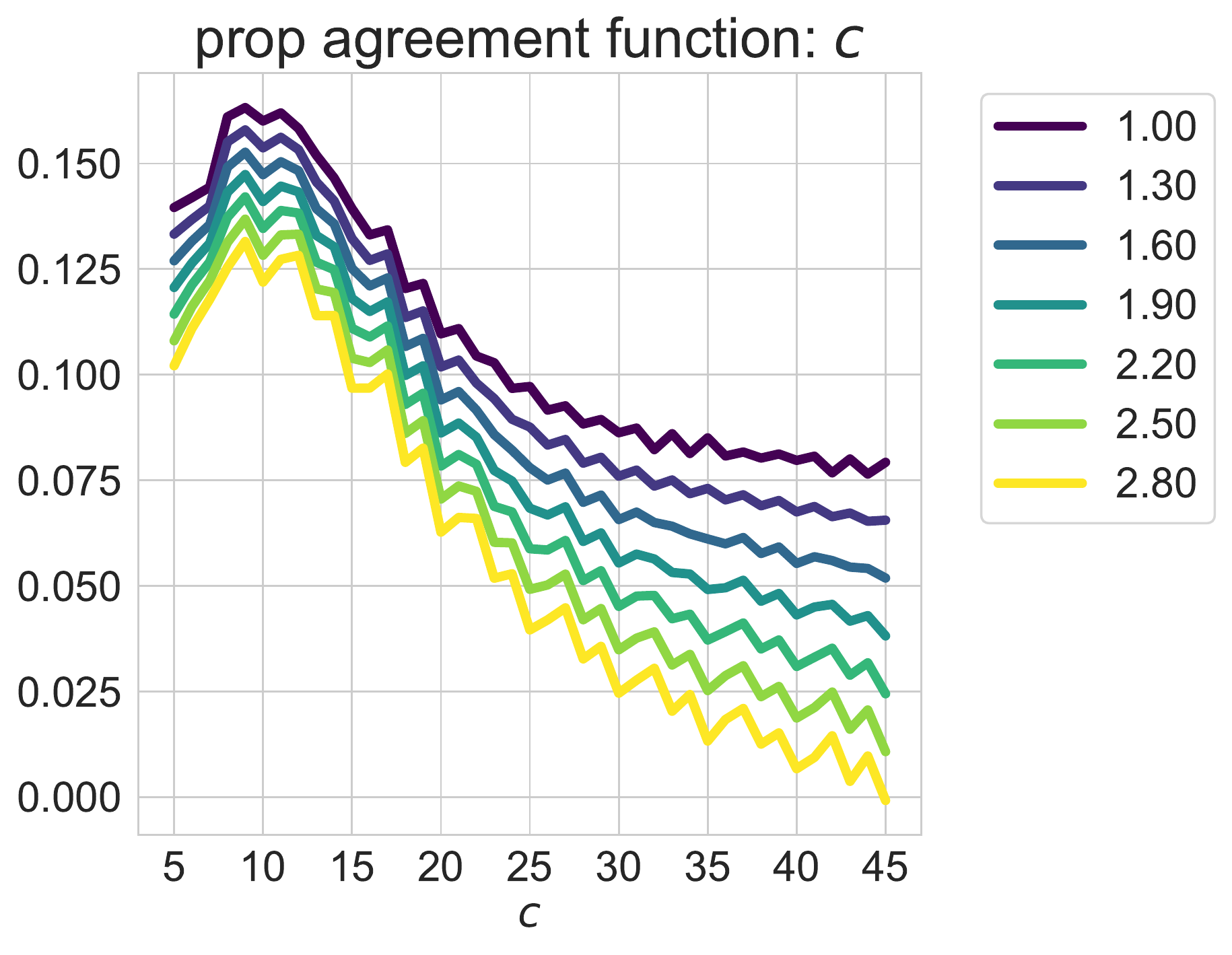}
    \end{subfigure}
    \begin{subfigure}[b]{0.23\textwidth}
         \centering
        \includegraphics[width=\textwidth]{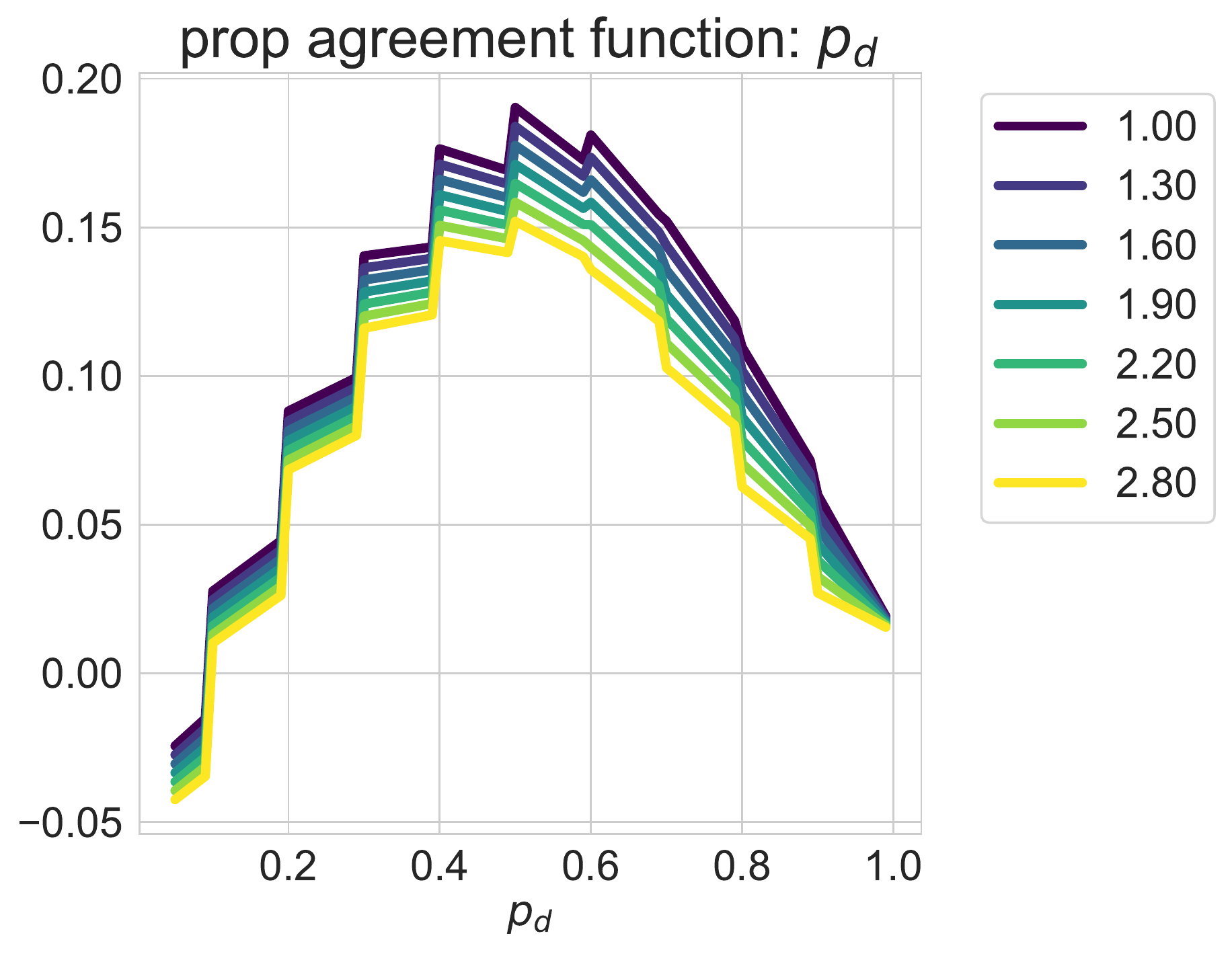}
    \end{subfigure}
    \begin{subfigure}[b]{0.23\textwidth}
         \centering
        \includegraphics[width=\textwidth]{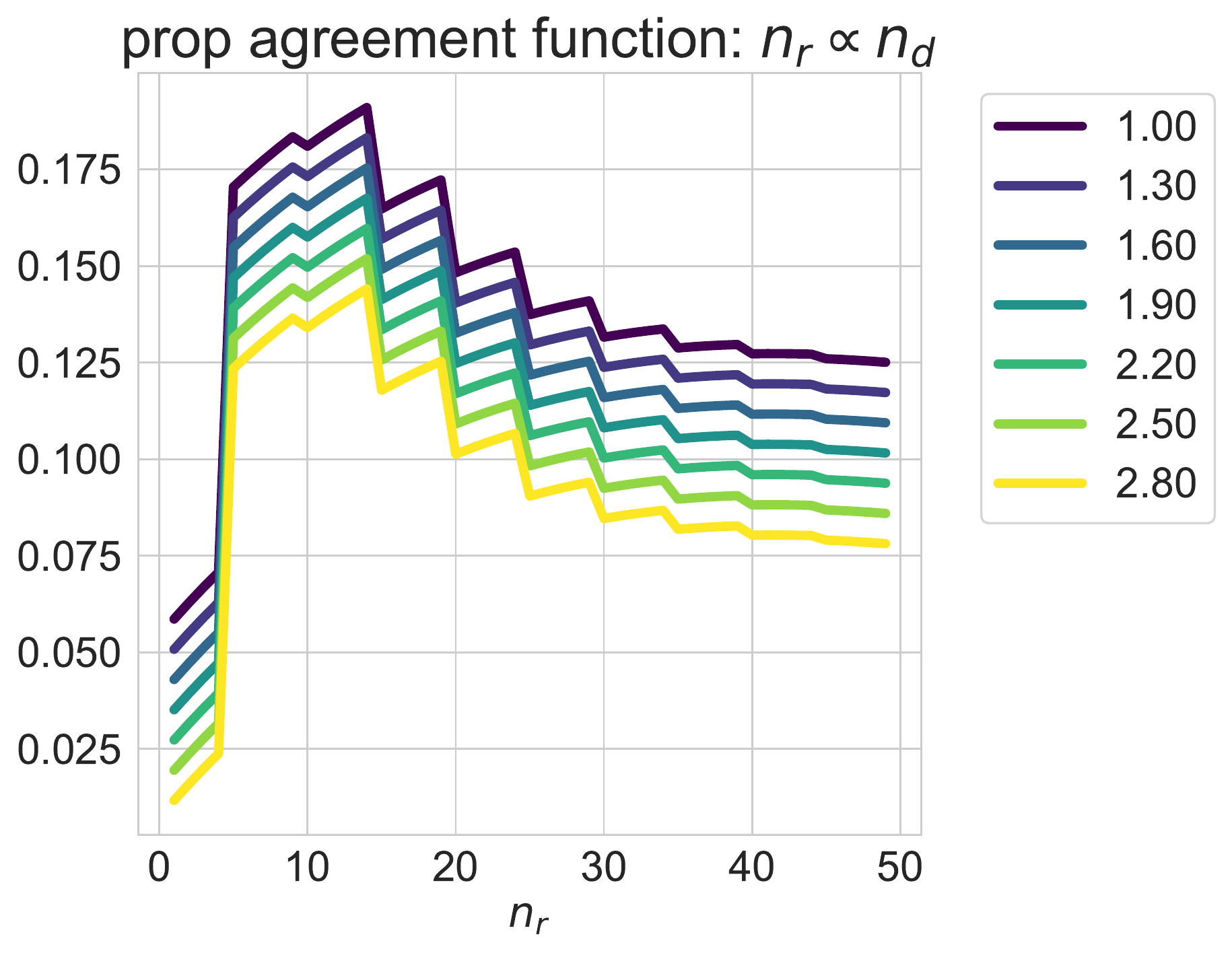}
    \end{subfigure}
    \begin{subfigure}[b]{0.23\textwidth}
         \centering
        \includegraphics[width=\textwidth]{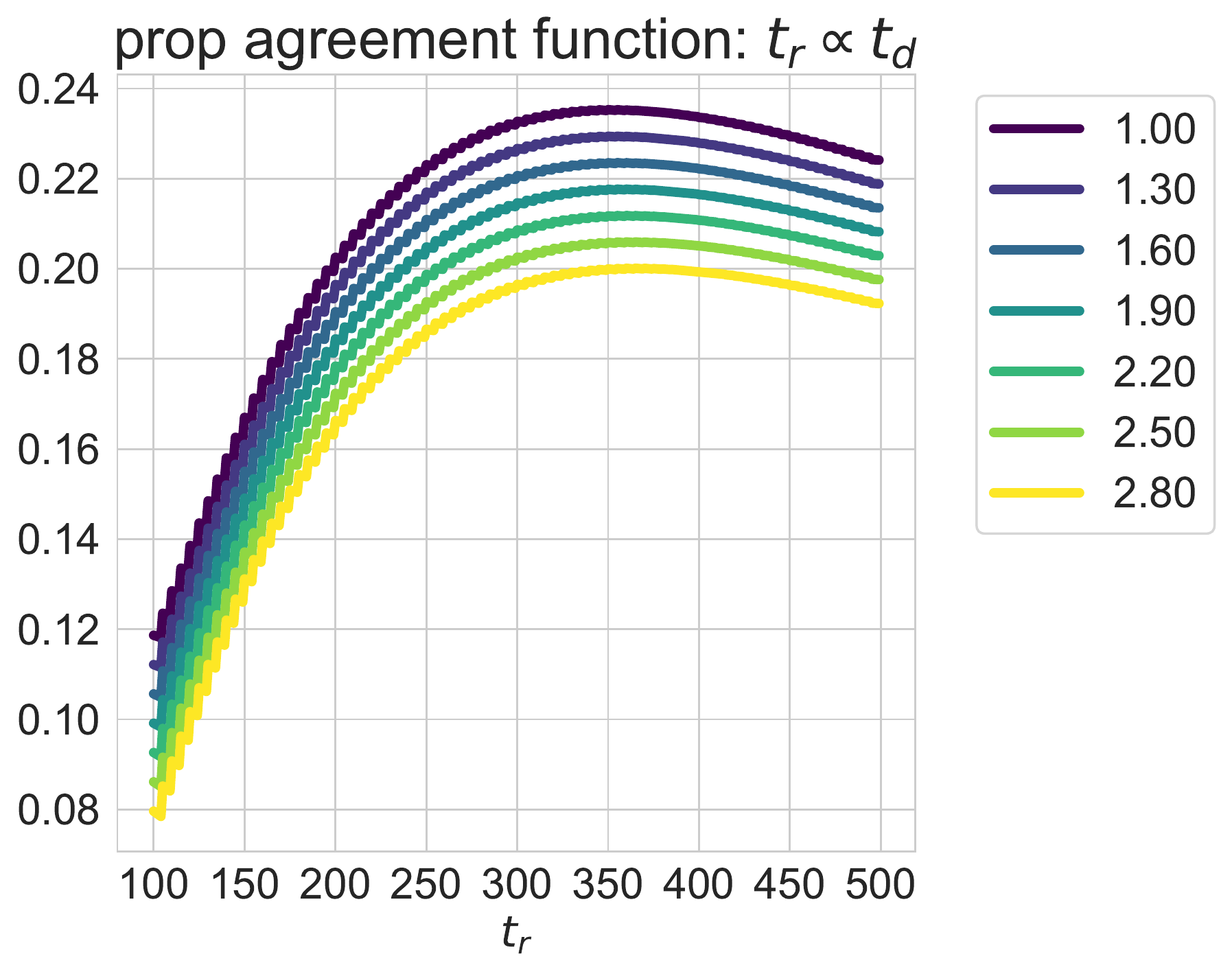}
    \end{subfigure}

    \begin{subfigure}[b]{0.23\textwidth}
         \centering
        \includegraphics[width=\textwidth]{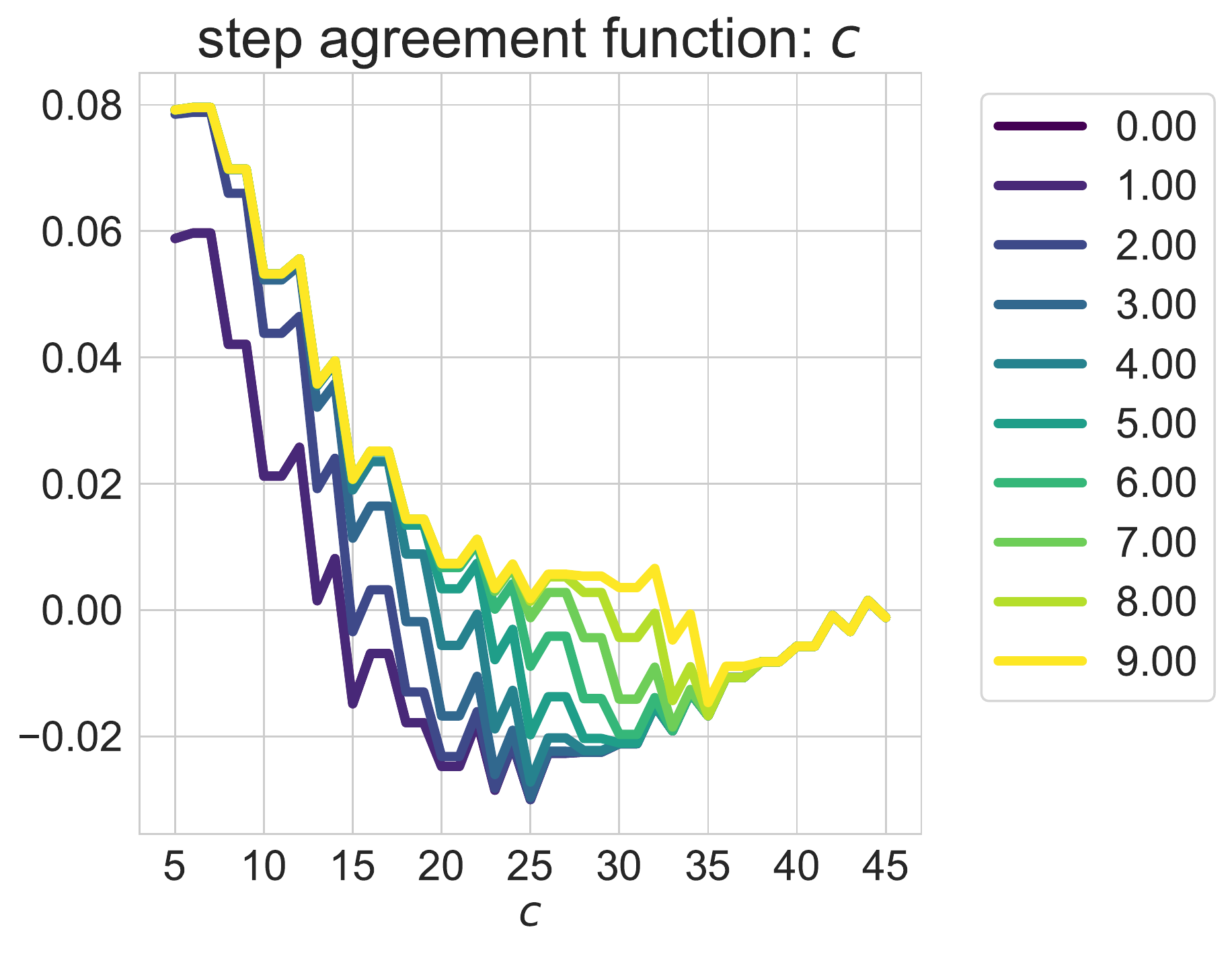}
    \end{subfigure}
    \begin{subfigure}[b]{0.23\textwidth}
         \centering
        \includegraphics[width=\textwidth]{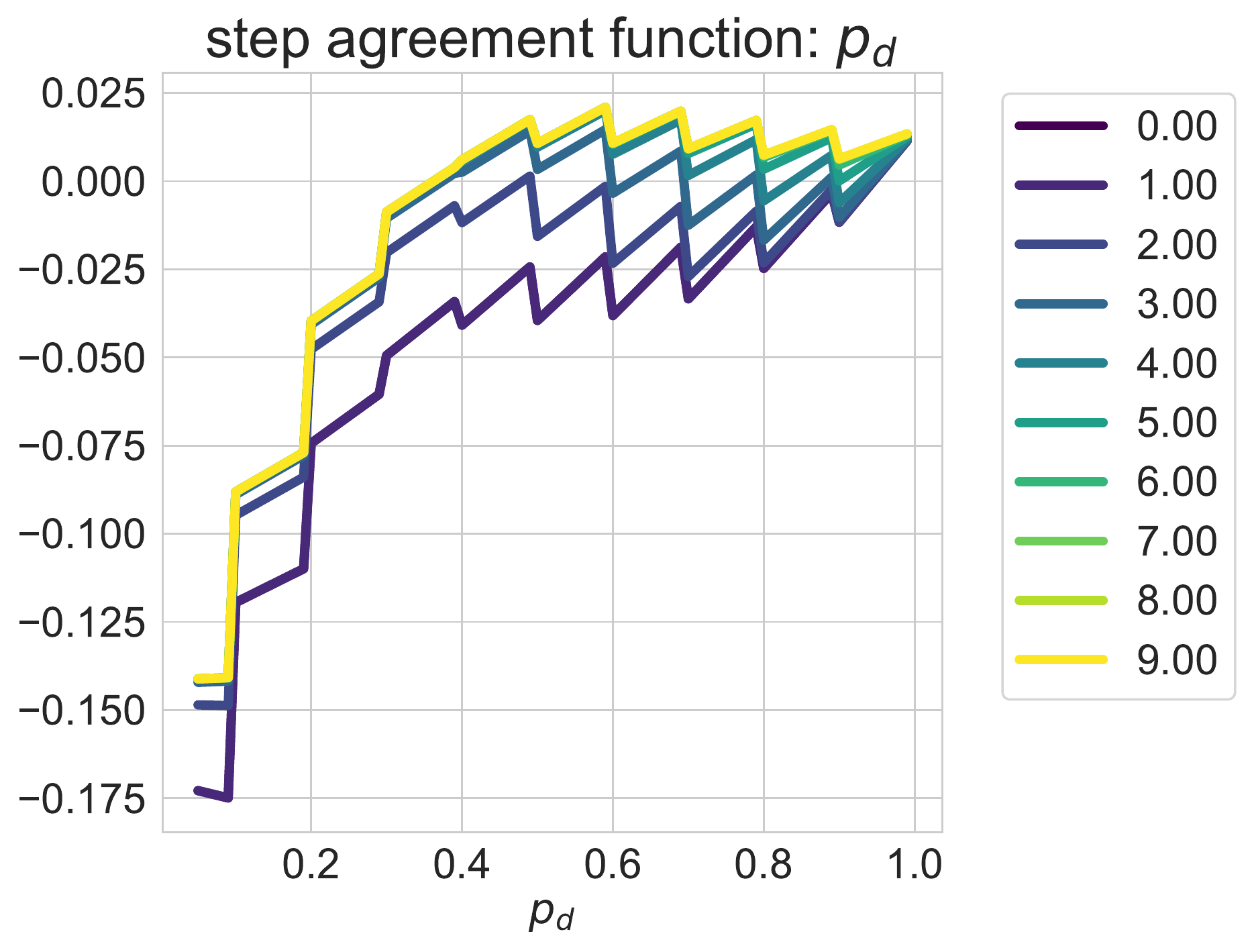}
    \end{subfigure}
    \begin{subfigure}[b]{0.23\textwidth}
         \centering
        \includegraphics[width=\textwidth]{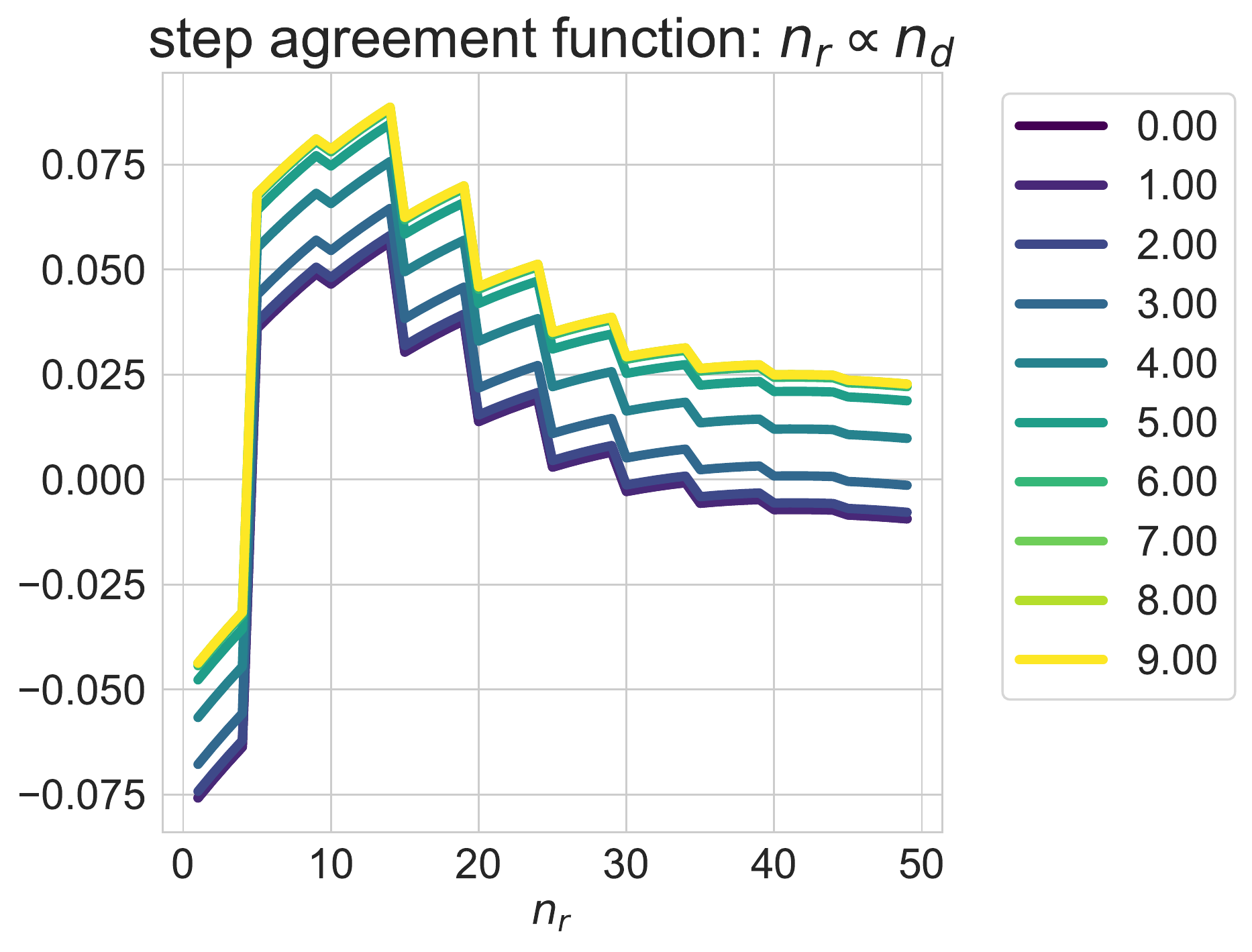}
    \end{subfigure}
    \begin{subfigure}[b]{0.23\textwidth}
         \centering
        \includegraphics[width=\textwidth]{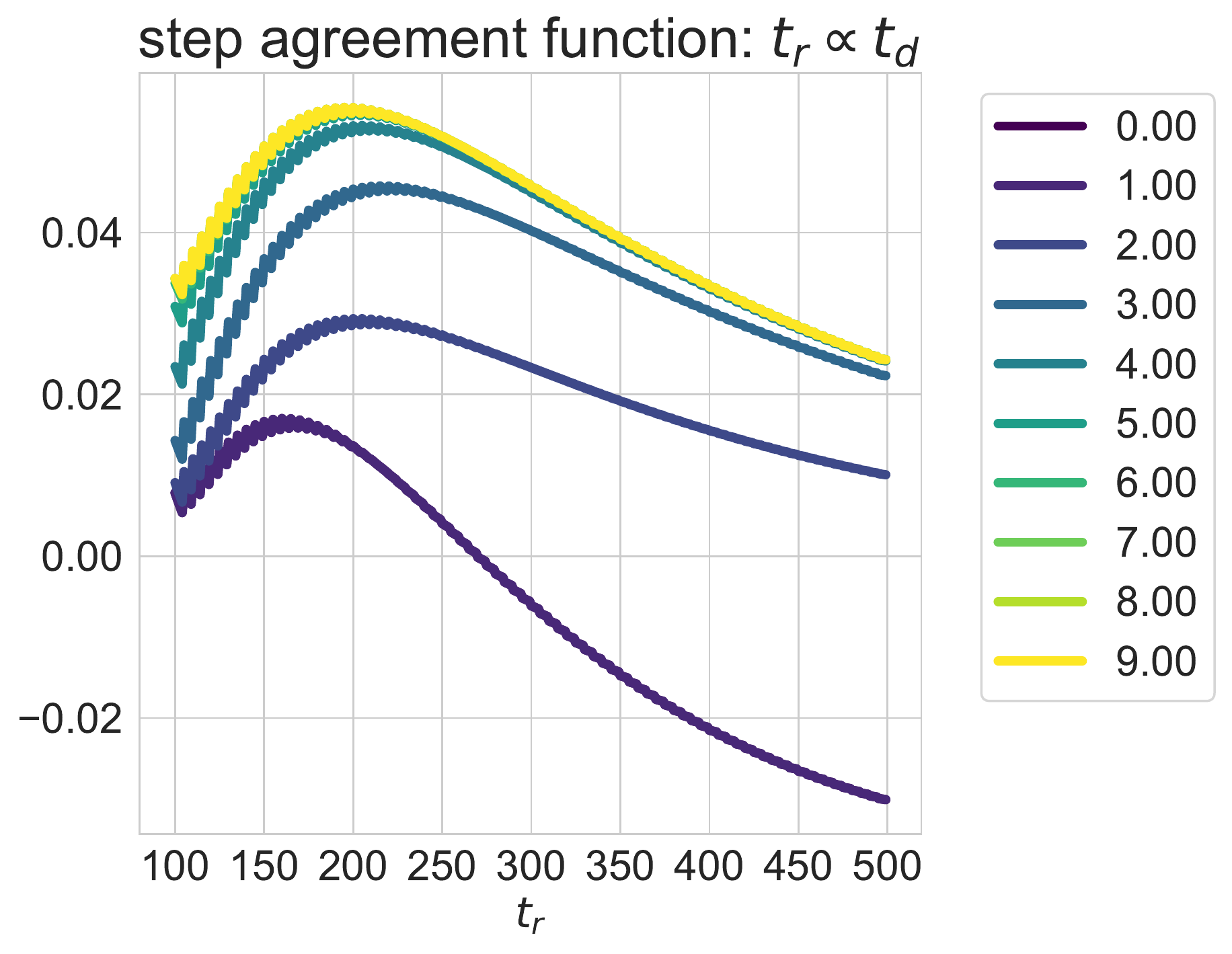}
    \end{subfigure}

    \caption{Difference between the analytical agreement and accuracy for different choices of agreement functions $\zeta$ and different values of parameters. From top to bottom are \texttt{constant}, \texttt{proportional}, and \texttt{step}. }
    \label{fig:diff_agreement_function}
\end{figure}

An important observation from these simulations is the importance of $t_d$ and $t_r$. These quantities can be interpreted as proxies for the complexities of the entire dataset. $t_r$ in particular represents the patterns in the data that are rare. The larger $t_r$ is, the more \textit{diverse} or \textit{noisy} the dataset is. According to our framework, this quantity can have large impact on the behaviors of accuracy and agreement. 

Another important observation is that $p_d$ needs to be sufficiently large for GDE to hold strongly. This is roughly equivalent to requiring the feature distribution to be long-tailed, which is true in practice.

\subsection{Merging Classes}
\label{app:merge}
For this experiment, we merge different classes of CIFAR 10 to form superclasses. Since the individual images are not modified, we expect the majority of features that identify individual classes to also identify the superclasses well (although there may be interferences between features of different classes). Thus, we would expect the ratio between features stay approximately constant.

The specific superclasses are:
\begin{itemize}
    \item 2 superclass: \{airplane, automobile, bird, cat, deer\}, \{dog, frog, horse, ship, truck\}
    \item 3 superclass: \{airplane, automobile, bird\}, \{cat, deer, dog\}, \{frog, horse, ship, truck\}
    \item 5 superclass: \{airplane, automobile\}, \{bird, cat\}, \{deer, dog\}, \{frog, horse\}, \{ship, truck\}
\end{itemize}
Finally, the 10 classes case corresponds to the regular CIFAR 10 classification. The experiments are repeated for 6 random seeds.

\subsection{Partitioning Data}
\label{app:partition}
\begin{figure}[h!]
     \centering
     \begin{subfigure}[t]{0.47\textwidth}
         \centering
         \includegraphics[width=\textwidth]{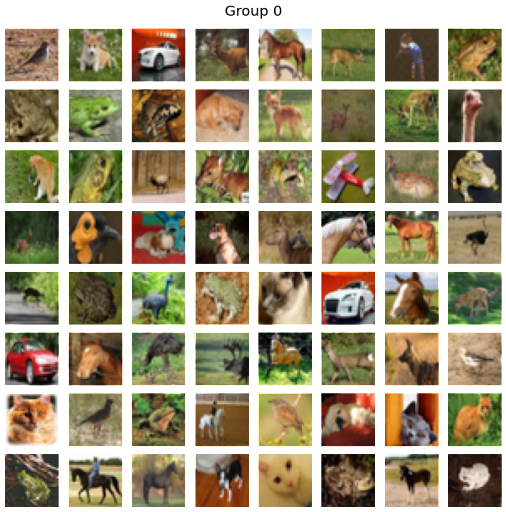}
     \end{subfigure}
     \hfill
     \begin{subfigure}[t]{0.47\textwidth}
         \centering
         \includegraphics[width=\textwidth]{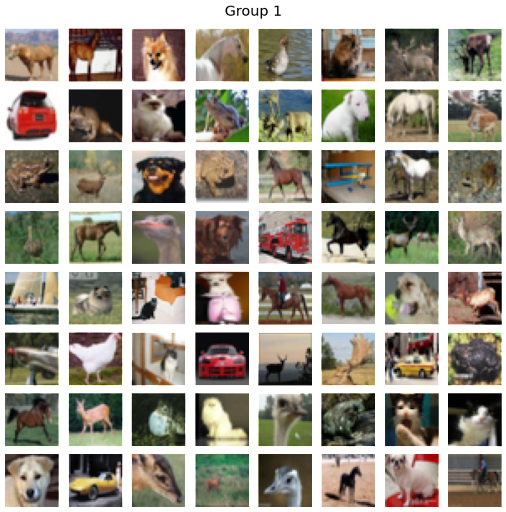}
     \end{subfigure}
     \hfill
     \begin{subfigure}[t]{0.47\textwidth}
         \centering
         \includegraphics[width=\textwidth]{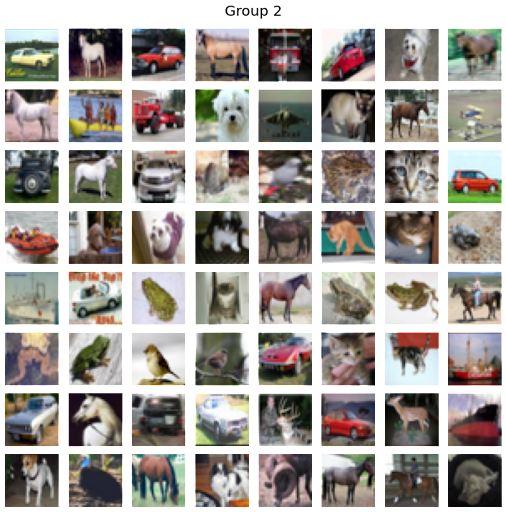}
     \end{subfigure}
     \hfill
     \begin{subfigure}[t]{0.47\textwidth}
         \centering
         \includegraphics[width=\textwidth]{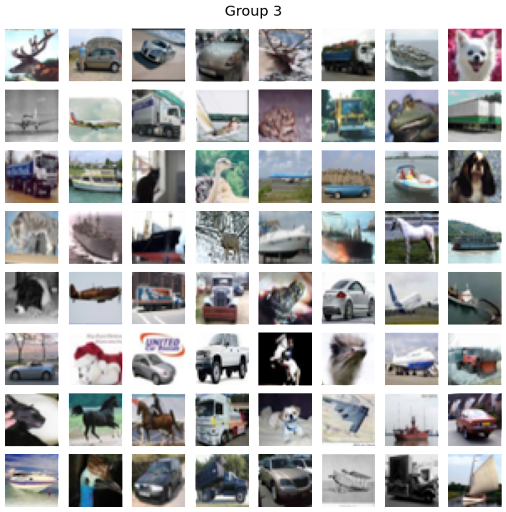}
     \end{subfigure}
     \begin{subfigure}[t]{0.47\textwidth}
         \centering
         \includegraphics[width=\textwidth]{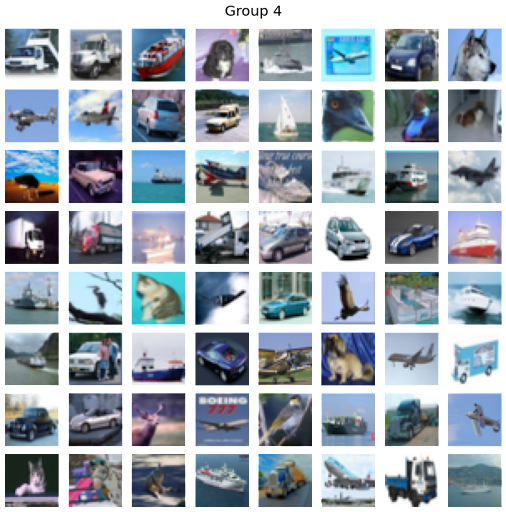}
     \end{subfigure}\hfill
        \caption{Visualization based on data partitioning based on blue intensity.}
        \label{fig:partition}
\end{figure}

For this experiment, we partition CIFAR based on the intensity of blue pixels. More concretely, let $\vx \in [0, 1]^{32 \times 32 \times 3}$ be an image where the last channel is the RGB value of the pixel. We compute its \emph{blue intensity}, $b$, as:
\begin{align}
    b(\vx) = \frac{\sum_{i=1}^{32}\sum_{j=1}^{32} \vx_{i,j,2}}{\sum_{i=1}^{32}\sum_{j=1}^{32} \sum_{k=1}^3\vx_{i,j,k}}.
\end{align}
Intuitively, this value captures how much the blue channel ``weighs'' in the whole image. We compute this value for all $\vx_i$ and compute the CDF, $F$, of $b(\vx_i)$ over the training dataset $\rmX_\text{train} = \{\vx_0, \vx_1, \dots, \vx_{50000}\}$. Then we partition the data into groups:
\begin{align}
    G_i = \{\vx \in \rmX \mid 0.2 i < F(b(\vx)) \leq 0.2 (i+1)\}, 
\end{align}
for $i \in \{0, 1, 2, 3, 4\}$. For the test data, we partition according to the CDF of the \textbf{training data}, i.e., training and test use the same threshold.
In Figure~\ref{fig:partition}, we show random samples of images from each partition based on the blue intensity. We can see that in group 0, the examples seem more visually complex and diverse, which could lead to a larger number of rare features (i.e., larger 
 $t_r$, the total number of rare features). The experiments are repeated for 6 random seeds.

\section{Experimental Details}
\label{app:experiments}

For the ResNet18 experiments, we follow the same procedures as \citet{jiang2022assessing} which uses the same architecture of ResNet18 as \citet{he2016deep}. We train the 20 models with:
\begin{itemize}
    \item initial learning rate: $0.1$
    \item weight decay: $0.0001$
    \item minibatch size: $100$
    \item data augmentation: No
\end{itemize}
The models in \texttt{45k} samples 45000 data points from the training set without replacement. Likewise, the models in \texttt{10k} samples 10000 data points from the training set without replacement.

\subsection{Hardware}
\label{app:hardware}
All experiments in the paper are done on a Nvidia RTX 2080 and RTX A6000.

\end{document}